\newif\ifICML
\renewcommand{\todo}[1]{}
\newcommand{\cov}{\Sigma}
\newcommand{\noise}{\bfz}
\newcommand{\bignoise}{\mathbb{Z}}
\newcommand{\multiplier}{c_0}
\newcommand{\eps}{\ensuremath{\varepsilon}}
\newcommand{\noclippingcondition}{No-Clipping Condition\xspace}
\newcommand{\bfA}{\ensuremath{\mathbf{A}}}
\newcommand{\bfD}{\ensuremath{\mathbf{D}}}
\newcommand{\bfT}{\ensuremath{\mathbf{T}}}
\newcommand{\bfX}{\ensuremath{\mathbf{X}}}
\newcommand{\bfg}{\ensuremath{\mathbf{g}}}
\newcommand{\bfu}{\ensuremath{\mathbf{u}}}
\newcommand{\bfv}{\ensuremath{\mathbf{v}}}
\newcommand{\bfx}{\ensuremath{\mathbf{x}}}
\newcommand{\bfy}{\ensuremath{\mathbf{y}}}
\newcommand{\bfz}{\ensuremath{\mathbf{z}}}
\newcommand{\calA}{\ensuremath{\mathcal{A}}}
\newcommand{\calM}{\ensuremath{\mathcal{M}}}
\newcommand{\calN}{\ensuremath{\mathcal{N}}}
\newcommand{\calO}{\ensuremath{\mathcal{O}}}
\newcommand{\calX}{\ensuremath{\mathcal{X}}}
\newcommand{\calY}{\ensuremath{\mathcal{Y}}}
\newcommand{\poly}[1]{\ensuremath{{\rm poly}\left(#1\right)}}
\renewcommand{\Pr}{\mathop{\mathbf{Pr}}}
\newcommand{\E}{\mathop{\mathbf{E}}}
\newcommand{\R}{\mathbb{R}}
\newcommand{\I}{\mathbb{I}}
\newtheorem{lem}{Lemma}[section]
\newtheorem{thm}[lem]{Theorem}
\newtheorem{defn}[lem]{Definition}
\newtheorem{fact}[lem]{Fact}
\newtheorem{condition}[lem]{Condition}
\newtheorem{claim}[lem]{Claim}
\newcommand{\bracket}[1]{\left(#1\right)}
\newcommand{\sqbracket}[1]{\left[#1\right]}
\newcommand{\vast}{\bBigg@{4}}
\newcommand{\Vast}{\bBigg@{5}}
\DeclareMathOperator{\tr}{\mathrm{tr}}
\newcommand{\ip}[2]{\langle #1, #2\rangle}
\newcommand{\norm}[1]{\| #1 \|}
\DeclarePairedDelimiter\abs{\lvert}{\rvert}
\DeclarePairedDelimiterX{\infdivx}[2]{(}{)}{%
  #1\;\delimsize\|\;#2%
}
\renewcommand{\epsilon}{\varepsilon}
\renewcommand{\tilde}{\widetilde}
\begin{document}

\twocolumn[
\icmltitlerunning{Private Gradient Descent for Linear Regression}

\icmltitle{Private Gradient Descent for Linear Regression: Tighter Error Bounds and Instance-Specific Uncertainty Estimation}

\icmlsetsymbol{equal}{*}

\begin{icmlauthorlist}
\icmlauthor{Gavin Brown}{uw}
\icmlauthor{Krishnamurthy (Dj) Dvijotham}{deepmind}
\icmlauthor{Georgina Evans}{deepmind}
\icmlauthor{Daogao Liu}{uw}
\icmlauthor{Adam Smith}{bu,deepmind}
\icmlauthor{Abhradeep Thakurta}{deepmind}
\end{icmlauthorlist}

\icmlaffiliation{uw}{Paul G Allen School of Computer Science and Engineering, University of Washington. While at UW, G.B. was supported by NSF Award 2019844. Part of this work was done while G.B. was at Boston University.}
\icmlaffiliation{deepmind}{Google DeepMind}
\icmlaffiliation{bu}{Department of Computer Science, Boston University. A.S. and G.B., while at BU, were supported in part by NSF awards CCF-1763786 and CNS-2120667 as well as Faculty Awards from Google and Apple}

\icmlcorrespondingauthor{Gavin Brown}{grbrown@cs.washington.edu}

\icmlkeywords{Machine Learning, ICML}

\vskip 0.3in
]

\printAffiliationsAndNotice{}

\begin{abstract}
    
We provide an improved analysis of standard differentially private gradient descent for linear regression under the squared error loss. Under modest assumptions on the input, we characterize the distribution of the iterate at each time step.

Our analysis leads to new results on the algorithm's accuracy: for a proper fixed choice of hyperparameters, the sample complexity depends only linearly on the dimension of the data. This matches the dimension-dependence of the (non-private) ordinary least squares estimator as well as that of recent private algorithms that rely on sophisticated adaptive gradient-clipping schemes \cite{varshney2022nearly,liu2023near}.

Our analysis of the iterates' distribution also allows us to 
construct confidence intervals for the empirical optimizer which adapt automatically to the variance of the algorithm on a particular data set.
We validate our theorems through experiments on synthetic data.
\end{abstract}

\section{INTRODUCTION}\label{sec:introduction}

Machine learning models trained on personal data are now ubiquitous—keyboard prediction models~\cite{xu2023federated}, sentence completion in email~\cite{differential_privacy_dp_fine_tuning}, and photo labeling~\cite{kurakin2022toward}, for example. Training  with \textit{differential privacy} \cite{DMNS} gives a strong guarantee that the model parameters reveal little about any single individual. However, differentially private algorithms necessarily introduce some distortion into the training process. Understanding the most accurate and efficient training procedures remains an important open question, with an extensive line of research dating back 15 years  \cite{KLNRS,chaudhuri2011differentially}.

The distortion introduced for privacy is complex to characterize; recent work has thus also investigated how to provide confidence intervals and other inferential tools that allow the model's user to correctly interpret its parameters. Confidence intervals on parameters are critical for applications of regression in the social and natural sciences, where they serve to evaluate effects' significance.

In this paper, we address these problems for a fundamental statistical learning problem, least-squares linear regression. Specifically, we give a new analysis of a widely studied differentially private algorithm, \textit{noisy gradient descent} (DP-GD). This algorithm repeatedly computes the (full) gradient at a point, adds Gaussian noise, and updates the iterate using the noisy gradient. 

Our central technical tool is a new characterization 
of the distribution of the iterates of private gradient descent. Under the assumption that the algorithm does not clip any gradients, we show that the distribution at any time step can be written as a Gaussian distribution about the empirical minimizer (plus a small bias term). All together, the iterates are drawn from a Gaussian process. We apply this characterization in two ways: we derive tighter error bounds and prove finite-sample coverage guarantees for natural confidence intervals constructions.

Our main result 
shows that the algorithm converges to a nontrivial solution---that is, an estimate whose distance from the true parameters is a $o(1)$ fraction of the parameter space's diameter---using only $n = \tilde\Theta(p)$ samples  (omitting dependency on the privacy parameters) when the  features and errors are distributed according to a Gaussian. 
Until recently, all private algorithms for linear regression required $\Omega(p^{3/2})$ samples to achieve nontrivial bounds.
This includes previous analyses of gradient descent \cite{cai2021cost}.
Three recent papers have broken this barrier: the exponential-time approach of \citet{liu2022differential} and the efficient algorithms of \citet{varshney2022nearly} and subsequently \citet{liu2023near}.
The latter two algorithms are based on variants of private gradient descent that use adaptive clipping frameworks that complicate both the privacy analysis and implementation.
We discuss these approaches in more detail in Related Work.

Our characterization of the iterates' distribution 
suggests that confidence interval constructions for Gaussian processes should apply to our setting. We confirm this, in both theory and practice:
we 
formally analyze and empirically test methods for computing instance-specific confidence intervals (that is, tailored to the variability of the algorithm on this particular data set).
These intervals convey useful information about the noise for privacy: in our experiments, their width is roughly comparable to that of the textbook nonprivate confidence intervals for the population parameter.
Even beyond the settings of our formal analysis, we give 
 general heuristics that achieve good coverage experimentally. These confidence interval constructions come at no cost to privacy—they use the variability among iterates (and their correlation structure) to estimate the variability of their mean. While prior works~\cite{shejwalkar2022recycling,rabanser2023training} discuss methods for generating confidence intervals based on intermediate iterates from DP gradient descent, they fail to provide any meaningful coverage guarantees.

We perform extensive experiments on synthetic data, isolating the effects of dimension and gradient clipping.
To the best of our knowledge, these are the first experiments which show privacy ``for free'' in high-dimensional (that is, with $p \approx n$ and $p$ large) private linear regression.
We also demonstrate the practicality of our confidence interval constructions.

\subsection{Our Results}

\paragraph{Formal Guarantees for Accuracy} Our theoretical analysis operates in the following distributional setting.
\begin{defn}[Generative Setting]\label{def:generative_setting}
    Let $\theta^*\in \R^p$ be the true regression parameter satisfying $\norm{\theta^*}\le 1$.
    For each $i\in [n]$, let the covariate $\bfx_i$ be drawn i.i.d.~from $\calN(0,\I_p)$ and the response $y_i \gets \bfx_i^\dagger \theta^* + \xi_i$, for $\xi_i \sim\calN(0,\sigma^2)$.
\end{defn}
Formally, we establish a set of conditions on the data (Condition~\ref{condition:main}) under which our algorithm performs well.
These conditions are satisfied with high probability by data arising from the generative setting above.
We focus on this setting to simplify the analysis as much as possible.
In Appendix~\ref{sec:more_experiments} we present experiments on other distributions.

\begin{thm}[Informal]\label{thm:intro_main}
    Assume we are in the generative setting (Definition~\ref{def:generative_setting}).
    Assume $n=\tilde\Omega( p )$.
    Set clipping threshold $\gamma = \tilde\Theta(\sigma \sqrt{p})$, step size $\eta = O(1)$, and number of steps $T = \tilde{O}(1)$. With high probability the final iterate $\theta_T$ of Algorithm~\ref{alg:DPGD} satisfies
    \begin{align}
        \norm{\theta_T - \theta^*} \le \tilde O\bracket{\sqrt{\frac{\sigma^2 p}{n}} + \frac{\sigma p}{\sqrt{\rho}n}}.
    \end{align}
\end{thm}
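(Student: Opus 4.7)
The plan is to leverage the structural characterization of the iterates alluded to in the introduction: under a no-clipping condition, each update of DP-GD is exactly $\theta_{t+1} - \hat\theta = (\I_p - \eta\hat\Sigma)(\theta_t - \hat\theta) - \eta\bfz_t$, where $\hat\Sigma = \frac{1}{n}\sum_i \bfx_i\bfx_i^\dagger$, $\hat\theta$ is the empirical minimizer, and $\bfz_t \sim \calN(0, \sigma_{\mathrm{DP}}^2 \I_p)$ with $\sigma_{\mathrm{DP}}^2 = \Theta(\gamma^2 T/(\rho n^2))$ suffices for $\rho$-zCDP across the $T$ rounds. My proof splits into three stages: (a) using Gaussian concentration under Definition~\ref{def:generative_setting} to verify the algorithmic preconditions (which I expect correspond to Condition~\ref{condition:main}); (b) a bootstrapped no-clipping argument along the trajectory; and (c) unrolling the linear recursion and combining the optimization, noise, and statistical errors.

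For step (a), I would use standard Gaussian concentration to show whp that $\norm{\hat\Sigma - \I_p} = \tilde O(\sqrt{p/n})$, so that every eigenvalue of $\hat\Sigma$ lies in a constant-factor window around $1$; that $\norm{\hat\theta - \theta^*} = \norm{\hat\Sigma^{-1} \cdot \frac{1}{n}\sum_i \xi_i \bfx_i} = \tilde O(\sigma\sqrt{p/n})$, which already yields the first term of the error bound and also controls the empirical residuals $y_i - \bfx_i^\dagger\hat\theta$ to be $\tilde O(\sigma)$; and that $\max_i \norm{\bfx_i} = \tilde O(\sqrt p)$. Together these imply that the per-sample gradients at $\hat\theta$ have norm $\tilde O(\sigma\sqrt p)$, motivating the choice $\gamma = \tilde\Theta(\sigma\sqrt p)$.

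The main obstacle is step (b). At a generic iterate $\theta_t \ne \hat\theta$, each per-sample gradient involves a cross term $\bfx_i\bfx_i^\dagger(\theta_t - \hat\theta)$ which, bounded naively by Cauchy--Schwarz, could be as large as $\norm{\bfx_i}^2 \norm{\theta_t - \hat\theta} = \tilde O(p\cdot\norm{\theta_t - \hat\theta})$, forcing a clipping threshold polynomial in $p$. The workaround is that the structural description of $\theta_t - \hat\theta$---conditional on the data, a near-isotropic Gaussian plus an exponentially-decaying bias---lets us treat each projection $\bfx_i^\dagger(\theta_t - \hat\theta)$ as a one-dimensional Gaussian of variance only $\Theta(\norm{\theta_t - \hat\theta}^2)$, so $\max_i |\bfx_i^\dagger(\theta_t - \hat\theta)| = \tilde O(\norm{\theta_t - \hat\theta})$ rather than suffering the worst-case $\sqrt p$ factor. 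Combined with an a priori bound $\norm{\theta_t - \hat\theta} = \tilde O(\sigma p/(n\sqrt\rho))$, established simultaneously via step (c), every per-sample gradient has norm at most $\tilde O\bigl(\sqrt p \cdot(\sigma + \sigma p/(n\sqrt\rho))\bigr) \le \gamma$ whenever $n = \tilde\Omega(p/\sqrt\rho)$. A union bound over $t \le T = \tilde O(1)$ then closes the induction.

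For step (c), unrolling the no-clipping recursion gives $\theta_T - \hat\theta = (\I_p - \eta\hat\Sigma)^T(\theta_0 - \hat\theta) - \eta\sum_{s=0}^{T-1}(\I_p - \eta\hat\Sigma)^{T-1-s}\bfz_s$. Choosing $\eta$ so that $\I_p - \eta\hat\Sigma$ is a contraction (enabled by the eigenvalue control on $\hat\Sigma$) makes the bias term geometrically negligible after $T = \tilde O(1)$ iterations, while the noise term is a zero-mean Gaussian whose covariance has operator norm $O(\eta^2 T \sigma_{\mathrm{DP}}^2)$ and hence $\ell_2$ norm $\tilde O(\sqrt{pT}\,\sigma_{\mathrm{DP}}) = \tilde O(\gamma T \sqrt p/(n\sqrt\rho)) = \tilde O(\sigma p /(n\sqrt\rho))$. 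Combining with $\norm{\hat\theta - \theta^*} = \tilde O(\sigma\sqrt{p/n})$ via the triangle inequality yields the claimed bound.
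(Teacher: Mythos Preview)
Your overall architecture—(a) data regularity, (b) no-clipping along the trajectory, (c) unroll the linear recursion and combine with the statistical error of $\hat\theta$—matches the paper's proof exactly. Steps (a) and (c) are essentially correct. The gap is in step (b), and it is the heart of the matter.

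You claim that, conditional on the data, $\theta_t - \hat\theta$ is ``a near-isotropic Gaussian plus an exponentially-decaying bias,'' and hence that each projection $\bfx_i^\dagger(\theta_t - \hat\theta)$ is a one-dimensional Gaussian of variance $\Theta(\|\theta_t - \hat\theta\|^2)$. But conditional on the data, the bias $(\I-\eta\hat\Sigma)^t(\theta_0 - \hat\theta)$ is \emph{deterministic}; it contributes to the mean of $\bfx_i^\dagger(\theta_t - \hat\theta)$, not its variance. Your Gaussian argument therefore only controls the noise part $\eta\,\bfx_i^\dagger\bfz_t'$. For the bias part you have nothing better than Cauchy--Schwarz, which gives $|\bfx_i^\dagger(\text{bias})| \le \|\bfx_i\|\cdot\|(\I-\eta\hat\Sigma)^t(\theta_0-\hat\theta)\| \approx \sqrt{p}\cdot (7/8)^t\|\hat\theta\|$. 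At early iterations this is $\Theta(\sqrt p)$, not $\tilde O(1)$, so the per-sample gradient norm is $\Theta(p)$ and clipping is not avoided with $\gamma = \tilde\Theta(\sigma\sqrt p)$. Relatedly, your ``a priori bound'' $\|\theta_t - \hat\theta\| = \tilde O(\sigma p/(n\sqrt\rho))$ simply fails at $t=0$ (where $\|\theta_0 - \hat\theta\| = \|\hat\theta\| = \Theta(1)$), so it cannot seed an induction.

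The paper circumvents this by centering around $\theta^*$ rather than $\hat\theta$ and controlling the two resulting bias pieces with data randomness rather than algorithmic randomness: (i) for $|\bfx_i^\dagger(\I-\eta\hat\Sigma)^t\theta^*|$ it proves that the vector $(\I-\eta\hat\Sigma)^t\bfx_i$ is spherically symmetric and independent of $\theta^*$, so the inner product is $\tilde O(1)$ regardless of $t$ (Condition~(iv) and Lemma~\ref{lemma:spherically_symmetric}); and (ii) for $|\bfx_i^\dagger(\I-(\I-\eta\hat\Sigma)^t)(\hat\theta - \theta^*)|$ it exploits the independence of the $\xi_j$ from the covariates to get variance $\tilde O(\sigma^2 p/n)$ after conditioning on $\bfX$ (Condition~(v)). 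Neither of these appears in your proposal, and your ``conditional on the data'' Gaussian argument cannot substitute for them. A smaller instance of the same oversight: the claim that $\|\hat\theta-\theta^*\| = \tilde O(\sigma\sqrt{p/n})$ ``controls the empirical residuals $y_i - \bfx_i^\dagger\hat\theta$ to be $\tilde O(\sigma)$'' is true, but not via Cauchy--Schwarz (which would need $n=\tilde\Omega(p^2)$); it again requires conditioning on $\bfX$ and using the independence of the $\xi_j$.
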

Here the parameters are set so that Algorithm~\ref{alg:DPGD} satisfies $\rho$-zCDP (see Section~\ref{sec:gradient_analysis} and Appendix~\ref{sec:DP_prelims}).

Theorem~\ref{thm:intro_main} follows from combining Theorem~\ref{thm:main_accuracy_claim}, which analyzes the accuracy of DP-GD relative to the OLS solution, 
with Claim \ref{claim:random_design}, which comes from the work of \citet{hsu2011analysis} and says that $\hat\theta$ is close to $\theta^*$.

\paragraph{Formal Guarantees for Confidence Intervals}
From our theoretical results, we expect any two iterates sufficiently separated in time to be well-approximated by independent draws from the stationary distribution, which (we show) is a Gaussian centered at the empirical minimizer: $\calN(\hat\theta, c\cdot \bfA)$, where $c\in \R$ depends on the algorithm's hyperparameters (e.g., privacy budget) and fixed matrix $\bfA$ is approximately spherical; it depends only on the step size and the covariance of the data. We can control the quality of this approximation and provide provable, non-asymptotic guarantees for natural confidence interval constructions. 

\paragraph{Experiments}
Our empirical results provide a strong complement to our formal results.
They validate our theorems by showing that the predicted behavior occurs at practical sample sizes and privacy budgets. 
For example, Figure~\ref{fig:pn} demonstrates how our algorithm's error is approximately constant when the ratio $\frac p n$ is fixed, which mirrors the behavior of OLS and stands in contrast to algorithms requiring $\Omega(p^{3/2})$ examples.

Our empirical results also investigate settings not considered by our theorems.
We demonstrate the accuracy of DP-GD and the validity of our confidence intervals constructions on distributions beyond those in Definition~\ref{def:generative_setting}. 
One phenomenon that affects our methods is gradient clipping, which introduces bias in settings with irregular data (such as outliers). In such settings, DP-GD can be viewed as optimizing a different objective function (roughly, a Huberized loss) and that the confidence intervals we generate capture the minimum of this smoothed objective.

\begin{figure}[h]
\centering
\includegraphics[width=0.4\textwidth]{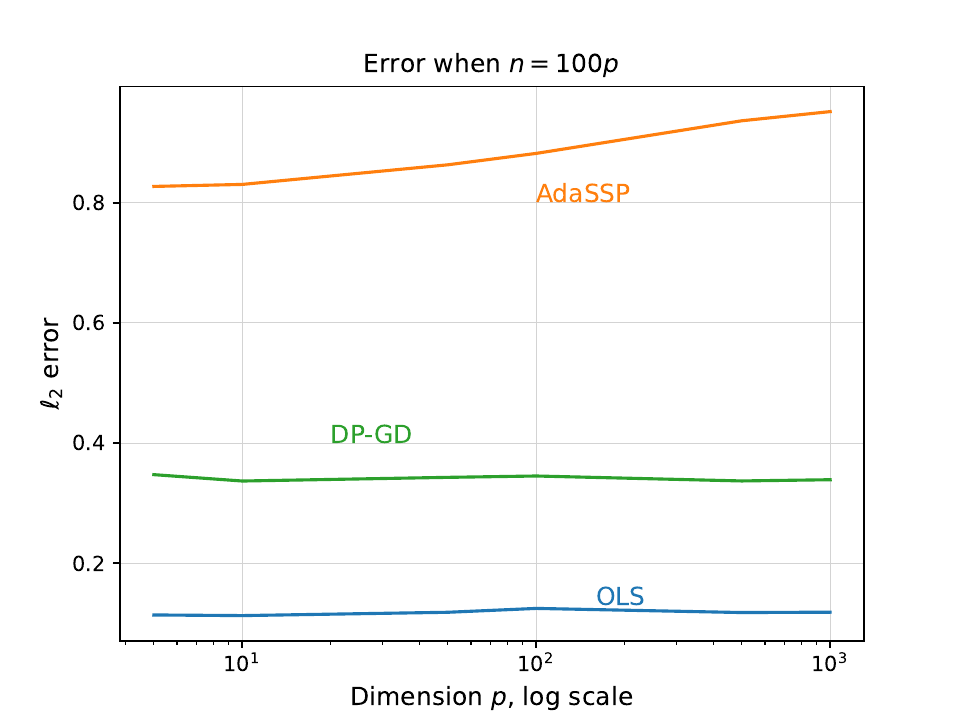}
\caption{We fix the ratio $p/n$ and let $p$ grow, with $\rho = 0.05$.
Run on data from a well-specified linear model, both DP-GD and OLS have constant error.
We compare with AdaSSP \cite{wang2018revisiting}, a popular algorithm that requires $n=\Omega(p^{3/2})$ examples.}
\label{fig:pn}
\end{figure}

\subsection{Techniques}

Our first technical tool characterizes the distribution of the iterates of DP-GD.
We start with the observation that, on any step where there is no clipping, the iterates satisfy the linear recurrence relation
\begin{align}
    \theta_{t} \gets \eta \cdot \Sigma \hat\theta + (\I-\eta\Sigma) \theta_{t-1} + \eta \cdot \bfz_{t-1},
\end{align}
where $\bfz_t$ is the noise added at time $t$ and $\Sigma = \frac 1 n \bfX^\dagger \bfX$ is the covariance.
We solve this recurrence and collect the noise terms, expressing $\theta_t$ as a Gaussian centered at $\hat\theta$ plus an exponentially decaying bias term:
\begin{align}
    \theta_t = \hat\theta + (\I-\eta\Sigma)^t (\theta_0-\hat\theta) + \eta \cdot \bfz_t',
\end{align}
where $\bfz_t'$ is a Gaussian random variable that depends on $\bfz_0,\ldots,\bfz_{t-1}$.

Under Definition~\ref{def:generative_setting}, a standard application of Cauchy--Schwarz implies that gradients are likely bounded by $\tilde{O}(p)$, so setting this as our clipping threshold $\gamma$ allows us to apply this distributional result.

Our main theoretical result is to show that, under the same distributional assumptions, the gradients are bounded by roughly $\sqrt{p}$, ignoring logarithmic factors and dependence on $\sigma$, the noise level.
Using $\gamma\approx \sqrt{p}$ as the clipping threshold allows us to add, at each iteration, privacy noise $\calN(0, c^2 \I)$ for $c^2 \approx \frac{p}{n^2}$ (ignoring privacy parameters). 
We show this bound holds on typical datasets and with high probability over the randomness of the algorithm.
On typical data the norm of $\bfx_i$ is roughly $\sqrt{p}$, so to control the norm of the gradient $\bfx_i(y_i-\bfx_i^\dagger\theta_t)$ it suffices to bound the absolute residual by $O(1)$.
We can show that this holds at initialization, using the fact that our initial iterate has no dependence on the data.  

We also expect the bound to hold after convergence. 
For an informal argument, consider a single update from $\theta^*$, where the gradient should nearly vanish.
We would move to $\theta_1 \gets \theta^* +\bfz$, where $\bfz \sim \calN(0,c^2 \I)$.
Since $c^2 \approx \frac{p}{n^2}$, we expect $\norm{\bfz} \approx c\sqrt{p} = O(1)$.
Plugging this in, we see
\begin{align}
    \abs{y_i - \bfx_i^\dagger \theta_1}
        &= \abs{(\bfx_i^\dagger\theta^* + \xi_i) - \bfx_i^\dagger(\theta^* + \bfz)} \\
        &= \abs{\xi_i - \bfx_i^\dagger \bfz}.
\end{align}
Since $\xi_i$ is drawn from a known distribution and $\bfx_i$ is independent of $\bfz$, we bound this residual with high probability.

The proof of Theorem~\ref{thm:main_accuracy_claim} formalizes these heuristics and shows that the same bound applies over every gradient step.

\

\vspace{-0.6cm}

\paragraph{Confidence Intervals}
The distributional form we show for the iterates suggests several natural methods for constructing confidence intervals for the parameter being estimated. 
In this work, we focus on confidence intervals for the \textit{empirical} minimizer—that is, the regression vector that minimizes the loss on the data set. 
Thus, our confidence intervals capture the uncertainty introduced by our privacy mechanism.
(It also makes sense to give confidence intervals for a population-level minimizer in settings where the data represent a random sample—we focus on the empirical parameter for simplicity.) We consider confidence intervals for a single coordinate of the parameter, since this is the most common use case.

We consider two approaches: one based on running the entire algorithm repeatedly and the other based on estimating the in-sequence variance of the stream of iterates. For this latter approach, we consider both a simple checkpointing strategy as well as a more data-efficient averaging strategy from the empirical process literature. These are discussed in Section~\ref{sec:CI-expt}.

\subsection{Limitations and Future Work}
Our work has several limitations, each of which presents natural directions for further exploration. 
First, some parts of our theoretical analysis require that the data be well-conditioned and have Gaussian-like concentration properties. 
Aside the exponential-time approach of~\citet{liu2022differential}, no private algorithms achieving $O(p)$ sample complexity incur a polynomial dependence on the condition number of the covariates \cite{varshney2022nearly,liu2023near}.
Removing this dependence remains a notable open problem.

We construct confidence intervals for the empirical minimizer of the loss function after clipping, which in general differs from the least-squares estimator (see Section~\ref{sec:effect_of_clipping}) and the population parameter. 
The former limitation is inherent (since the exact OLS solution has unbounded sensitivity) but an extension of our methods to population quantities would likely be useful.

Our experiments use synthetic data sets. 
A wider study of how these methods adapt to practical regression tasks is an important project but beyond the scope of this paper.

\subsection{Related Work}
\label{sec:related_work}

\paragraph{Private Linear Regression}

Under the assumption that the covariates are drawn from a subgaussian distribution and the responses arise from a linear model, the exponential-time approach of~\citet{liu2022differential} achieves nearly optimal error, matching a lower bound of~\citet{cai2021cost}.
In the remainder of this subsection, we survey a number of efficient approaches.

\paragraph{\it{Sufficient Statistics}}
A standard approach for private regression is \emph{sufficient statistics perturbation} \citep[see, e.g.,][]{vu2009differential,foulds2016theory,sheffet2017differentially,sheffet2019old}.
One algorithm which stands out for its practical accuracy and theoretical guarantees is the \emph{AdaSSP} algorithm of \citet{wang2018revisiting}, which relies on prior bounds for the covariates and labels.

To overcome the reliance on this prior knowledge, \citet{milionis2022differentially} build on algorithms of \citet{kamath2019privately} to give theoretical guarantees for linear regression with unbounded covariates.
More recently, \citet{tang2023improved} use AdaSSP inside a boosting routine.

As the dimension of the problem grows, these approaches suffer high error: accurate private estimation of $\bfX^\dagger \bfX$, which is necessary for SSP to succeed, requires $n=\tilde{\Omega}(p^{3/2})$ examples \cite{dwork2014analyze,kamath2022new,narayananbetter}.

An alternative approach is to view regression as an optimization problem, seeking a parameter vector that minimizes the empirical error.
Such algorithms form a cornerstone of the differential privacy literature.

Under assumptions similar to ours, the approach of~\citet{cai2021cost} achieves a near-optimal statistical rate via full-batch private gradient descent, where sensitivity of the gradients is controlled via projecting parameters to a bounded set. 
Their analysis, however, only applies when $n = \Omega(p^{3/2})$.
\citet{avella2021differentially} gave a general convergence analysis for private $M$-estimators (including Huber regression) but did not explicitly track the dimension dependence. 
Their approach bears similarities to gradient clipping, as we discuss in Section~\ref{sec:effect_of_clipping}.
The work of~\citet{varshney2022nearly} gave the first efficient algorithm for private linear regression requiring only $n=\tilde{O}(p)$ examples. 
Their approach uses differentially private stochastic gradient descent and an adaptive gradient-clipping scheme based on private quantiles.
Later,~\citet{liu2023near} gave a robust and private algorithm using adaptive clipping and full-batch gradient descent, improving upon the sample complexity of~\citet{varshney2022nearly} (by improving the dependence on the condition number of the design matrix).
Our approach is most similar to that of~\citet{liu2023near}; while their analysis involves adaptive clipping and strong \emph{resilience} properties of the input data, our algorithm and analysis are simpler.

A natural alternative strategy for private linear regression is to first clip the covariates and the responses and then run noisy \emph{projected} gradient descent, projecting each iterate into a constraint set. While this results in $\ell_2$-bounded gradient, the sample complexity of such an algorithm becomes $n=\Omega(p^{3/2})$~\cite{cai2021cost}.

In connection with robust statistics, a line of work gives private regression algorithms based on finding approximate medians~\cite{dwork2009differential, alabi2022differentially,sarathy2022analyzing,knop2022differentially,amin2022easy}. Informally, the algorithms solve the linear regression problem (or a robust variant) on multiple splits of the data and apply a consensus-based DP method (e.g., propose-test-release~\cite{dwork2009differential}, or the exponential mechanism~\cite{mcsherry2007mechanism}) to choose the regression coefficients. 
To the best of our knowledge, this class of approach cannot achieve the optimal sample complexity of $n=\widetilde O(p)$.

\paragraph{Private Confidence Intervals}
Some approaches generate confidence intervals using bootstrapping and related approaches~\cite{brawner2018bootstrap,wang2022differentially,covington2021unbiased}.

Several approaches arise naturally from sufficient statistics perturbation.
\citet{sheffet2017differentially} gave valid confidence intervals for linear regression.
The parametric bootstrap \citep{ferrando2022parametric} is also a natural choice in this setting where we already work under strong distribution assumptions.

Other approaches stem from the geometry of optimization landscape~\cite{wang2019differentially,avella2021differentially}; see also the non-private work of \citep{chen2020statistical}.

In a recent line of work,~\cite{shejwalkar2022recycling,rabanser2023training} use multiple checkpoints from a single run of DP-GD~\cite{BST14,DP-DL} to provide confidence intervals for predictions. While there is some algorithmic similarity to our procedures, these works do not provide rigorous parameter confidence interval estimates.

\section{Private Gradient Descent for Regression}
\label{sec:gradient_analysis}

\paragraph{Notation}
We use lowercase bold for vectors and uppercase bold for matrices, so $(\bfX,\bfy)$ is a data set and $(\bfx_i,y_i)$ a single observation.
Special quantities receive Greek letters: $\theta\in\R^p$ denotes a regression vector and $\cov=\frac{1}{n}\bfX^\dagger \bfX$, the empirical covariance.
We ``clip'' vectors in the standard way: $\mathtt{CLIP}_\gamma(\bfx) = \bfx \cdot\min\{1, \gamma/\norm{\bfx}\}$.

\subsection{Algorithm}

Algorithm~\ref{alg:DPGD} is differentially private gradient descent. 
Similar to the more complex linear regression algorithm of \citet{liu2023near}, it controls the sensitivity by clipping individual gradients: a full-batch version of the widely used private \emph{stochastic} gradient descent \cite{DP-DL}.
This is in contrast to approaches which rely on projecting the parameters to a convex set \citep[see, e.g.,][]{BST14}.

We present our privacy guarantee with \emph{(zero-)concentrated differential privacy} (zCDP) \cite{dwork2016concentrated,bun2016concentrated}.
For the definition of zCDP and basic properties, including conversion to $(\eps,\delta)$-DP, see Appendix~\ref{sec:DP_prelims}.
The guarantee comes directly from composing the Gaussian mechanism.
\begin{lem}
\label{lem:privacy}
For any noise variance $\lambda^2 \ge \frac{2 T \gamma^2}{\rho n^2}$, Algorithm 1 satisfies $\rho$-zCDP.
\end{lem}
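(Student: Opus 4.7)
The plan is to view each step of Algorithm~\ref{alg:DPGD} as one invocation of the Gaussian mechanism on a bounded-sensitivity query, then apply composition of zCDP. First I would write the noisy update as $\theta_t \gets \theta_{t-1} - \eta \cdot \bigl(\tfrac{1}{n}\sum_i \mathtt{CLIP}_\gamma(g_i(\theta_{t-1})) + \bfz_{t-1}\bigr)$ with $\bfz_{t-1} \sim \calN(0, \lambda^2 \bfI)$, and observe that conditioned on the transcript of prior noises the only data-dependent quantity released at step $t$ is the mean of clipped gradients $q_t(\bfX,\bfy) = \tfrac{1}{n}\sum_i \mathtt{CLIP}_\gamma(g_i(\theta_{t-1}))$.

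Second, I would bound the $\ell_2$-sensitivity of $q_t$. For any pair of datasets differing on a single record, the corresponding per-example clipped gradients differ by at most $2\gamma$ in $\ell_2$ norm (each is bounded by $\gamma$), so $\|q_t(\bfX,\bfy) - q_t(\bfX',\bfy')\|_2 \le 2\gamma/n$. The standard zCDP guarantee for the Gaussian mechanism (see Appendix~\ref{sec:DP_prelims}) then implies that releasing $q_t + \bfz_{t-1}$ is $\rho_t$-zCDP with
\begin{equation}
    \rho_t \;=\; \frac{(2\gamma/n)^2}{2\lambda^2} \;=\; \frac{2\gamma^2}{n^2\lambda^2}.
\end{equation}

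Third, I would invoke adaptive composition of zCDP, which gives $\rho$-zCDP overall with $\rho = \sum_{t=1}^{T} \rho_t = \frac{2T\gamma^2}{n^2\lambda^2}$, and then verify that the hypothesis $\lambda^2 \ge \frac{2T\gamma^2}{\rho n^2}$ precisely ensures that this composed parameter is at most $\rho$. The adaptivity of the composition is exactly what is needed, since the query $q_t$ depends on the previous iterate $\theta_{t-1}$, which in turn depends on the previously released noisy gradients.

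The argument is essentially a textbook composition exercise, so I do not expect any real obstacle; the only subtlety is making the adaptivity of the queries explicit and stating the adjacency relation (replace-one) under which the $2\gamma/n$ sensitivity bound holds. I would either defer this to the preliminaries or note it briefly to justify the constant factor of $2$ in the numerator of the noise-variance bound.
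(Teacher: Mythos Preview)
Your proposal is correct and matches the paper's approach exactly: the paper states that the guarantee ``comes directly from composing the Gaussian mechanism,'' computes the sensitivity of the clipped-gradient average as $\Delta = 2\gamma/n$, and invokes Claims~\ref{claim:gaussian_mechanism} and~\ref{claim:privacy_composition}. Your explicit treatment of adaptivity is a useful addition that the paper leaves implicit.
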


\begin{algorithm}
\caption{DP-GD, $\calA(\bfX,\bfy; \gamma, \lambda,\eta, T, \theta_0)$}
\label{alg:DPGD}
\begin{algorithmic}[1]
\STATE \textbf{Input:} data $(\bfX,\bfy)\in \R^{n\times p}\times \R^n$;  clipping threshold $\gamma>0$; noise scale $\lambda>0$; step size $\eta>0$; number of iterations $T\in\mathbb{N}$; initial vector $\theta_0\in \R^p$
\FOR{$t=1,\ldots, T$}
\STATE $\bar{\bfg}_t \gets \frac 1 n \sum_{i=1}^n \mathtt{CLIP}_\gamma( -\bfx_i (y_i - \bfx_i^\dagger\theta_{t-1} ))$ 
\STATE Draw $\noise_t \sim \calN(0, \lambda^2 \I)$
\STATE $\theta_t \gets \theta_{t-1} - \eta\cdot  \bar{\bfg}_t + \eta \cdot\noise_t$
\ENDFOR
\STATE \textbf{Output:} $\theta_1,\ldots,\theta_T$.
\end{algorithmic}
\end{algorithm}

\subsection{Convergence, With and Without Clipping}

Throughout the paper, we deal with the exact distribution over the iterates of DP-GD.
As we show, if we remove the clipping step (i.e., set $\gamma=+\infty$), this distribution is exactly Gaussian.
This algorithm appears in many contexts under different names, including \emph{Noisy GD} and the \emph{(Unadjusted) Langevin Algorithm}.
Clipping ensures privacy, even if a particular execution of the algorithm does not clip any gradients.
We might hope to condition on the event that Algorithm~\ref{alg:DPGD} clips no gradients.
However, conditioning changes the output distribution.
We require a more careful approach.

We consider a \emph{coupling} between two executions of Algorithm~\ref{alg:DPGD}: one with clipping enforced and the other with $\gamma=+\infty$.
Of course, if Algorithm~\ref{alg:DPGD} receives an input where clipping occurs with high probability, then the output distributions of the two executions may differ greatly.
When clipping in the first case is unlikely, we can connect the output distributions between the two executions.

\newcommand{\couplingstatement}{
    Fix  data set $\bfX,\bfy$ and hyperparameters $\gamma, \lambda, \eta, T$, and $\theta_0$.
    Define random variables $\calO_\gamma$ and $\calO_\infty$ as
    \begin{align}
        \calO_\gamma &= \calA(\bfX,\bfy; \gamma, \lambda,\eta,T,\theta_0) \\
        \calO_\infty &= \calA(\bfX,\bfy; \infty, \lambda,\eta,T,\theta_0).
    \end{align}
    
    If Algorithm~\ref{alg:DPGD} on input $(\bfX,\bfy;\gamma,\lambda,\eta,T,\theta_0)$ clips nothing with probability $1-\beta$, 
    then $TV(\calO_\gamma,\calO_\infty)\le \beta$.
}
\begin{lem}[Coupling DP-GD without Clipping]\label{lem:couple_DPGD_noclipping}
    \couplingstatement
\end{lem}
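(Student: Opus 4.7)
The plan is to give a short proof by \emph{synchronous coupling}: run both executions on the same data and the same noise realizations $\bfz_1,\ldots,\bfz_T$, and then appeal to the standard coupling characterization of total variation distance, namely $TV(\calO_\gamma,\calO_\infty)\le \Pr[\calO_\gamma \neq \calO_\infty]$ for any joint distribution whose marginals match.

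Formally, let $(\bfz_1,\ldots,\bfz_T)$ be i.i.d.\ draws from $\calN(0,\lambda^2\I)$, and let $\theta_t^\gamma$ and $\theta_t^\infty$ denote the iterates of Algorithm~\ref{alg:DPGD} driven by these same noise vectors, with clipping threshold $\gamma$ and $\infty$ respectively. Let $E$ be the event (measurable w.r.t.\ the $\bfz_t$'s) that when Algorithm~\ref{alg:DPGD} is run with threshold $\gamma$, no per-example gradient is clipped at any of the $T$ steps---i.e., for every $t\in[T]$ and every $i\in[n]$,
\begin{equation}
    \norm{-\bfx_i(y_i - \bfx_i^\dagger \theta_{t-1}^\gamma)} \le \gamma.
\end{equation}
By hypothesis, $\Pr[E]\ge 1-\beta$.

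The key observation is a simple induction on $t$: conditioned on $E$, we have $\theta_t^\gamma = \theta_t^\infty$ for every $t\in\{0,1,\ldots,T\}$. The base case $t=0$ is immediate since both chains start from $\theta_0$. For the inductive step, assume $\theta_{t-1}^\gamma = \theta_{t-1}^\infty$. Then the per-example gradients agree, and on $E$ the operator $\mathtt{CLIP}_\gamma$ acts as the identity on each of them, so $\bar{\bfg}_t^\gamma = \bar{\bfg}_t^\infty$. Because we use the same $\bfz_t$ in both executions, the update rule yields $\theta_t^\gamma = \theta_t^\infty$, completing the induction. In particular, on $E$ the entire trajectories (and hence the outputs) coincide, so
\begin{equation}
    \Pr[\calO_\gamma \neq \calO_\infty] \le \Pr[E^c] \le \beta.
\end{equation}
The coupling inequality then gives $TV(\calO_\gamma,\calO_\infty)\le \beta$.

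There is not really a hard step here; the entire argument is a few lines once the coupling is set up. The only subtlety worth flagging is a measurability/definitional point: the event $E$ must be defined in terms of the $\gamma$-execution (not some hybrid), since that is the execution whose no-clipping probability is controlled by hypothesis. With the synchronous coupling, checking ``no clipping in the $\gamma$-run'' on $E$ is what justifies replacing $\mathtt{CLIP}_\gamma$ by the identity in the induction; the $\infty$-run plays no role in the definition of $E$ but inherits the equality of iterates automatically.
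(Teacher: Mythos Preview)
Your proof is correct and follows essentially the same approach as the paper: couple the two executions by sharing the noise vectors $\bfz_1,\ldots,\bfz_T$, observe that on the no-clipping event the two trajectories coincide, and apply the coupling characterization of total variation. Your write-up is in fact more explicit than the paper's, spelling out the induction and the measurability point about defining $E$ via the $\gamma$-execution, but the underlying argument is identical.
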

Appendix~\ref{sec:proofs} contains details on couplings and the simple proof of this lemma, which uses the coupling induced by sharing randomness across runs of the algorithm.

The following claim characterizes the output of Algorithm~\ref{alg:DPGD} with $\gamma=+\infty$.
We see the distribution is Gaussian and centered at the empirical minimizer plus a bias term that goes to zero quickly as $t$ grows.
We define a matrix $\bfD = (\I-\eta \cov)^2$, where $\cov$ is the empirical covariance matrix and $\eta$ the step size.

The proof, which appears in Appendix~\ref{sec:proofs} only relies on the fact that the loss function is quadratic.
\newcommand{\distributionNoClippingSetup}{
    Fix a data set $(\bfX,\bfy)$ and step size $\eta$, noise scale $\lambda$, number of iterations $T$, and initial vector $\theta_0$.
    Define matrices $\cov = \frac 1 n \bfX^\dagger \bfX$ and $\bfD = (\I-\eta \cov)^2$; assume both are invertible.
    Let $\hat\theta = (\bfX^\dagger\bfX)^{-1}\bfX^\dagger \bfy$ be the least squares solution.
    Consider $\calA(\bfX,\bfy; \infty,\lambda,\eta,T,\theta_0)$, i.e., Algorithm~\ref{alg:DPGD} without clipping.
}
\newcommand{\distributionNoClippingShortConclusion}{
    For any $t\in[T]$, we have
    \begin{align}
        \theta_t = \hat\theta + (\I - \eta \cov)^t (\theta_0 - \hat\theta) + \eta \cdot\noise_t'
    \end{align}
    for $\noise_t' \sim\calN(0,\lambda^2 \bfA^{(t)})$ with $\bfA^{(t)} = (\I-\bfD)^{-1}(\I-\bfD^t)$.
}
\newcommand{\distributionNoClippingLongConclusion}{
    For any $t\in[T]$, we have
    \begin{align}
        \theta_t = \hat\theta + (\I - \eta \cov)^t (\theta_0 - \hat\theta) + \eta \sum_{i=1}^t (\I - \eta \cov)^{i-1} \noise^{t-i}.
    \end{align}
    This is equal in distribution to 
    \begin{align}
        \theta_t = \hat\theta + (\I - \eta \cov)^t (\theta_0 - \hat\theta) + \eta \cdot\noise_t'
    \end{align}
    for $\noise_t' \sim\calN(0,\lambda^2 \bfA^{(t)})$ with $\bfA^{(t)} = (\I-\bfD)^{-1}(\I-\bfD^t)$. 
}
\begin{lem}\label{lem:distribution_of_DPGD_no_clipping}
    \distributionNoClippingSetup
    \distributionNoClippingShortConclusion
\end{lem}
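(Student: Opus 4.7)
\textbf{Proof proposal for Lemma~\ref{lem:distribution_of_DPGD_no_clipping}.} My plan is to exploit the quadratic nature of the squared loss so that, without clipping, the update step becomes an affine map of $\theta_{t-1}$ with additive Gaussian noise. This turns the recursion into a linear Gaussian system, which I can solve in closed form by unrolling it.

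First I would compute the unclipped batch gradient explicitly. With $\gamma = \infty$, line~3 of Algorithm~\ref{alg:DPGD} gives
\begin{align}
\bar{\bfg}_t \;=\; \tfrac{1}{n}\bfX^\dagger\bfX\,\theta_{t-1} - \tfrac{1}{n}\bfX^\dagger\bfy \;=\; \cov\theta_{t-1} - \cov\hat\theta,
\end{align}
where the second equality uses the normal equations $\frac1n\bfX^\dagger\bfy = \cov\hat\theta$, which hold since $\cov$ is invertible. Substituting into the update $\theta_t = \theta_{t-1} - \eta\bar{\bfg}_t + \eta\noise_t$ and subtracting $\hat\theta$ from both sides yields the linear recurrence
\begin{align}
\theta_t - \hat\theta \;=\; (\I - \eta\cov)(\theta_{t-1} - \hat\theta) + \eta\noise_t.
\end{align}

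Next, writing $\bfM = \I - \eta\cov$ and unrolling the recurrence from $\theta_0$ gives
\begin{align}
\theta_t - \hat\theta \;=\; \bfM^t(\theta_0 - \hat\theta) + \eta\sum_{i=1}^{t}\bfM^{t-i}\noise_i,
\end{align}
which is the long form claimed. Since the $\noise_i$ are independent $\calN(0,\lambda^2\I)$ and $\bfM$ is symmetric, the noise sum is a centered Gaussian with covariance
\begin{align}
\lambda^2\sum_{i=1}^t \bfM^{t-i}\bfM^{t-i} \;=\; \lambda^2\sum_{j=0}^{t-1}\bfD^j,
\end{align}
using $\bfD = \bfM^2$. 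The assumed invertibility of $\bfD$ (together with $\I - \bfD = \eta\cov(2\I - \eta\cov)$ being invertible under the standing hypotheses) lets me apply the matrix geometric series $\sum_{j=0}^{t-1}\bfD^j = (\I-\bfD)^{-1}(\I-\bfD^t)$, yielding exactly $\bfA^{(t)}$. Pulling out the factor of $\eta$ rewrites the noise as $\eta\cdot\noise_t'$ with $\noise_t'\sim\calN(0,\lambda^2\bfA^{(t)})$, which is the short-form conclusion.

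There is no real obstacle: the argument is essentially a one-page linear-algebra calculation. The only mild subtlety is justifying the closed-form geometric sum, which requires $\I - \bfD$ to be invertible; this follows from the lemma's invertibility hypotheses on $\cov$ and $\bfD$ (since $\I - \bfD = \eta\cov(2\I-\eta\cov)$, and invertibility of $\bfD = (\I-\eta\cov)^2$ together with invertibility of $\cov$ forces $2\I - \eta\cov$ to be invertible as well). Everything else is bookkeeping—no probabilistic estimates, no concentration, just an exact distributional identity driven by the linearity of the noiseless gradient map.
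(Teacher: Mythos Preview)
Your proposal is correct and follows essentially the same route as the paper: compute the unclipped gradient as $\cov(\theta_{t-1}-\hat\theta)$, unroll the resulting linear recursion, and sum the Gaussian noise via the matrix geometric series to obtain $\bfA^{(t)}$. The only cosmetic difference is that you center the recursion at $\hat\theta$ from the start, whereas the paper carries the affine term and collapses it with a second geometric-series application; your way is marginally cleaner but not materially different.
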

Note that $\bfz_t'$ depends on all the noise vectors up to time $t$.
We must take care when applying this lemma across the same run, as $\bfz_{t_1}'$ and $\bfz_{t_2}'$ are dependent random variables.

\subsection{Conditions that Ensure No Clipping}

To apply Lemma~\ref{lem:distribution_of_DPGD_no_clipping}, we need to reason about when no gradients are clipped.
To that end, we now define a deterministic  ``\noclippingcondition'' under which DP-GD clips no gradients with high probability.
This is a condition on data sets $(\bfX,\bfy)$ that is defined in terms of a few hyperparameters.
We will often leave these hyperparameters implicit, saying ``data set $(\bfX,\bfy)$ satisfies the \noclippingcondition'' instead of ``data set $(\bfX,\bfy)$ satisfies the \noclippingcondition with values $\sigma,\eta,T$, and $\multiplier.$''

Later, we will show that the \noclippingcondition is satisfied with high probability under distributional assumptions.
Conditions~\ref{condition:spectrum}-\ref{condition:response_noise_magnitude} follow from standard concentration statements.
Conditions~\ref{condition:true_param_direction} and \ref{condition:noise_covariate_independence} are less transparent; they capture notions of independence between $\theta^*, \{\bfx_i\},$ and $\{y_i-\bfx_i^\dagger \theta^*\}$.

\begin{condition}[\noclippingcondition]\label{condition:main}
    Let $\sigma,\eta$ and $\multiplier$ be nonnegative real values and let $T$ be a natural number.
    Let $(\bfX,\bfy)\in \R^{n\times p}\times\R^p$ be a data set.
    Define $\cov = \frac 1 n \bfX^\dagger \bfX$.
    There exists a $\theta\in\R^p$ such that, for all $i\in[n]$ and $t\in[T]$,
    \begin{enumerate}[(i)]
        \item $\frac 1 2 \I \preceq \cov \preceq 2\I$, $\norm{\I-\eta \cov} \le \frac 7 8$,
            \label{condition:spectrum}
        \item $\norm{\bfx_i}\le \multiplier \sqrt{p}$, \label{condition:covariate_norm}
        \item $\abs{y_i-\bfx_i^\dagger\theta} \le \multiplier  \sigma$, \label{condition:response_noise_magnitude}
        \item $\abs{\bfx_i^\dagger (\I - \eta \cov)^{t} \theta} \le \multiplier$ and \label{condition:true_param_direction}
        \item $\abs{\sum_{j=1}^n (y_j-\bfx_j^\dagger\theta) \cdot \bfx_i^\dagger A_t \bfx_j }\le \multiplier\sigma \sqrt{n p}$, where $A_t = (\I - (\I -\eta \cov)^t)\cov^{-1} $. \label{condition:noise_covariate_independence}
    \end{enumerate}
\end{condition}

The definition requires the existence of some $\theta$ with certain properties.
Informally, the reader should think of this as $\theta^*$, the ``true'' parameter.
Crucially, however, the definition does not require the existence of an underlying distribution.
We prove Lemma~\ref{lem:no_clipping} in Appendix~\ref{sec:proofs}.

\newcommand{\noClipping}{
    Fix nonnegative real numbers $\sigma, \eta$, and $\multiplier$.
    Fix natural number $T$.
    Assume data set $(\bfX,\bfy)\in\R^{n\times p}\times\R^n$ satisfies the \noclippingcondition (Condition~\ref{condition:main}) with values $(\sigma, \eta,\multiplier,T)$.
    
    Fix nonnegative real numbers $\gamma$ and $\lambda$.
    Consider running Algorithm~\ref{alg:DPGD}, i.e., $\calA(\bfX,\bfy; \gamma,\lambda,\eta,T,0)$ with $\theta_0 =0$ the initial point.
    Assume $\gamma\ge 4 \multiplier^2 \sigma \sqrt{p}$.
    For any $\beta\in(0,1)$, if
        $\frac{\gamma}{\eta \lambda} \ge 64 \multiplier^2 p \sqrt{\ln 2 n T/\beta},$
    then with probability $1-\beta$ Algorithm~\ref{alg:DPGD} clips no gradients.
}

\begin{lem}[No Clipping Occurs]\label{lem:no_clipping}
    \noClipping
\end{lem}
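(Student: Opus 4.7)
The plan is to reduce to the \emph{unclipped} algorithm via a shared-randomness coupling, characterize its iterates exactly using Lemma~\ref{lem:distribution_of_DPGD_no_clipping}, and then bound the norm of each individual gradient through a direct algebraic decomposition that invokes Conditions (i)--(v) of the \noclippingcondition.

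First, couple the clipped execution $\calA(\bfX,\bfy;\gamma,\lambda,\eta,T,0)$ with the unclipped execution $\calA(\bfX,\bfy;\infty,\lambda,\eta,T,0)$ by sharing the Gaussian noise vectors $\noise_1,\ldots,\noise_T$. By induction on $t$, the two runs produce identical iterates up through time $t$ provided every individual gradient $\bfx_i(y_i-\bfx_i^\dagger \theta_{s-1})$ encountered in the \emph{unclipped} run (over $s\le t$ and $i\in[n]$) has $\ell_2$ norm at most $\gamma$. Hence it suffices to upper bound, over the randomness of the unclipped run, the probability that some individual gradient exceeds $\gamma$.

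To do so, apply Lemma~\ref{lem:distribution_of_DPGD_no_clipping} to the unclipped execution with $\theta_0=0$, giving $\theta_t = (\I-(\I-\eta\cov)^t)\hat\theta + \eta\noise_t'$ with $\noise_t'\sim\calN(0,\lambda^2\bfA^{(t)})$. Let $\theta$ be the vector guaranteed by Condition~\ref{condition:main} and define $\bfxi_j:=y_j-\bfx_j^\dagger \theta$. The identity $\hat\theta=\theta+\tfrac{1}{n}\cov^{-1}\bfX^\dagger\bfxi$ then yields
\begin{align}
y_i-\bfx_i^\dagger\theta_{t-1} = \bfxi_i + \bfx_i^\dagger(\I-\eta\cov)^{t-1}\theta - \tfrac{1}{n}\sum_{j=1}^n \bfxi_j\, \bfx_i^\dagger A_{t-1}\bfx_j - \eta\,\bfx_i^\dagger\noise_{t-1}'.
\end{align}
Conditions (iii), (iv), and (v) deterministically bound the first three summands in absolute value by $\multiplier\sigma$, $\multiplier$, and $\multiplier\sigma\sqrt{p/n}$ respectively. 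The fourth is a scalar Gaussian; Condition (i) gives $\bfD = (\I-\eta\cov)^2 \preceq (7/8)^2\,\I$, whence $\bfA^{(t-1)} \preceq (\I-\bfD)^{-1} \preceq \tfrac{64}{15}\I$, and together with $\|\bfx_i\|\le \multiplier\sqrt{p}$ from Condition (ii) its variance is at most $\tfrac{64}{15}\eta^2\lambda^2\multiplier^2 p$. A standard Gaussian tail inequality and a union bound over the $nT$ pairs $(i,t)$ deliver $|\eta \bfx_i^\dagger\noise_{t-1}'|=O\!\bigl(\eta\lambda\multiplier\sqrt{p\ln(nT/\beta)}\bigr)$ simultaneously with probability $\ge 1-\beta$; the dependence of the $\noise_t'$ across time is harmless because the union bound only consults the marginal distributions.

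Multiplying the resulting bound on $|y_i-\bfx_i^\dagger\theta_{t-1}|$ by $\|\bfx_i\|\le \multiplier\sqrt{p}$ produces an upper bound on the gradient norm. The hypothesis $\gamma\ge 4\multiplier^2\sigma\sqrt{p}$ absorbs the $\multiplier\sigma$ and $\multiplier\sigma\sqrt{p/n}$ contributions, while $\gamma/(\eta\lambda)\ge 64\multiplier^2 p\sqrt{\ln(2nT/\beta)}$ absorbs the Gaussian contribution; the remaining $\multiplier$ contribution is of lower order. The base case $t=1$, for which we evaluate the residual at $\theta_0=0$, is handled directly from Conditions (iii) and (iv) (read at $t=0$), since the decomposition collapses to $\bfxi_i + \bfx_i^\dagger \theta$. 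The main obstacle is not conceptual but bookkeeping: carefully tracking the four contributions and verifying that the two hypotheses on $\gamma$ dominate each of them; the coupling step also deserves care, since it is the coupling---not a conditional argument---that lets us safely apply the unclipped distributional formula while still concluding about the clipped algorithm.
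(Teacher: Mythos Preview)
Your proposal is correct and follows essentially the same route as the paper: the same four-term decomposition of the residual $|y_i-\bfx_i^\dagger\theta_{t}|$ into the $\xi_i$ term, the $(\I-\eta\cov)^{t}\theta$ term, the $A_t$ cross term, and the noise term, each dispatched by Conditions~(iii), (iv), (v), and a Gaussian tail bound respectively. The only notable variation is that you bound the noise term via the operator-norm inequality $\bfA^{(t)}\preceq(\I-\bfD)^{-1}\preceq\tfrac{64}{15}\I$ on the marginal covariance of $\noise_t'$, whereas the paper expands $\noise_t'$ as a sum of the individual $\noise_s$'s and bounds each summand before summing a geometric series; your route is slightly more direct and yields a marginally better constant, and your framing via the shared-randomness coupling is equivalent to (and arguably cleaner than) the paper's strong induction.
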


\subsection{Analyzing Algorithm~\ref{alg:DPGD}}

Recall the distributional setting we discussed in the introduction: 
there is some true parameter $\theta^*$ with $\norm{\theta^*}\le 1$ and observations are generated by drawing covariate $\bfx_i\sim\calN(0,\I)$ and setting response $y_i\gets \bfx_i^\dagger \theta^* + \xi_i$ for $\xi_i\sim \calN(0,\sigma^2)$.
In this section, we first show that data sets generated in this way satisfy the \noclippingcondition with high probability.
This implies that Algorithm~\ref{alg:DPGD}, with high probability, does not clip any gradients.

\newcommand{\assumptionHolds}{
    Fix data set size $n$ and data dimension $p\ge 2$.
    Fix $\theta^*$ with $\norm{\theta^*}\le 1$, let covariates $\bfx_i$ be drawn i.i.d.~from $\calN(0,\I)$, and responses $y_i = \bfx_i^\dagger \theta^* + \xi_i$ for $\xi_i\sim \calN(0,\sigma^2)$.
    Fix the step size $\eta = \frac 1 4$ and a natural number $T$. 
    There exists a constant $c$ such that, for any $\beta\in (0,1)$, if $n\ge c(p + \ln 1/\beta)$ then with probability at least $1-\beta$ data set $(X,y)$ satisfies the \noclippingcondition (Condition~\ref{condition:main}) with $\multiplier = 12 \ln^{1.5}(5nT/\beta)$.
}
\begin{lem}\label{lem:assumption_holds}
    \assumptionHolds
\end{lem}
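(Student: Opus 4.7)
The plan is to set $\theta = \theta^*$ in the \noclippingcondition and then establish each of the five sub-conditions with probability at least $1 - \beta/5$; a union bound then yields the lemma, with $\multiplier$ chosen as a polylogarithmic factor that absorbs everything below. Condition~(i) follows from a standard subgaussian covariance-concentration inequality (e.g., of Davidson--Szarek / Vershynin type): for $n \ge c_1(p + \ln(1/\beta))$, with probability $1-\beta/5$ one has $\|\cov - \I\| \le 1/2$, so $\tfrac 1 2 \I \preceq \cov \preceq 2 \I$ and, with $\eta = 1/4$, the eigenvalues of $\I - \eta\cov$ lie in $[1/8, 7/8]$. Condition~(ii) is chi-square concentration for $\|\bfx_i\|^2$ plus a union bound over $i$; condition~(iii), using $y_i - \bfx_i^\dagger\theta^* = \xi_i \sim \calN(0,\sigma^2)$, is a Gaussian tail bound plus union bound over $i$.

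Condition~(iv) is the key obstacle, because $\bfx_i$ and $\cov$ are dependent. I would decouple via leave-one-out: write $\cov = \cov_{-i} + \tfrac 1 n \bfx_i \bfx_i^\dagger$, where $\cov_{-i}$ is the empirical covariance with the $i$th row removed, and split
\begin{align}
\bfx_i^\dagger(\I - \eta\cov)^t\theta^* = \bfx_i^\dagger(\I - \eta\cov_{-i})^t\theta^* + \bfx_i^\dagger \Delta_t \theta^*,
\end{align}
where $\Delta_t = (\I - \eta\cov)^t - (\I - \eta\cov_{-i})^t$. Conditional on $\{\bfx_j\}_{j \ne i}$, the first summand is a mean-zero Gaussian with variance $\|(\I - \eta\cov_{-i})^t\theta^*\|^2 \le 1$; the spectral bound of (i) transfers from $\cov$ to $\cov_{-i}$ up to an $O(p/n)$ perturbation, so a Gaussian tail bound and a union bound over $i$ and $t$ control it by $O(\sqrt{\ln(nT/\beta)})$. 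For $\Delta_t$ I would use the telescoping identity $A^t - B^t = \sum_{k=0}^{t-1} A^k (A-B) B^{t-1-k}$ with $A = \I-\eta\cov$ and $B = \I-\eta\cov_{-i}$, so $A - B = -\tfrac{\eta}{n}\bfx_i \bfx_i^\dagger$; then each summand of $\bfx_i^\dagger \Delta_t \theta^*$ factors as $-\tfrac{\eta}{n}(\bfx_i^\dagger A^k \bfx_i)(\bfx_i^\dagger B^{t-1-k}\theta^*)$. Bounding $\bfx_i^\dagger A^k \bfx_i \le (7/8)^k \|\bfx_i\|^2 = O((7/8)^k p)$, controlling $|\bfx_i^\dagger B^{t-1-k}\theta^*|$ by the same conditional Gaussian argument (it is independent of $\bfx_i$), and summing the geometric series yields a correction bounded by $O((p/n)\ln(nT/\beta))$, which is polylogarithmic in $n,T,1/\beta$ when $n \ge p$.

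Condition~(v) is handled by conditioning on $\bfX$ and exploiting that the $\xi_j$ are independent of $\bfX$. Given $\bfX$, the sum $\sum_j \xi_j (\bfx_i^\dagger A_t \bfx_j)$ is a mean-zero Gaussian with variance $\sigma^2 \sum_j (\bfx_i^\dagger A_t \bfx_j)^2 = \sigma^2 n \, \bfx_i^\dagger A_t \cov A_t \bfx_i$; the spectral condition from~(i) gives $\|A_t\|, \|\cov\| = O(1)$, and together with $\|\bfx_i\|^2 = O(p)$ from~(ii) this variance is $O(\sigma^2 n p)$, so a Gaussian tail and union bound over $i,t$ give $|\sum_j \xi_j (\bfx_i^\dagger A_t \bfx_j)| \le O(\sigma \sqrt{n p \ln(nT/\beta)})$, well inside the budget $\multiplier \sigma \sqrt{n p}$. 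Collecting all five conditions and taking $\multiplier = 12\ln^{1.5}(5nT/\beta)$---with room to spare---completes the proof. The main obstacle is (iv): without the leave-one-out decoupling, the naive bound $|\bfx_i^\dagger M \theta^*| \le \|\bfx_i\|\|M\|\|\theta^*\| = O(\sqrt p)$ is off by a factor of $\sqrt p$ from what is needed.
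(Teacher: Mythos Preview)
Your proposal is correct. For Conditions~(i), (ii), (iii), and (v) your argument matches the paper's proof essentially line for line: standard covariance concentration, chi-square tails, Gaussian tails on $\xi_i$, and conditioning on $\bfX$ to reduce (v) to a Gaussian tail with variance $\sigma^2 n\,\bfx_i^\dagger A_t \cov A_t \bfx_i = O(\sigma^2 np)$.

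The genuine difference is Condition~(iv). The paper does \emph{not} use leave-one-out; instead it invokes a rotational-symmetry lemma (Lemma~\ref{lemma:spherically_symmetric}) showing that $(\I-\eta\cov)^t\bfx_i$ is spherically symmetric, hence its direction is uniform on $\mathbb S^{p-1}$ and independent of its norm. The inner product of a uniform unit vector with the fixed $\theta^*$ concentrates at $O(p^{-1/2}\log(nT/\beta))$, and the norm $\|(\I-\eta\cov)^t\bfx_i\|$ is at most $c_0\sqrt p$ by (i)--(ii), giving the polylogarithmic bound directly. Your leave-one-out route---splitting off $\cov_{-i}$, bounding the main term by conditional Gaussianity, and handling the rank-one correction via the telescoping identity $A^t-B^t=\sum_k A^k(A-B)B^{t-1-k}$---is a valid alternative. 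It is slightly heavier (you need an additional union bound over the exponents $s$ in $\bfx_i^\dagger B^s\theta^*$, and you must check that the $O(p/n)$ correction is absorbed by the polylog budget, which does require $n\gtrsim p$), but it is more robust: the symmetry argument is specific to isotropic Gaussian covariates, whereas leave-one-out would extend to $\bfx_i\sim\calN(0,\Sigma_0)$ for non-identity $\Sigma_0$ with only cosmetic changes. One small imprecision: in your correction term, the parenthetical ``it is independent of $\bfx_i$'' should refer to the vector $B^{t-1-k}\theta^*$, not the scalar $\bfx_i^\dagger B^{t-1-k}\theta^*$; the conditional-Gaussian argument you intend is correct.
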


\newcommand{\mainAccuracyClaim}{
    Fix $\theta^*\in \R^p$ with $p\ge 2$ and $\norm{\theta^*}\le 1$, let $n$ covariates $\bfx_i$ be drawn i.i.d.~from $\calN(0,\I)$ and responses $y_i =\bfx_i^\dagger \theta^* + \xi_i$ for $\xi_i\sim\calN(0,\sigma^2)$ for some fixed $\sigma$.
    
    Fix $\rho\ge 0$ and $\beta\in(0,1)$.
    Consider running Algorithm~\ref{alg:DPGD}, i.e., $\calA(\bfX,\bfy; \gamma, \lambda, \eta, T, 0)$ with step size $\eta= \frac 1 4$, initial point $\theta_0=0$, and, 
    for some absolute constant $c$,
    \begin{align}
        T &= c \log \frac{n\rho}{p}, \quad \lambda^2 = \frac{2T\gamma^2}{\rho n^2}, \text{ and}\\
        \gamma &= c \sigma \sqrt{p}\log^3 \bracket{\frac{nT}{\beta}}. 
    \end{align}
    Recall $\hat\theta$ the OLS solution.
    If  $n \ge c\bracket{p + \sqrt{p}\log^4 \rho/\beta}$,
    then with probability at least $1-\beta$ Algorithm~\ref{alg:DPGD} returns a final iterate $\theta_T$ such that, for some constant $c'$,
    \begin{align}
        \norm{\hat\theta - \theta_T}\le c'\ln^4(n \rho/\beta p)\cdot \frac{\sigma p }{\sqrt{\rho}n}.
    \end{align}
    
}

We combine this statement with Lemma~\ref{lem:no_clipping}, which says that clipping is unlikely under the \noclippingcondition, and Lemma~\ref{lem:distribution_of_DPGD_no_clipping}, which characterizes the output distribution of DP-GD when there is no clipping.
We prove this theorem in Appendix~\ref{sec:proofs}.
\begin{thm}[Main Accuracy Claim]\label{thm:main_accuracy_claim}
    \mainAccuracyClaim
\end{thm}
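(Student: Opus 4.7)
The plan is to chain together the lemmas in the preceding section in a straightforward way: first establish that Gaussian-generated data typically satisfies the \noclippingcondition, then use that to conclude DP-GD does not clip, then invoke the exact distributional characterization of unclipped noisy gradient descent, and finally bound the two resulting error contributions (bias and Gaussian noise).

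First, I would apply Lemma~\ref{lem:assumption_holds} with $\eta=1/4$ to conclude that, with probability at least $1-\beta/3$ over the draw of $(\bfX,\bfy)$, the data satisfies the \noclippingcondition with parameter $\multiplier = 12\ln^{1.5}(5nT/\beta)$, provided $n\ge c(p+\ln(1/\beta))$. Next I would check the two hyperparameter hypotheses of Lemma~\ref{lem:no_clipping}: the condition $\gamma \ge 4\multiplier^2 \sigma\sqrt{p}$ is satisfied by our choice $\gamma = c\sigma\sqrt{p}\log^3(nT/\beta)$ for large enough $c$, since $\multiplier^2 = O(\log^3)$. For the condition $\gamma/(\eta\lambda) \ge 64\multiplier^2 p\sqrt{\ln(2nT/\beta)}$, substituting $\lambda = \gamma\sqrt{2T/(\rho n^2)}$ simplifies the left side to $n/(\eta\sqrt{2T/\rho})$, so the requirement becomes a lower bound on $n$ of the form $n \ge \mathrm{poly}\log \cdot p/\sqrt{\rho}$, which is absorbed by the hypothesis on $n$ (after adjusting the constant and checking the polylog factors). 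Lemma~\ref{lem:no_clipping} then yields that, conditional on the \noclippingcondition, no gradient is clipped with probability at least $1-\beta/3$.

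Having established no clipping, I would use Lemma~\ref{lem:couple_DPGD_noclipping} to couple the run of Algorithm~\ref{alg:DPGD} with an unclipped run whose output, by Lemma~\ref{lem:distribution_of_DPGD_no_clipping} with $\theta_0 = 0$, has the form
\begin{align}
    \theta_T = \hat\theta - (\I - \eta\cov)^T \hat\theta + \eta\cdot \noise_T',
    \quad \noise_T' \sim \calN(0,\lambda^2 \bfA^{(T)}).
\end{align}
Then I would bound $\norm{\theta_T - \hat\theta}$ via the triangle inequality into a \emph{bias} and a \emph{noise} term. For the bias, Condition~\ref{condition:spectrum} gives $\norm{\I-\eta\cov}\le 7/8$, so $\norm{(\I-\eta\cov)^T \hat\theta} \le (7/8)^T \norm{\hat\theta}$; since $\norm{\hat\theta} \le \norm{\theta^*} + O(\sigma\sqrt{p/n}) = O(1)$ (using standard control of the OLS estimator, e.g., Claim~\ref{claim:random_design}), choosing $T = c\log(n\rho/p)$ with $c$ sufficiently large makes this term at most $\sigma p/(\sqrt\rho n)$. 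For the noise term, $\bfA^{(T)} = (\I-\bfD)^{-1}(\I-\bfD^T)$ with $\bfD = (\I-\eta\cov)^2$ has operator norm $O(1)$ (since $\norm{\bfD}\le (7/8)^2$) and trace $O(p)$, so a standard Gaussian tail bound gives $\norm{\noise_T'}\le O(\lambda \sqrt{p\log(1/\beta)})$ with probability $1-\beta/3$. Substituting $\lambda^2 = 2T\gamma^2/(\rho n^2)$ with our choices of $T$ and $\gamma$ yields $\eta\norm{\noise_T'}\lesssim \sigma p \log^{4}(n\rho/\beta p)/(\sqrt\rho n)$.

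A union bound over the three failure events completes the proof. The main obstacle is not any single calculation but rather bookkeeping all polylog factors so that the hypotheses on $n$, $\gamma$, $\lambda$, and $T$ are mutually consistent and absorbed into the stated constant $c$; in particular I would need to verify that the requirement on $n$ arising from Lemma~\ref{lem:no_clipping} (which involves $\sqrt{T/\rho}\cdot \multiplier^2 \cdot p\cdot \sqrt{\log(nT/\beta)}$) is dominated by the stated sample complexity $n \ge c(p+\sqrt{p}\log^4(\rho/\beta))$ after the polylog factors are counted, and that the final $\ln^4$ exponent is indeed sufficient to cover the combined $\multiplier^2$, $\sqrt{T}$, and Gaussian-tail $\sqrt{\log(1/\beta)}$ contributions.
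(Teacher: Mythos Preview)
Your proposal is correct and follows essentially the same route as the paper: invoke Lemma~\ref{lem:assumption_holds} to establish Condition~\ref{condition:main}, then Lemma~\ref{lem:no_clipping} to rule out clipping, couple via Lemma~\ref{lem:couple_DPGD_noclipping}, read off the iterate distribution from Lemma~\ref{lem:distribution_of_DPGD_no_clipping}, and split $\norm{\theta_T-\hat\theta}$ into bias and noise.

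The one place you diverge from the paper is in bounding $\norm{\hat\theta}$ for the bias term. You appeal to Claim~\ref{claim:random_design} to get $\norm{\hat\theta-\theta^*}=O(\sigma\sqrt{p/n})$, hence $\norm{\hat\theta}=O(1)$. The paper instead stays inside the already-established \noclippingcondition and uses the crude deterministic bound $\norm{\hat\theta-\theta^*}=\norm{\tfrac1n\sum_j\cov^{-1}\xi_j\bfx_j}\le 2\multiplier^2\sigma\sqrt{p}$, avoiding a fourth high-probability event. Your route gives a sharper intermediate bound but costs you an extra event in the union bound (you budgeted three, but Claim~\ref{claim:random_design} is a fourth); the paper's route is coarser but self-contained once Condition~\ref{condition:main} holds. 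Either way the bias is killed by $(7/8)^T$, so the difference is cosmetic. Your caution about the polylog bookkeeping around the $\gamma/(\eta\lambda)$ hypothesis of Lemma~\ref{lem:no_clipping} is well placed and mirrors exactly what the paper does.
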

Recall from Lemma~\ref{lem:privacy} that this setting of $\lambda$ is exactly what we need to achieve $\rho$-zCDP for Algorithm~\ref{alg:DPGD}.

\subsection{Characterizing the Effect of Clipping}\label{sec:effect_of_clipping}

In  Sections~\ref{sec:gradient_analysis} and \ref{sec:cis_guarantees}, we analyze DP-GD when there is no clipping.
However, the optimization problem remains well-specified, even under significant clipping

\citet{song2020characterizing} showed that for generalized linear losses, the post-clipping gradients correspond to a different convex loss, which for linear regression relates to the well-studied \emph{Huber loss}. For parameter $B>0$, define
$$
    \ell_B(\theta;\bfx,y) = \begin{cases}
        \frac{1}{2} (y_i-\ip{\bfx_i}{\theta})^2, \quad \text{if $\abs{y_i-\ip{\bfx_i}{\theta}}\le B$} \\
        B\bracket{\abs{y_i-\ip{\bfx_i}{\theta}} -\frac 1 2 B}, \quad \text{otherwise.}
    \end{cases}
$$
In our setting, the individual loss $\frac 1 2 (y_i-\ip{\bfx_i}{\theta})^2$ after clipping corresponds to the Huber loss with per-datum parameter $B_i=\frac{\gamma}{\norm{\bfx_i}}$. 
Compare with \citet{avella2021differentially}, who perform private gradient descent with a loss similar to $\ell_B(\theta;\bfx,y)\cdot \min\left\{1, \frac{2}{\norm{\bfx}^2}\right\}$ for fixed $B$.

\section{Constructing Confidence Intervals} \label{sec:ci}

In this section we present three methods for per-coordinate confidence intervals and provide coverage guarantees for two.

\subsection{Methods for Confidence Intervals}

Each construction creates a list $\theta^{(1)},\ldots,\theta^{(m)}$ of parameter estimates and computes the sample mean
$\bar\theta=\frac 1 m \sum_{\ell=1}^m \theta^{(\ell)}$ and, for every $j\in [p]$, the sample variance $\hat\sigma_j^2=\frac{1}{m-1}\sum_{\ell=1}^m(\theta_j^{(\ell)}-\bar\theta_j)^2$.
The confidence interval is constructed as if the iterates came from a Gaussian distribution with unknown mean and variance:
\begin{align}
        \bar\theta_j \pm t_{\alpha/2,m-1}\cdot \frac{\hat\sigma_j}{\sqrt{m}}, \label{eq:CI_def}
\end{align}
where $t_{\alpha,m-1}$ denotes the $\alpha$-th percentile of the Student's $t$ distribution with $m-1$ degrees of freedom.

The methods, then, differ in how they produce the estimates.

\paragraph{Independent Runs} 
We run Algorithm \ref{alg:DPGD} $m$ times independently and take $\theta^{(\ell)}$ to be the final iterate of the $\ell$-th run.

This estimator is simple and, since the estimates are independent, easy to analyze.
However,  this method requires paying for $m$ burn-in periods. 

\paragraph{Checkpoints} 
We run Algorithm~\ref{alg:DPGD} for $mT$ time steps and take $\theta^{(\ell)}$ to be the $\ell T$-th iterate.

Our theorems show that 
if the checkpoints are sufficiently separated then they are essentially independent. 
While this approach pays the burn-in cost only once, it disregards information by using only a subset of the iterates. 

\paragraph{All Iterates/Batched Means} The empirical variance of batched means from a single run of $mT$ time steps.  
Formally, we 
separate the iterates of Algorithm~\ref{alg:DPGD} into $m$ disjoint batches each of length $T$ and set $\theta^{(\ell)}$ to the empirical mean of the $\ell$-th batch. 

This approach may make better use of the privacy budget but
poses practical challenges. The batch size needs to be large relative to the autocorrelation, but we also require  several batches (as a rule of thumb, at least $10$). 

In the next subsection, we provide formal guarantees for the first two methods. We note that our experiments use methods that are slightly more practical but less amenable to analysis, for instance replacing ``final iterate'' with an average of the final few iterates. For further discussion of these details, see Section~\ref{sec:simulation} and Appendix~\ref{sec:more_experiments}.

\subsection{Formal Guarantees}\label{sec:cis_guarantees}

To highlight the relevant aspects, in this section we state guarantees for DP-GD without clipping and under the assumption that the input data satisfies the \noclippingcondition.
We can replace these restrictions with a distributional assumption, as in  Section~\ref{sec:gradient_analysis}.

\begin{thm}[Coverage]\label{thm:CI_coverage}
    Fix a data set $(\bfX,\bfy)$, step size $\eta$, and noise scale $\lambda$.
    Fix $\alpha,\beta\in(0,1)$ and integers $m$, $T$.
    Assume the data satisfies Condition \ref{condition:main} with values $(\sigma,\eta,c_0,mT)$.
    Let $\hat\theta$ be the OLS solution.
    
    There exists a constant $c$ such that, if $T\ge c \log \frac{\sigma m p}{\eta \lambda \beta}$, then Equation~\eqref{eq:CI_def} is a $1-\alpha-\beta$ confidence interval\footnote{That is, with probability at least $1-\alpha-\beta$ the interval contains the parameter of interest.} for $\hat\theta_j$ when the parameter estimates $\{\theta^{(\ell)}\}_{\ell=1}^m$ are produced in either of the following ways:
    \begin{itemize}
        \item Independent Runs: repeat DP-GD $m$ times independently, each with no clipping and $T$ time steps.
        Let $\theta^{(\ell)}$ be the final iterate of each run.
        \item Checkpoints: run DP-GD with no clipping for $mT$ steps.
        Let $\theta^{(\ell)}$ be the $\ell T$-th iterate.
    \end{itemize}
\end{thm}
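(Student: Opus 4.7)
The plan is to apply Lemma~\ref{lem:distribution_of_DPGD_no_clipping} to express each iterate exactly as $\hat\theta$ plus a deterministic bias plus a Gaussian noise term, and then argue that the two constructions each produce (nearly) i.i.d.\ Gaussian coordinate samples so that the textbook Student's $t$ interval gives coverage close to $1-\alpha$. The two quantities that must be controlled are (a) the deterministic bias $(\I-\eta\cov)^t\hat\theta$ at each evaluated iterate, and (b) the residual correlations between checkpoints. Both shrink geometrically as $T$ grows, because Condition~\ref{condition:spectrum} ensures $\norm{\I-\eta\cov}\le 7/8$; this is exactly what forces the requirement $T\gtrsim \log(\sigma m p/(\eta\lambda\beta))$.

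For \emph{Independent Runs}, Lemma~\ref{lem:distribution_of_DPGD_no_clipping} with $\theta_0=0$ gives $\theta^{(\ell)}=\hat\theta+\bfb+\eta\bfz^{(\ell)}$, where the bias $\bfb=-(\I-\eta\cov)^T\hat\theta$ is deterministic and $\bfz^{(\ell)}\sim\calN(0,\lambda^2\bfA^{(T)})$ are i.i.d.\ across $\ell$. Coordinatewise, the $\theta^{(\ell)}_j$ are i.i.d.\ $\calN(\hat\theta_j+b_j,\sigma_0^2)$ with $\sigma_0^2=\eta^2\lambda^2\bfA^{(T)}_{jj}$, so Equation~\eqref{eq:CI_def} has exact coverage $1-\alpha$ of $\hat\theta_j+b_j$. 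Because $\bar\theta_j$ is shift-equivariant and $\hat\sigma_j^2$ is shift-invariant, the probability of covering $\hat\theta_j$ itself differs from $1-\alpha$ by a quantity controlled by $|b_j|/(\sigma_0/\sqrt m)$. Condition~\ref{condition:spectrum} yields $\sigma_0^2\ge \eta^2\lambda^2$ (since $\bfA^{(T)}=\sum_{k=0}^{T-1}\bfD^k\succeq \I$) and $\norm{\bfb}\le (7/8)^T\norm{\hat\theta}$, while $\norm{\hat\theta}$ is bounded by a polynomial in $\sigma$ and $\sqrt p$ using Conditions~\ref{condition:spectrum}--\ref{condition:true_param_direction} applied to the OLS identity $\hat\theta=\theta+(\bfX^\dagger\bfX)^{-1}\bfX^\dagger(\bfy-\bfX\theta)$. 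The stated choice of $T$ then drives $|b_j|/(\sigma_0/\sqrt m)\le \beta$, degrading coverage by at most an additive $\beta$ via a short Student's $t$ density estimate.

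For \emph{Checkpoints}, unrolling the DP-GD recursion as in the proof of Lemma~\ref{lem:distribution_of_DPGD_no_clipping} and regrouping the noise at block boundaries gives the one-step relation
\begin{align}
\bfz'_{\ell T} \;=\; (\I-\eta\cov)^T\,\bfz'_{(\ell-1)T} + \bfw^{(\ell)},
\end{align}
where $\bfw^{(\ell)}$ is assembled from the $T$ fresh Gaussian vectors injected between checkpoints $\ell-1$ and $\ell$ and is therefore independent across $\ell$. The plan is to couple $(\theta^{(1)},\ldots,\theta^{(m)})$ to an i.i.d.\ family $(\tilde\theta^{(1)},\ldots,\tilde\theta^{(m)})$ drawn from the stationary Gaussian $\calN(\hat\theta,\eta^2\lambda^2(\I-\bfD)^{-1})$ by sharing the $\bfw^{(\ell)}$ and adding an independent Gaussian ``top-up'' with covariance $\lambda^2(\I-\bfD)^{-1}\bfD^T$, which exactly reconstitutes the stationary distribution. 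By Condition~\ref{condition:spectrum}, the coupling residual $\theta^{(\ell)}-\tilde\theta^{(\ell)}$ has norm dominated by $(7/8)^T\cdot\norm{\bfz'_{(\ell-1)T}}$ plus the bias; a standard Gaussian tail bound (using $\bfA^{(\ell T)}\preceq (\I-\bfD)^{-1}$) and a union bound over $\ell\in[m]$ and coordinate $j$ ensure each coordinate residual is at most $\beta\sigma_0/\sqrt m$ for the stated $T$. The independent-runs argument then applies to $\tilde\theta^{(1)},\ldots,\tilde\theta^{(m)}$ and the coupling error is absorbed into the additive $\beta$.

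The main obstacle will be propagating the coupling error into the sample variance $\hat\sigma_j^2$ rather than just the sample mean: since $\hat\sigma_j^2$ is quadratic in the iterates, naive perturbation bounds can blow up. I would handle this by writing $\hat\sigma_j^2(\theta^{(\cdot)})=\hat\sigma_j^2(\tilde\theta^{(\cdot)})+2\cdot(\text{cross term})+(\text{squared coupling error})$, bounding the cross term via Cauchy–Schwarz together with the geometric decay above, and pairing this with a high-probability lower bound on $\hat\sigma_j^2(\tilde\theta^{(\cdot)})$ from standard $\chi^2$ concentration. With this in hand, the Student's $t$ interval built from the real checkpoints has half-width within a $(1\pm O(\beta))$ factor of that from the coupled i.i.d.\ iterates, and combining with the bias analysis from the first case yields the claimed $1-\alpha-\beta$ coverage.
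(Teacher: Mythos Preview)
Your approach is sound and can be completed, but it takes a substantially different route from the paper. The paper never touches sample statistics directly: it views the $m$ estimates jointly as a single $mp$-dimensional Gaussian, writes down its mean and covariance in each of the three cases (``ideal'' i.i.d.\ draws from the stationary law $\calN(\hat\theta,\eta^2\lambda^2(\I-\bfD)^{-1})$, independent runs, checkpoints), and then invokes the standard bound $\mathrm{TV}(\calN(\mu_1,\Sigma_1),\calN(\mu_2,\Sigma_2))\lesssim \|\mu_1-\mu_2\|_{\Sigma_1}+\|\Sigma_1^{-1/2}\Sigma_2\Sigma_1^{-1/2}-\I\|_F$. Both the mean difference (the bias terms $(\I-\eta\cov)^{\ell T}(\theta_0-\hat\theta)$) and the covariance difference (the diagonal blocks $\bfA^{(\infty)}-\bfA^{(\ell T)}=\bfA^{(\infty)}\bfD^{\ell T}$ and, for checkpoints, the off-diagonal blocks $(\I-\eta\cov)^{(\ell-k)T}\bfA^{(kT)}$) are $\mathrm{poly}(m,p)\cdot(7/8)^T$, so the required $T$ forces $\mathrm{TV}\le\beta$. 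The general fact that total variation bounds the change in the probability of \emph{any} event then transfers the exact $1-\alpha$ coverage from the ideal case in one line.

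Your argument instead handles the bias by a noncentral-$t$ density estimate and handles the checkpoint correlations by an explicit realization-level coupling followed by perturbation of $\bar\theta_j$ and $\hat\sigma_j$. This works, and your identification of the sample-variance perturbation as the delicate step is accurate (and the triangle inequality $|\hat\sigma_j(\theta)-\hat\sigma_j(\tilde\theta)|\le\hat\sigma_j(r)$, together with a $\chi^2$ lower tail for $\hat\sigma_j(\tilde\theta)$, suffices). The paper's TV argument buys you a much shorter proof: it sidesteps the $\chi^2$ lower bound, the density estimate, and all the conditioning on high-probability events, at the cost of one off-the-shelf Gaussian TV lemma. Your approach is more explicit about \emph{how} the bias and residual correlations degrade coverage, which is informative, but the bookkeeping is heavier and you must be careful that the $\chi^2$ lower-tail constant (which scales like $\beta^{1/(m-1)}$) still leaves the dependence of $T$ on $\beta$ logarithmic.
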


\section{Experiments}\label{sec:simulation}

We perform experiments to confirm and complement our theoretical results.
Unless otherwise mentioned, we generate data by drawing $\theta^*$ randomly from the unit sphere, $\bfx_i$ from $\mathcal{N}(0, \I)$, and $y_i = \bfx_i^\dagger \theta^{*} + \xi_i$, where $\xi_i \sim \mathcal{N}(0, 1)$. 
We set $p=10$, $\gamma=5\sqrt{p}$, and $\rho = 0.015$ unless otherwise stated.\footnote{This corresponds to an $(\epsilon, \delta)$-DP guarantee with $\epsilon=0.925$ and $\delta = 10^{-6}$, see Claim~\ref{claim:CDP_to_DP}.}
We include more details and results in Appendix~\ref{sec:more_experiments}.

\subsection{Error, Dimension, and a Bias-Variance Tradeoff}

We highlight how our algorithm's error depends linearly on $p$, in contrast to standard approaches that require $p^{3/2}$ examples.
Prior algorithms with formal accuracy analysis demonstrating this linear dependence limit experiments to modest ($p\approx 10$) dimensions \cite{varshney2022nearly,liu2023near}.
We show that our approach can achieve privacy ``for free:'' at reasonable sample sizes, the error from sampling dominates the error due to privacy.

We first explore the accuracy of DP-GD and how it depends on the dimension and sample size. 
Figure~\ref{fig:pn}, presented in the introduction, allows $p$ and $n$ to grow together with a fixed ratio (here, $n=100p$). We fix the total number of gradient iterations to $10$.  We report the $\ell_2$-error between the differentially private parameter estimates and the OLS estimates.
These results demonstrate that the error of DP-GD is constant when $p/n$ is constant. 
We compare with OLS and the well-known AdaSSP algorithm \cite{wang2018revisiting}.  
In contrast to the other two approaches, the error of AdaSSP grows with $p$ in this graph.
Its error depends on $\frac{p^{3/2}}{n}$, so in this plot the error grows as the square root of $p$.

Figure~\ref{fig:cost_of_privacy_low_privacy} fixes the dimension and allows the sample size to grow.
We see that the error due to privacy noise (i.e.,  $\norm{\theta_T-\hat\theta}$) falls off with $1/n$, while the error due to sampling (i.e., $\norm{\hat\theta-\theta^*}$) falls off with $1/\sqrt{n}$.
At larger sample sizes, the non-private error dominates.
This experiment has a larger privacy budget to highlight the effect; see Figure~\ref{fig:cost_of_privacy_high_privacy} in Appendix~\ref{sec:more_experiments} for additional results.

\begin{figure}[h]
\centering
\includegraphics[width=0.4\textwidth]{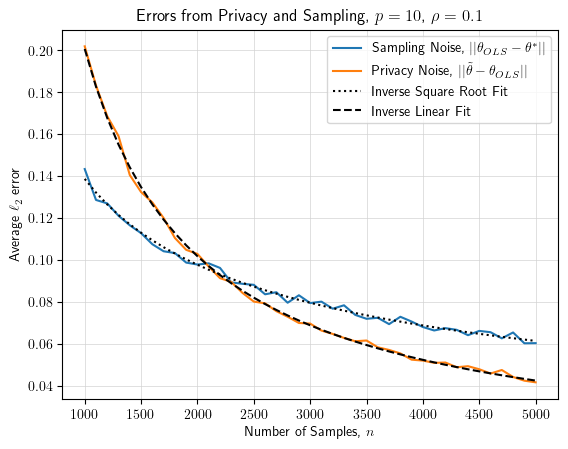}
\caption{The ``cost of privacy:'' fixing the dimension and allowing the sample size to grow, we see how the error due to sampling dominates the error from privacy. Each point is averaged over 100 independent trials.}
\label{fig:cost_of_privacy_low_privacy}
\end{figure}

The clipping threshold in these results was fixed to $\gamma=5\sqrt{p}$, as guided by our theory that no clipping occurs when $\gamma \gtrsim \sqrt{p}$.
Figure~\ref{fig:sqrtp} validates this theoretical result across dimensions.
We then move deliberately beyond this no-clipping regime in Figure~\ref{fig:bias_variance}, revealing a  bias-variance tradeoff and highlighting how the lowest error may occur under significant clipping.

\begin{figure}[h]
\centering
\includegraphics[width=0.4\textwidth]{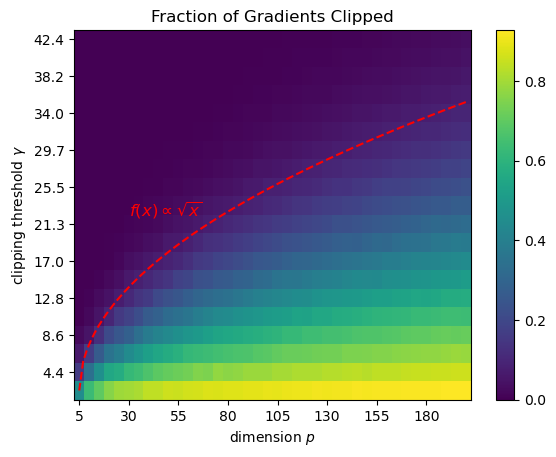}
\caption{
We see the fraction of gradients clipped over a grid on dimension and clipping threshold.
As the theory predicts, we see low clipping with $\gamma=\Omega(\sqrt{p})$.}
\label{fig:sqrtp}
\end{figure}

\begin{figure}[h]
\centering
\includegraphics[width=0.4\textwidth]{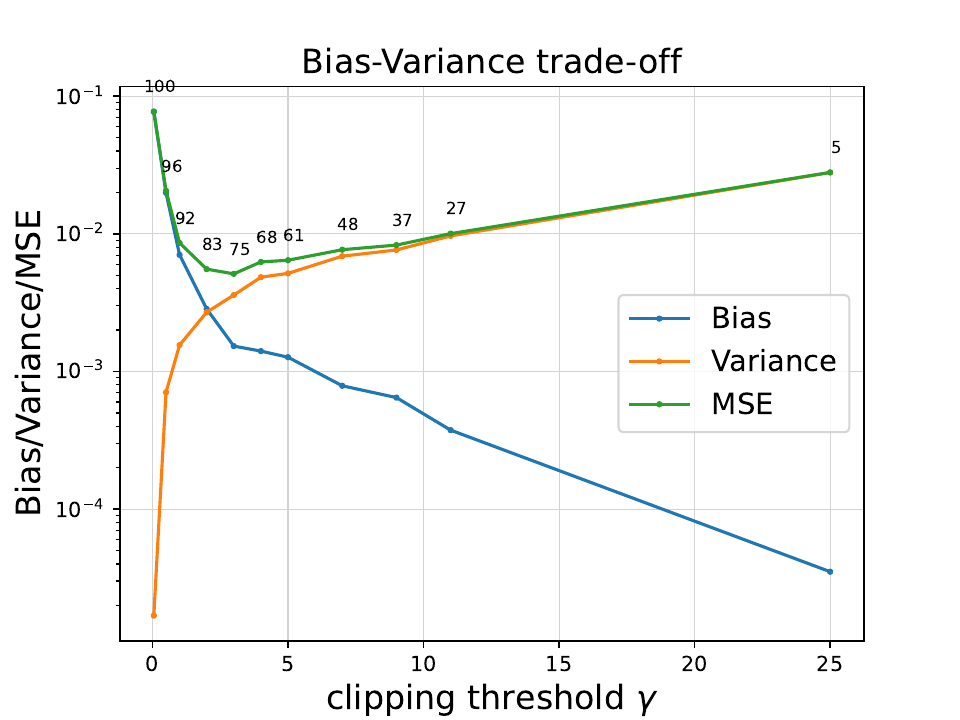}
\caption{We plot the error, (squared) bias, and variance of DP-GD as we change the clipping threshold. 
The numeric labels give the percentage of all gradients clipped. 
Low thresholds cause high clipping and bias, while high thresholds have little clipping but high variance.}
\label{fig:bias_variance}
\end{figure}

\subsection{Confidence Intervals}\label{sec:CI-expt}

Finally, we evaluate the three confidence interval constructions from Section~\ref{sec:ci}, comparing their empirical coverage and interval width.
We vary the total number of gradient iterations to clarify the regimes where each method performs well.
Using the notation from Section~\ref{sec:ci}, we use $m=10$ runs/checkpoints/batches and vary $T$.
We place the total number of gradient updates (i.e., the product $mT$) on the $x$ axis.
For more details, see Appendix~\ref{sec:more_experiments}.

Figure \ref{fig:coverage} shows the constructions' coverage properties as a function of the total number of gradients. Figure \ref{fig:length} shows the average length of the confidence intervals. We see that all  produce valid confidence intervals 
but the relative efficiency differs. With fewer iterations  the burn-in period is proportionally longer, so running the algorithm multiple times yields wider confidence intervals. 
In contrast, running the algorithm longer induces more auto-correlation between the iterates which means a larger batch-size is required to obtain valid intervals from the batched means approach. The checkpoints approach has a poor dependence on the number of iterations since a large fraction are disregarded.%

\begin{figure}[h]
\centering
\includegraphics[width=0.4\textwidth]{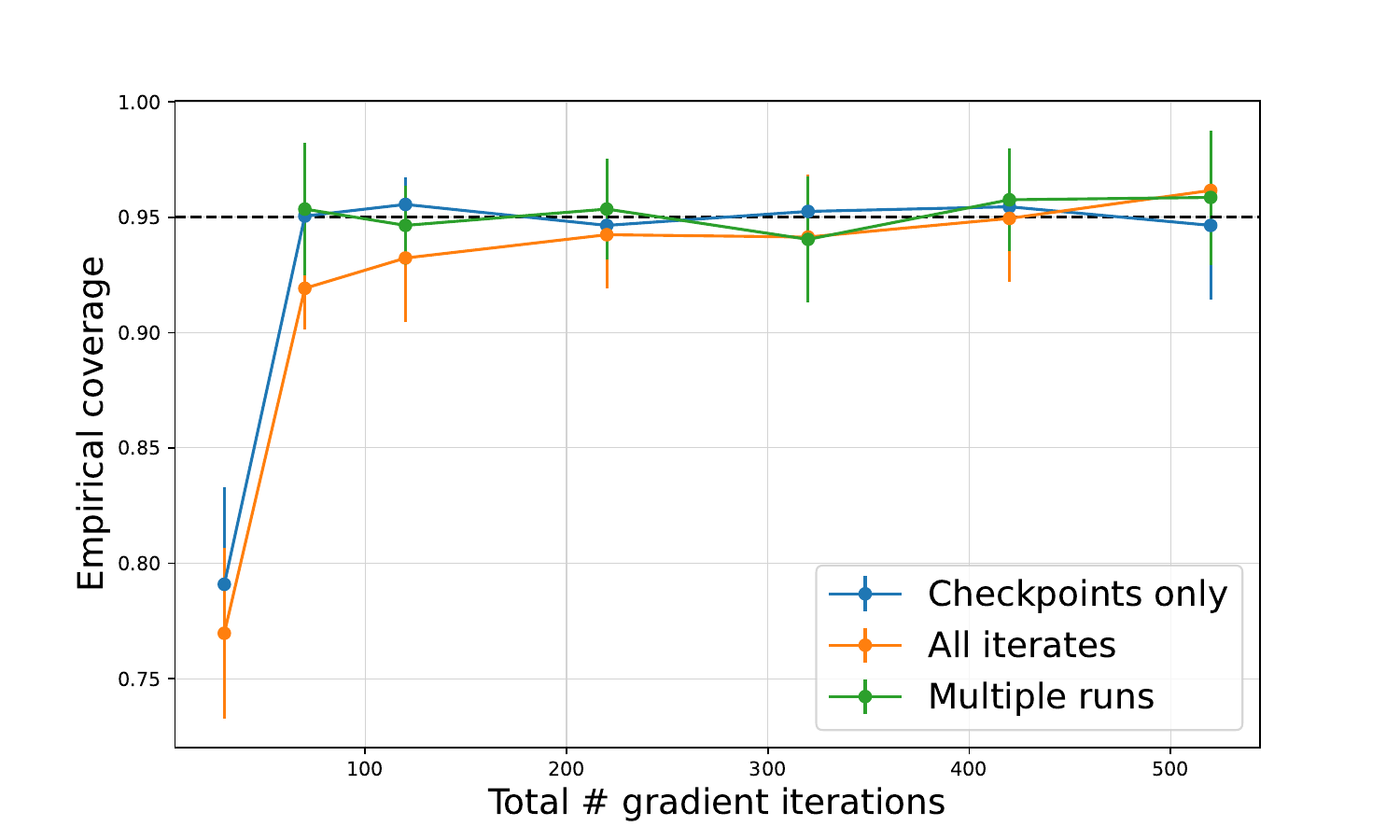}
\caption{Average empirical coverage across co-ordinates over 100 algorithm runs. Error bars reflect the $95$-percentiles of coverage across coordinates.}
\label{fig:coverage}
\end{figure}

\begin{figure}[H]
\centering
\includegraphics[width=0.4\textwidth]{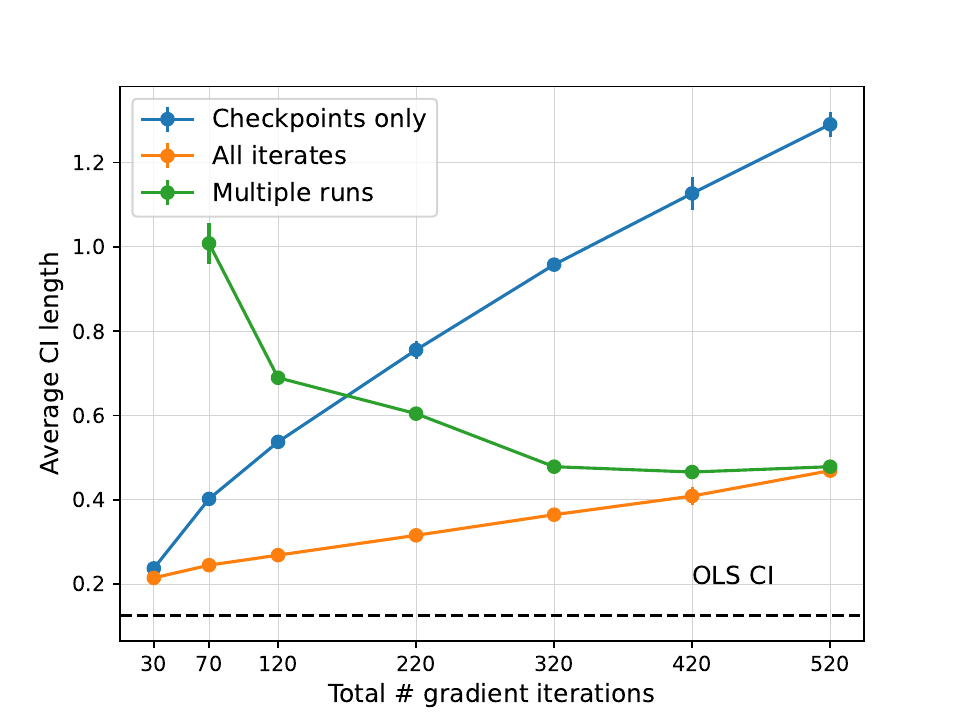}
\caption{Average confidence interval length for each construction, as the total number of gradient iterations increases. Error bars reflect $95$-th percentiles. For comparison, the dashed line shows the width of standard nonprivate OLS confidence intervals for the population quantity.}
\label{fig:length}
\end{figure}

\bibliography{reference}
\bibliographystyle{icml2024}

\newpage
\appendix
\onecolumn

\section{Preliminaries}
\label{sec:preliminaries}

\subsection{Differential Privacy}
\label{sec:DP_prelims}

We present definitions and basic facts about differential privacy.

\begin{defn}[Approximate Differential Privacy]
    For $\eps\ge 1$, and $\delta\in(0,1)$, a mechanism
    A mechanism $M:\calX^n\to \calY$ satisfies $(\eps,\delta)$-differential privacy if, for all $x,x'\in\calX^n$ that differ in one entry and all measurable events $E\subseteq \calY$, 
    \begin{align}
        \Pr[M(x)\in E]\le e^{\eps} \Pr[M(x')\in E]+\delta.
    \end{align}
\end{defn}
    
The guarantees we present in this work are in terms of \emph{(zero) concentrated differential privacy} \cite{dwork2016concentrated,bun2016concentrated}, a variant of differential privacy that allows us to cleanly express the privacy guarantees of DP-GD.

\begin{defn}[$\rho$-zCDP]
    A mechanism $M:\calX^n \to \calY$ satisfies \emph{$\rho$-zero concentrated differential privacy} ($\rho$-zCDP) if for all $x,x'\in\calX$ that differ in one entry and all $\alpha\in (1,\infty)$, we have $D_{\alpha}(M(x) \lVert M(x'))\le \rho \alpha$.
\end{defn}
\begin{defn}[R{\'e}nyi Divergence]
    For two distributions $p,w$ and $\alpha\ge 1$, the \emph{$\alpha$-R{\'e}nyi divergence} is $D_{\alpha}(p \lVert q) = \frac{1}{\alpha-1}\log \E_{y\sim q}\sqbracket{\bracket{\frac{p(y)}{q(y)}}^{\alpha}}$.
\end{defn}

The privacy guarantees for DP-GD follow composition plus the privacy guarantee for the standard multivariate Gaussian mechanism. 
zCDP allows us to cleanly express both.
\begin{claim}[Composition]\label{claim:privacy_composition}
    Suppose mechanism $M$ satisfies $\rho$-zCDP and mechanism $M'$ satisfies $\rho'$-zCDP.
    Then $(M,M')$ satisfies $(\rho+\rho')$-zCDP.
\end{claim}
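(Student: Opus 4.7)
The plan is to unfold the definition of $\rho$-zCDP and reduce the claim to the standard additivity (or, in the adaptive case, chain rule) for Rényi divergence. Fix a pair of neighboring inputs $x,x' \in \calX^n$ and a parameter $\alpha \in (1,\infty)$; our goal is to show
\[ D_\alpha\bigl((M,M')(x) \,\|\, (M,M')(x')\bigr) \le (\rho+\rho')\alpha. \]
Once this inequality is established for every such $x,x'$ and $\alpha$, the claim follows directly from the definition of zCDP.

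First I would handle the non-adaptive case, where $M$ and $M'$ use independent internal randomness. Then the joint output distribution factors as a product, and a short calculation from the definition of Rényi divergence gives the tensorization identity $D_\alpha(p_1 \otimes p_2 \,\|\, q_1 \otimes q_2) = D_\alpha(p_1 \,\|\, q_1) + D_\alpha(p_2 \,\|\, q_2)$. Applying the zCDP guarantees of $M$ and $M'$ bounds each summand by $\rho\alpha$ and $\rho'\alpha$, respectively, yielding the desired $(\rho+\rho')\alpha$.

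For the general (adaptive) case, in which $M'$ may take $M(x)$'s output as auxiliary input, I would invoke the chain rule for Rényi divergence: writing $p(y,z) = p_Y(y)\, p_{Z \mid Y}(z \mid y)$ and similarly for $q$,
\[ D_\alpha(p \,\|\, q) \le D_\alpha(p_Y \,\|\, q_Y) + \sup_{y} D_\alpha\bigl(p_{Z\mid Y=y} \,\|\, q_{Z\mid Y=y}\bigr). \]
The first term is at most $\rho\alpha$ by the zCDP guarantee for $M$. For the second term, fix any realized output $y$; then the conditional distributions are exactly the output distributions of $M'$ (with $y$ as auxiliary input) on the neighboring inputs $x$ and $x'$, so by $\rho'$-zCDP of $M'$ each is bounded by $\rho'\alpha$. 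Summing yields the bound.

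The only real substance is the chain rule/tensorization for Rényi divergence; this is a standard calculation that follows from the definition together with Hölder's inequality applied to the conditional density ratios. Given that tool, the proof is essentially bookkeeping, and I expect it to occupy only a few lines. The subtlety worth being explicit about is the adaptive versus non-adaptive distinction, since DP-GD composes $T$ Gaussian-mechanism releases where each step's noise is independent of previous iterates but the query (the gradient) depends on prior outputs---so the adaptive chain rule is the version actually needed by Lemma~\ref{lem:privacy}.
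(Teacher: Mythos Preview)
The paper does not actually prove Claim~\ref{claim:privacy_composition}; it is stated in the preliminaries as a standard fact about zCDP, with the surrounding references to \cite{dwork2016concentrated,bun2016concentrated} serving as citation. Your proposal is correct and is precisely the standard argument (tensorization/chain rule for R\'enyi divergence) from Bun and Steinke's original zCDP paper, so there is nothing to compare---you have supplied the proof the paper omits.
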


\begin{claim}[Gaussian Mechanism]\label{claim:gaussian_mechanism}
    Let $q:\calX^n\to \calY$ satisfy, for all $x,x'\in\calX^n$ that differ in one entry, $\norm{q(x)-q(x')}_2\le \Delta$.
    Then, for any $\lambda>0$, the mechanism $M(x)=\calN(q(x), \lambda^2\I)$ is $\frac{\Delta^2}{2\lambda^2}$-zCDP. 
\end{claim}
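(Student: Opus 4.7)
The plan is to bound, for every pair of neighboring datasets $x, x' \in \calX^n$ and every $\alpha > 1$, the R{\'e}nyi divergence $D_\alpha(M(x) \,\|\, M(x'))$ by $\alpha \cdot \Delta^2/(2\lambda^2)$, which matches the definition of $\frac{\Delta^2}{2\lambda^2}$-zCDP. Since $M(x)$ and $M(x')$ are isotropic Gaussians on $\calY$ differing only in their means, the whole proof reduces to a closed-form R{\'e}nyi divergence calculation for two shifted isotropic Gaussians, followed by an application of the sensitivity hypothesis.

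Concretely, fix neighboring $x, x'$ and write $\mu = q(x)$, $\mu' = q(x')$, and $u = \mu - \mu'$. First I would expand the log-density ratio of $\calN(\mu, \lambda^2 \I)$ against $\calN(\mu', \lambda^2 \I)$: the normalizing constants cancel and the quadratic-in-$y$ terms cancel, leaving an expression that is linear in $y$ plus constants. Raising this to the $\alpha$-th power and taking expectation under $y \sim \calN(\mu', \lambda^2 \I)$ reduces to evaluating the moment-generating function of a one-dimensional Gaussian along the direction $u$. After the standard algebra this yields
\begin{equation}
    D_\alpha\bracket{\calN(\mu, \lambda^2 \I) \,\|\, \calN(\mu', \lambda^2 \I)} = \frac{\alpha \, \norm{\mu - \mu'}_2^2}{2 \lambda^2},
\end{equation}
after which the sensitivity hypothesis $\norm{q(x) - q(x')}_2 \le \Delta$ immediately delivers $D_\alpha(M(x) \,\|\, M(x')) \le \alpha \cdot \Delta^2 / (2\lambda^2)$ uniformly in $\alpha > 1$.

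The only real obstacle is bookkeeping inside the Gaussian moment-generating function step: the calculation naturally produces a factor of $\alpha(\alpha - 1)$ in the logarithm of the expectation, and one must verify this carefully so that after dividing by $\alpha - 1$ per the definition of $D_\alpha$ the correct coefficient $\alpha$ (rather than $\alpha^2$ or $1$) emerges. Beyond this single point of caution, everything is a routine Gaussian computation, so the proof is essentially mechanical.
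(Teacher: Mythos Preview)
Your proposal is correct and is precisely the standard derivation of the Gaussian mechanism's zCDP guarantee. Note, however, that the paper does not actually supply its own proof of this claim: it is stated in the preliminaries as a known fact (originating from \cite{bun2016concentrated}) and invoked without argument. So there is no paper proof to compare against; your computation of the closed-form R{\'e}nyi divergence between shifted isotropic Gaussians, followed by the sensitivity bound, is exactly the canonical argument one would give.
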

We apply Claim~\ref{claim:gaussian_mechanism} to the average of gradients, where each gradient norm is clipped to $\gamma$.
The straightforward sensitivity analysis shows that we can set $\Delta = \frac{2\gamma}{n}$.

A mechanism satisfying $\rho$-zCDP also satisfies $(\eps,\delta)$-differential privacy.
In fact, it provides a continuum of such guarantees: one for every $\delta\in(0,1)$.
\begin{claim}\label{claim:CDP_to_DP}
    If $M$ satisfies $\rho$-zCDP, then $M$ satisfies $(\rho + 2\sqrt{\rho \log 1/\delta}, \delta)$-differential privacy for any $\delta>0$.
\end{claim}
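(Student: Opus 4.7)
The plan is to follow the standard argument of \citet{bun2016concentrated} that converts a R\'enyi-divergence bound into a pointwise tail bound on the privacy loss, and then into an $(\varepsilon,\delta)$-DP guarantee. Fix neighboring data sets $x, x'$, let $p, q$ denote densities of $M(x), M(x')$ with respect to a common dominating measure, and define the \emph{privacy loss random variable} $Z = \log (p(Y)/q(Y))$ for $Y \sim M(x)$. The $\rho$-zCDP hypothesis translates directly into a bound on the moment generating function of $Z$: by the definition of R\'enyi divergence,
\begin{equation}
    \E\left[e^{(\alpha-1)Z}\right] = e^{(\alpha-1) D_\alpha(M(x) \lVert M(x'))} \le e^{(\alpha-1)\rho\alpha}
\end{equation}
for every $\alpha > 1$.

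Next I would apply Markov's inequality to the nonnegative random variable $e^{(\alpha-1)Z}$: for any $\varepsilon > 0$ and $\alpha > 1$,
\begin{equation}
    \Pr[Z > \varepsilon] \le e^{-(\alpha-1)\varepsilon}\,\E\left[e^{(\alpha-1)Z}\right] \le e^{(\alpha-1)(\rho\alpha - \varepsilon)}.
\end{equation}
Expanding, the exponent is $\rho\alpha^2 - (\rho+\varepsilon)\alpha + \varepsilon$, a convex quadratic in $\alpha$ minimized at $\alpha^{*} = (\rho+\varepsilon)/(2\rho)$ with minimum value $-(\varepsilon-\rho)^2/(4\rho)$. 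For $\varepsilon > \rho$---the regime we target---we have $\alpha^{*} > 1$, so this choice is admissible. Setting the minimum equal to $\log\delta$ and solving yields exactly $\varepsilon = \rho + 2\sqrt{\rho\log(1/\delta)}$, which matches the claim.

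Finally I would convert the tail bound $\Pr[Z > \varepsilon] \le \delta$ into $(\varepsilon,\delta)$-DP via the textbook identity
\begin{equation}
    \Pr[M(x) \in E] \le e^\varepsilon \Pr[M(x') \in E] + \Pr_{Y \sim M(x)}[Z(Y) > \varepsilon],
\end{equation}
which follows by splitting the event $\{M(x)\in E\}$ according to whether $Z(Y)\le \varepsilon$ and bounding $p(y)\le e^\varepsilon q(y)$ on the first part. Combined with the tail bound above, this gives the desired guarantee for every neighboring pair $x, x'$ and every measurable event $E$.

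There is no real obstacle here; the work is essentially bookkeeping. One must verify that the optimizer $\alpha^{*}$ lies strictly above $1$, which is immediate since for $\delta \in (0,1)$ the target $\varepsilon$ strictly exceeds $\rho$. The conversion from a privacy-loss tail bound to $(\varepsilon,\delta)$-DP is standard and uses only the definition of densities. No new ideas are required; the claim is stated only to recast the zCDP guarantee of Algorithm~\ref{alg:DPGD} (via Lemma~\ref{lem:privacy}) in the more commonly reported $(\varepsilon,\delta)$ form used, e.g., in Section~\ref{sec:simulation}.
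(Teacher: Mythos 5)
Your argument is correct and is precisely the canonical conversion from Bun and Steinke's zCDP paper (privacy-loss moment generating function bound, Markov's inequality, optimization over $\alpha$ at $\alpha^* = (\rho+\varepsilon)/(2\rho)$, then the standard tail-to-$(\varepsilon,\delta)$ conversion); the paper itself states Claim~\ref{claim:CDP_to_DP} without proof as a known fact from that reference. Nothing is missing, and your check that $\alpha^*>1$ in the relevant regime is the only non-bookkeeping step.
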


\subsection{Linear Algebra}

\begin{fact}[Matrix Geometric Series]\label{fact:matrix_geometric_series}
    Let $\bfT$ be an invertible matrix. 
    Then $\sum_{j=0}^{n-1} \bfT^n = (\I - \bfT)^{-1} (\I - \bfT^n)$.
\end{fact}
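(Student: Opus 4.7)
The plan is the standard telescoping argument for matrix geometric series, with one small subtlety about the hypothesis worth flagging. First I would set $S := \sum_{j=0}^{n-1} \bfT^j$ (treating the exponent ``$n$'' in the summand of the statement as a typo for ``$j$,'' since otherwise the sum is trivially $n \bfT^n$ and the identity is false in general). The strategy is to compute $(\I - \bfT) S$ in closed form and then invert $\I - \bfT$.

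Multiplying on the left by $\I - \bfT$ and using that $\bfT$ commutes with each of its own powers gives
\begin{align}
(\I - \bfT) S = \sum_{j=0}^{n-1} \bfT^j - \sum_{j=0}^{n-1} \bfT^{j+1} = \sum_{j=0}^{n-1} \bfT^j - \sum_{j=1}^{n} \bfT^j = \I - \bfT^n,
\end{align}
where all intermediate powers cancel in a telescoping sum. Left-multiplying by $(\I - \bfT)^{-1}$ then yields the claimed identity $S = (\I - \bfT)^{-1} (\I - \bfT^n)$.

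The one subtlety worth flagging is that the fact as written asks for invertibility of $\bfT$, whereas the argument actually requires invertibility of $\I - \bfT$. In the paper's applications, e.g., the proof of Lemma~\ref{lem:distribution_of_DPGD_no_clipping}, the matrix in question is $\bfD = (\I - \eta \cov)^2$, and the spectral bound $\|\I - \eta \cov\| \le 7/8$ guaranteed by Condition~\ref{condition:main} implies $\|\bfT\| < 1$, so $\I - \bfT$ is invertible via a Neumann series and the identity applies without issue. There is no genuine mathematical obstacle; once the right quantity is multiplied on the left, the entire argument is the two-line telescoping computation above.
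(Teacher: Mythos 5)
Your proof is correct and is the standard telescoping argument; the paper states this as a Fact without proof, so there is nothing to compare against. Your two flagged issues with the statement are both genuine: the summand should read $\bfT^j$ rather than $\bfT^n$, and the hypothesis that matters is invertibility of $\I - \bfT$ (not of $\bfT$), which in the paper's applications follows from the spectral bound $\norm{\I - \eta\cov} \le 7/8$ in Condition~\ref{condition:main}.
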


\subsection{Concentration Inequalities}

\begin{claim}\label{claim:concentration_univariate_gaussian}
    If $x\sim \calN(0,\sigma^2)$ for some $\sigma^2 >0$, then $\Pr\sqbracket{\abs{x}\ge \sigma \sqrt{2 \ln 2/\beta} }\le \beta$.
\end{claim}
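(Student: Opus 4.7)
The plan is to reduce to the standard normal and then apply the classical Chernoff method using the Gaussian moment generating function. Writing $Z = x/\sigma \sim \calN(0,1)$, the event $\{|x|\ge\sigma t\}$ equals $\{|Z|\ge t\}$, so it suffices to show $\Pr[|Z|\ge t]\le 2 e^{-t^2/2}$ and then solve $2 e^{-t^2/2}\le \beta$ for $t$, yielding $t = \sqrt{2\ln(2/\beta)}$.

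For the tail bound on $Z$, first I would handle one side: for any $\lambda>0$, Markov's inequality applied to $e^{\lambda Z}$ gives $\Pr[Z\ge t]\le e^{-\lambda t}\, \E[e^{\lambda Z}] = e^{-\lambda t + \lambda^2/2}$, using the closed form $\E[e^{\lambda Z}]= e^{\lambda^2/2}$ for a standard Gaussian. Optimizing over $\lambda$ by choosing $\lambda = t$ yields $\Pr[Z\ge t]\le e^{-t^2/2}$. By the symmetry of the Gaussian density, the same bound holds for $\Pr[Z\le -t]$, and a union bound gives $\Pr[|Z|\ge t]\le 2 e^{-t^2/2}$.

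Finally, substituting $t = \sqrt{2\ln(2/\beta)}$ gives $2 e^{-t^2/2} = 2\cdot e^{-\ln(2/\beta)} = 2\cdot (\beta/2) = \beta$, which completes the proof after rescaling back to the $\calN(0,\sigma^2)$ case.

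There is no real obstacle here: the result is a textbook sub-Gaussian tail bound, and every step (the MGF computation, the Chernoff optimization, the symmetry of the Gaussian, and the algebraic inversion to get the $\sqrt{2\ln(2/\beta)}$ threshold) is routine. If one wanted to avoid the Chernoff argument, an equally short alternative would be the direct integral estimate $\Pr[Z\ge t]\le \frac{1}{t\sqrt{2\pi}}e^{-t^2/2}\le \frac{1}{2}e^{-t^2/2}$ for $t\ge \sqrt{2/\pi}$, combined with a trivial check at small $t$, but the MGF route is cleaner and needs no case analysis.
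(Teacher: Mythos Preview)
Your proof is correct; it is the standard Chernoff/MGF argument for sub-Gaussian tails and every step is sound. The paper itself states this claim as a preliminary fact without proof, so there is no argument in the paper to compare against.
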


\begin{claim}\label{claim:concentration_of_gaussian_norm}
    Fix the number of dimensions $p$ and a PSD matrix $\Sigma$.
    Let $\bfx\sim \calN(0,\Sigma)$.
    For any $\beta\in(0,1)$, we have
    \begin{align}
        \Pr\sqbracket{\norm{\bfx} \ge \sqrt{\tr(\Sigma)} + \sqrt{2 \norm{\Sigma}\log 1/\beta}}\le \beta.
    \end{align}
\end{claim}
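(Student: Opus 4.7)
The plan is to reduce the claim to a concentration inequality for Lipschitz functions of a standard Gaussian vector. The main ingredient is the Gaussian concentration inequality (Borell--TIS): if $f : \R^p \to \R$ is $L$-Lipschitz with respect to the Euclidean metric and $\bfg \sim \calN(0,\I_p)$, then $\Pr[f(\bfg) \ge \E f(\bfg) + t] \le \exp(-t^2/(2 L^2))$ for every $t \ge 0$. This is the right hammer because the desired tail bound is exactly of sub-Gaussian type with variance proxy $\|\Sigma\|$.

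First I would write $\bfx = \Sigma^{1/2} \bfg$ for $\bfg \sim \calN(0,\I_p)$ and consider $f(\bfg) = \|\Sigma^{1/2} \bfg\|$. The reverse triangle inequality gives $|f(\bfg) - f(\bfh)| \le \|\Sigma^{1/2}(\bfg - \bfh)\| \le \|\Sigma^{1/2}\|_{\mathrm{op}} \, \|\bfg - \bfh\|$, so $f$ is $L$-Lipschitz with $L = \|\Sigma^{1/2}\|_{\mathrm{op}} = \sqrt{\|\Sigma\|}$. Plugging this $L$ into the Gaussian concentration inequality, setting the right-hand side equal to $\beta$, and solving for $t$ produces the deviation $t = \sqrt{2 \|\Sigma\| \log(1/\beta)}$.

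To finish I need to upper-bound the mean $\E f(\bfg) = \E \|\bfx\|$ by $\sqrt{\tr(\Sigma)}$. This follows from Jensen's inequality, since $\E\|\bfx\| \le \sqrt{\E\|\bfx\|^2}$, and the trace identity $\E \|\bfx\|^2 = \E[\bfg^\dagger \Sigma \bfg] = \tr(\Sigma)$. Combining the deviation bound with the mean bound gives the stated inequality.

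There is no real obstacle here; this is a textbook deployment of Gaussian concentration, with each ingredient (Borell--TIS, the operator-norm estimate of the Lipschitz constant, Jensen, and the trace identity) being standard and requiring no new argument. The only mild subtlety is that one should use the $L$-Lipschitz version of Borell--TIS (rather than, say, a $\chi^2$ tail bound for $\|\bfx\|^2$), so that the resulting bound scales with $\sqrt{\|\Sigma\|}$ rather than with $\|\Sigma\|$ or the Frobenius norm; this is what yields the clean, dimension-free second term in the claim.
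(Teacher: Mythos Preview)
Your proposal is correct and is precisely the standard argument for this inequality: represent $\bfx=\Sigma^{1/2}\bfg$, observe that $\bfg\mapsto\|\Sigma^{1/2}\bfg\|$ is $\sqrt{\|\Sigma\|}$-Lipschitz, apply Gaussian Lipschitz concentration, and bound the mean by $\sqrt{\tr(\Sigma)}$ via Jensen. The paper itself does not supply a proof of this claim; it is listed among the preliminary concentration inequalities and invoked as a known fact, so there is nothing further to compare.
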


\begin{claim}[Concentration of Covariance]
\label{clm:concentration_covariance}
    Fix $\beta\in(0,1)$.
    Draw independent $\bfx_1,\ldots,\bfx_n\sim\calN(0,\I)$ and let $Z = \frac 1 n \sum_{i=1}^n \bfx_i\bfx_i^\dagger$.
    There exists a constant $c$ such that, if $n\ge c (p + \log 1/\beta)$, then with probability at least $1-\beta$ we have 
    $\frac 1 2 \mathbb{I} \preceq Z \preceq 2\mathbb{I}$.
\end{claim}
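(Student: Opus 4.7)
The plan is to reduce the spectral bound on $Z$ to a standard concentration inequality for the extreme singular values of a Gaussian matrix. Let $\bfX \in \R^{n \times p}$ be the matrix whose $i$-th row is $\bfx_i^\dagger$, so that $Z = \tfrac{1}{n}\bfX^\dagger \bfX$ and the eigenvalues of $Z$ are the squared singular values of $\bfX/\sqrt{n}$. Thus $\tfrac{1}{2}\I \preceq Z \preceq 2\I$ is equivalent to every singular value of $\bfX/\sqrt{n}$ lying in the interval $[1/\sqrt{2}, \sqrt{2}]$.

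I would then invoke the Davidson--Szarek inequality (a consequence of Gordon's comparison inequality together with Gaussian concentration of $1$-Lipschitz functions): for any $t > 0$,
\begin{align}
\Pr\!\left[\sqrt{n} - \sqrt{p} - t \le \sigma_{\min}(\bfX) \le \sigma_{\max}(\bfX) \le \sqrt{n} + \sqrt{p} + t\right] \ge 1 - 2 e^{-t^2/2}.
\end{align}
Choosing $t = \sqrt{2 \log(2/\beta)}$ makes the failure probability at most $\beta$. On the good event, dividing through by $\sqrt{n}$ shows that every singular value of $\bfX/\sqrt{n}$ lies in $\bigl[\,1 - \sqrt{p/n} - t/\sqrt{n},\ 1 + \sqrt{p/n} + t/\sqrt{n}\,\bigr]$. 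It then suffices to pick the absolute constant $c$ in the hypothesis $n \ge c(p + \log(1/\beta))$ large enough that $\sqrt{p/n} + t/\sqrt{n} \le 1 - 1/\sqrt{2}$; this is the binding constraint, as the analogous requirement for the upper end ($\sqrt{2} - 1$) is looser. Squaring the resulting singular-value bounds delivers the eigenvalue bounds $\tfrac{1}{2} \le \lambda_i(Z) \le 2$.

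There is no real obstacle here: the statement is a textbook consequence of Gaussian singular-value concentration, and the only thing to verify is the choice of constant. If a self-contained derivation were preferred, I would instead take an $\tfrac{1}{4}$-net $\calN \subset S^{p-1}$ of size at most $9^p$, observe that for each fixed $\bfu$ the quantity $n\, \bfu^\dagger Z \bfu = \sum_{i=1}^n (\bfx_i^\dagger \bfu)^2$ is $\chi^2_n$-distributed and admits a Laurent--Massart tail bound, union bound over $\calN$, and then convert the net bound into a uniform bound on $S^{p-1}$ via the standard one-line slack argument exploiting $(1 - \tfrac{1}{4})^{-1}$ on the operator norm. Either route gives the stated bound with a universal constant $c$.
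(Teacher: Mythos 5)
Your proposal is correct. Note that the paper itself gives no proof of this claim: it is stated in the preliminaries as a standard concentration fact (alongside the univariate Gaussian and norm-concentration claims) and used as a black box, so there is no in-paper argument to compare against. Your primary route via the Davidson--Szarek singular-value inequality is a clean and complete way to supply the missing proof: the reduction from $\frac12\I \preceq Z \preceq 2\I$ to singular values of $\bfX/\sqrt{n}$ lying in $[1/\sqrt2,\sqrt2]$ is exact, the choice $t=\sqrt{2\log(2/\beta)}$ gives failure probability $\beta$, and you correctly identify the lower edge ($1-1/\sqrt2 \approx 0.29$) as the binding constraint on the constant $c$; the hypothesis $n\ge c(p+\log(1/\beta))$ indeed controls both $\sqrt{p/n}$ and $t/\sqrt{n}$ once $c$ is large enough (the only micro-detail is that $\log(2/\beta)\le \log 2+\log(1/\beta)$ is absorbed using $n\ge cp\ge c$). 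One small correction in your sketched alternative: for bounding $\|Z-\I\|$ via quadratic forms over an $\epsilon$-net, the standard loss factor is $(1-2\epsilon)^{-1}$, not $(1-\epsilon)^{-1}$; with $\epsilon=\tfrac14$ this is a factor of $2$, which only shifts the required deviation on the net from $\tfrac12$ to $\tfrac14$ and is harmless to the constants. Either route establishes the claim.
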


\begin{lem}
\label{lem:sub-exp_inner_uniform_dist}
Let $X$ be a uniform distribution on the unit sphere $\mathbb{S}^{p-1}$, and $z$ be any fixed unit vector.
Then we know the inner product $\langle X,z\rangle$ is sub-exponential.
Specifically, we have
\begin{align*}
    \Pr[|\langle X,z\rangle|\ge t]\le e^{-\frac{\sqrt{p}-1}{4}t}.
\end{align*}
\end{lem}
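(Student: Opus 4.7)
I would reduce to analyzing a single coordinate via rotation invariance, establish a clean closed-form tail bound using the Gaussian representation of the sphere, and then convert to the sub-exponential form through a case analysis in $t$.

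By the rotational invariance of the uniform distribution on $\mathbb{S}^{p-1}$, I may assume without loss of generality that $z = e_1$, so that $\langle X, z\rangle$ is distributed as the first coordinate $X_1$. Using the standard representation $X = G/\|G\|$ for $G \sim \calN(0, \I_p)$, set $S := \sum_{i=2}^p G_i^2 \sim \chi^2_{p-1}$, which is independent of $G_1 \sim \calN(0,1)$. For $t \in [0,1)$, the event $\{|X_1| \ge t\}$ is equivalent to $\{|G_1| \ge t\sqrt{S/(1-t^2)}\}$. Conditioning on $S$ and applying the Gaussian tail bound $\Pr[|G_1| \ge u] \le e^{-u^2/2}$, then integrating against the chi-squared MGF $\E[e^{-\alpha S}] = (1+2\alpha)^{-(p-1)/2}$ with $\alpha = t^2/(2(1-t^2))$, collapses the expectation to the clean bound
\begin{equation}
    \Pr[|X_1| \ge t] \le (1-t^2)^{(p-1)/2}. \label{eq:plan-core-bound}
\end{equation}

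To convert \eqref{eq:plan-core-bound} into the target form $e^{-(\sqrt{p}-1)t/4}$, I would split at the threshold $t_0 := 1/(2(\sqrt{p}+1))$. For $t \ge t_0$, I would combine $(1-t^2)^{(p-1)/2} \le e^{-(p-1)t^2/2}$ with the elementary algebraic inequality $(p-1)t^2/2 \ge (\sqrt{p}-1)t/4$, which is equivalent to $t \ge t_0$. For $t < t_0$, the bound \eqref{eq:plan-core-bound} is too slack near the origin, so I would instead linearize via the density: the density $f_{|X_1|}(s) = 2c_p(1-s^2)^{(p-3)/2}$ with $c_p = \Gamma(p/2)/(\sqrt{\pi}\Gamma((p-1)/2))$ is monotone decreasing on $[0,1]$, and Gautschi-type bounds yield $2c_p = \Omega(\sqrt{p})$. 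Because $(1-s^2)^{(p-3)/2}$ stays above a universal constant on $[0, t_0]$, integration of the density gives $\Pr[|X_1| \le t] \ge (\sqrt{p}-1)t/4$, and the conclusion then follows from $1 - x \le e^{-x}$. The case $t \ge 1$ is vacuous since $|X_1| \le 1$ almost surely.

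\textbf{Main obstacle.} The delicate step is the small-$t$ regime, where the quadratic-in-$t$ decay in the exponent of \eqref{eq:plan-core-bound} is weaker than the linear-in-$t$ target. The density-based linearization repairs this, but it requires verifying the constant factor $2c_p \ge (\sqrt{p}-1)/4$ uniformly over all $p \ge 2$, i.e.\ Gamma-function bounds that are automatic asymptotically but demand some care at small $p$.
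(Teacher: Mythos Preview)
The paper states this lemma in its preliminaries as a concentration fact and does not supply a proof, so there is no in-paper argument to compare against.

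Your proposal is sound. The derivation of the core bound $\Pr[|X_1|\ge t]\le (1-t^2)^{(p-1)/2}$ via the Gaussian representation and the chi-squared MGF is correct; in particular, the inequality $\Pr[|G_1|\ge u]\le e^{-u^2/2}$ (without the usual factor of~$2$) does hold for all $u\ge 0$, as one checks by noting that $h(u)=e^{-u^2/2}-2(1-\Phi(u))$ satisfies $h(0)=h(\infty)=0$ and $h'$ has a single sign change. The case split at $t_0=1/(2(\sqrt{p}+1))$ is exactly the right threshold: for $t\ge t_0$ the inequality $(p-1)t^2/2\ge(\sqrt{p}-1)t/4$ is an identity after dividing by $t$ and factoring $p-1=(\sqrt{p}-1)(\sqrt{p}+1)$.

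Two minor points on the small-$t$ branch. First, the density $2c_p(1-s^2)^{(p-3)/2}$ is \emph{not} monotone decreasing when $p=2$ (the exponent is $-1/2$); this is harmless, since for $p=2$ the density is still bounded below by its value at $0$, namely $2c_2=2/\pi$. Second, the constant check you flag does go through with room to spare: on $[0,t_0]$ one has $(1-s^2)^{(p-3)/2}\ge e^{-1/4}$ for $p\ge 3$ (using $t_0^2\le 1/(4(\sqrt{p}+1)^2)$ and $(\sqrt{p}+1)^2\ge p$), while Wendel's inequality gives $2c_p\ge (p-1)\sqrt{2/(\pi p)}$, and the product exceeds $(\sqrt{p}-1)/4$ for all $p\ge 2$ after factoring $p-1=(\sqrt{p}-1)(\sqrt{p}+1)$. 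The cases $p=2,3$ can also be checked directly.
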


The classic analysis of least squares under fixed design establishes the convergence of the OLS estimator to the true parameter.
A nearly identical result holds under random design.
We state the result for the family of distributions we consider, but~\cite{hsu2011analysis} prove it for a much broader family of distributions.
\begin{claim}[Theorem 1 of~\cite{hsu2011analysis}]\label{claim:random_design}
    Let $\theta^*$ satisfy $\norm{\theta^*}\le 1$.
    Draw covariates $\bfx_1,\ldots,\bfx_n$ i.i.d.~from $\calN(0,\I_p)$ and let $y_i = \bfx_i^\dagger \theta^* + \xi_i$ for $\xi_i\sim\calN(0,\sigma^2)$.
    Let $\hat\theta$ be the OLS estimate.
    There exists constants $c, c'$ such that, if $n\ge c(p + \ln 1/\beta)$, then with probability at least $1-\beta$ we have
    \begin{align}
        \norm{\theta^* - \hat\theta}^2 \le \frac{c' \sigma^2 \bracket{p + \ln 1/\beta}}{n}.
    \end{align}
\end{claim}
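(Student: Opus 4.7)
The plan is to reduce the random-design problem to a Gaussian quadratic-form tail bound conditional on the design matrix. Starting from the explicit OLS expression $\hat\theta = (\bfX^\dagger\bfX)^{-1}\bfX^\dagger \bfy$ and substituting $\bfy = \bfX\theta^* + \xi$, the parameter error telescopes to
\begin{align}
    \hat\theta - \theta^* = (\bfX^\dagger\bfX)^{-1}\bfX^\dagger \xi.
\end{align}
Since $\bfX$ and $\xi$ are independent and $\xi\sim\calN(0,\sigma^2 \I_n)$, conditional on $\bfX$ the vector $\hat\theta-\theta^*$ is Gaussian with mean zero and covariance $\sigma^2 (\bfX^\dagger\bfX)^{-1}$. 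Therefore $\norm{\hat\theta-\theta^*}^2$ is, conditionally, a Gaussian quadratic form whose distribution depends only on the spectrum of $M := \sigma^2 (\bfX^\dagger\bfX)^{-1}$, and the problem decomposes into (i) spectral control of $\bfX^\dagger\bfX$ and (ii) concentration of $\norm{\bfg}^2$ for $\bfg\sim\calN(0,M)$.

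For ingredient (i) I would invoke Claim~\ref{clm:concentration_covariance} directly: as long as $n \ge c_1(p + \ln(2/\beta))$, with probability at least $1-\beta/2$ the empirical covariance satisfies $\tfrac{1}{2}\I \preceq \tfrac{1}{n}\bfX^\dagger\bfX \preceq 2\I$. This yields $M \preceq \tfrac{2\sigma^2}{n}\I$, hence the three spectral quantities I will need:
\begin{align}
    \tr(M) \le \frac{2\sigma^2 p}{n}, \qquad \tr(M^2) \le \norm{M}\tr(M) \le \frac{4\sigma^4 p}{n^2}, \qquad \norm{M} \le \frac{2\sigma^2}{n}.
\end{align}
For ingredient (ii), conditional on the good event on $\bfX$, I would apply a standard Gaussian-chaos tail bound (equivalently, Laurent--Massart for the eigenvalue-weighted $\chi^2$ representation $\norm{\bfg}^2 = \sum_j \lambda_j(M)\, Z_j^2$ with $Z_j\sim\calN(0,1)$ i.i.d.). This gives, with conditional probability at least $1-\beta/2$,
\begin{align}
    \norm{\hat\theta-\theta^*}^2 \;\le\; \tr(M) + 2\sqrt{\tr(M^2)\,\ln(2/\beta)} + 2\norm{M}\ln(2/\beta).
\end{align}

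Plugging in the spectral bounds from (i) and simplifying with AM--GM on the middle term (absorbing $\sqrt{p\ln(1/\beta)}$ into $p + \ln(1/\beta)$) produces an upper bound of the form $\tfrac{c'\sigma^2(p+\ln(1/\beta))}{n}$. A final union bound over the covariance event and the quadratic-form event turns the two conditional $1-\beta/2$ guarantees into the required $1-\beta$ guarantee, with the sample-size requirement $n\ge c(p+\ln(1/\beta))$ inherited from Claim~\ref{clm:concentration_covariance}. There is no genuine obstacle here beyond careful bookkeeping; the only step requiring some care is keeping the constants in the three terms of the Laurent--Massart bound compatible with the $p+\ln(1/\beta)$ scaling, which is handled by noting $\sqrt{p\ln(1/\beta)} \le \tfrac{1}{2}(p+\ln(1/\beta))$.
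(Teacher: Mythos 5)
Your argument is correct, but note that the paper itself contains no proof of this claim: it is imported verbatim as Theorem 1 of \cite{hsu2011analysis}, with the remark that the cited work establishes it for a much broader family of designs. So the comparison is between your self-contained derivation and the general result in the literature. Your route --- write $\hat\theta-\theta^*=(\bfX^\dagger\bfX)^{-1}\bfX^\dagger\xi$, observe that conditionally on $\bfX$ this is exactly $\calN(0,\sigma^2(\bfX^\dagger\bfX)^{-1})$, control the spectrum via Claim~\ref{clm:concentration_covariance}, and finish with the Laurent--Massart tail for the eigenvalue-weighted $\chi^2$ variable --- is sound; the spectral bounds $\tr(M)\le 2\sigma^2 p/n$, $\tr(M^2)\le 4\sigma^4 p/n^2$, $\norm{M}\le 2\sigma^2/n$ are right on the good event, the AM--GM absorption of the cross term works, and the union bound closes the argument at the stated sample-size threshold. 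What your approach buys is elementarity: the exact conditional Gaussianity of the noise (and of the OLS error) is special to the Gaussian-noise setting of Definition~\ref{def:generative_setting} and makes the sharp $\chi^2$ tail available directly. What the cited theorem buys is generality: Hsu--Kakade--Zhang handle subgaussian and bounded designs and noise via a more involved decomposition, which is why the paper cites rather than reproves. Two minor bookkeeping points you should make explicit if writing this up: (a) the quadratic-form tail must be stated as holding conditionally on every $\bfX$ in the good spectral event, so that integrating over that event gives the unconditional $1-\beta$ guarantee, and (b) invertibility of $\bfX^\dagger\bfX$ (needed for $\hat\theta$ to be well defined) is itself part of the good event. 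Neither is a gap, just something to say out loud.
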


\section{Experimental Details and Additional Results}
\label{sec:more_experiments}

\paragraph{Details on Confidence Interval Experiments}
Our experiments hold fixed the burn-in period to the first $20$ iterates.
We set $m$, the number of algorithm runs/checkpoints/batches to $10$ while varying the total number of gradient iterations.
(This is not always possible for multiple runs while holding fixed the total number of iterations, in which case we omit this approach from the figures.)
We vary the total number of iterations in this way to clarify the regimes where each method performs well.

\begin{figure}[h]
\centering
\includegraphics[width=0.4\textwidth]{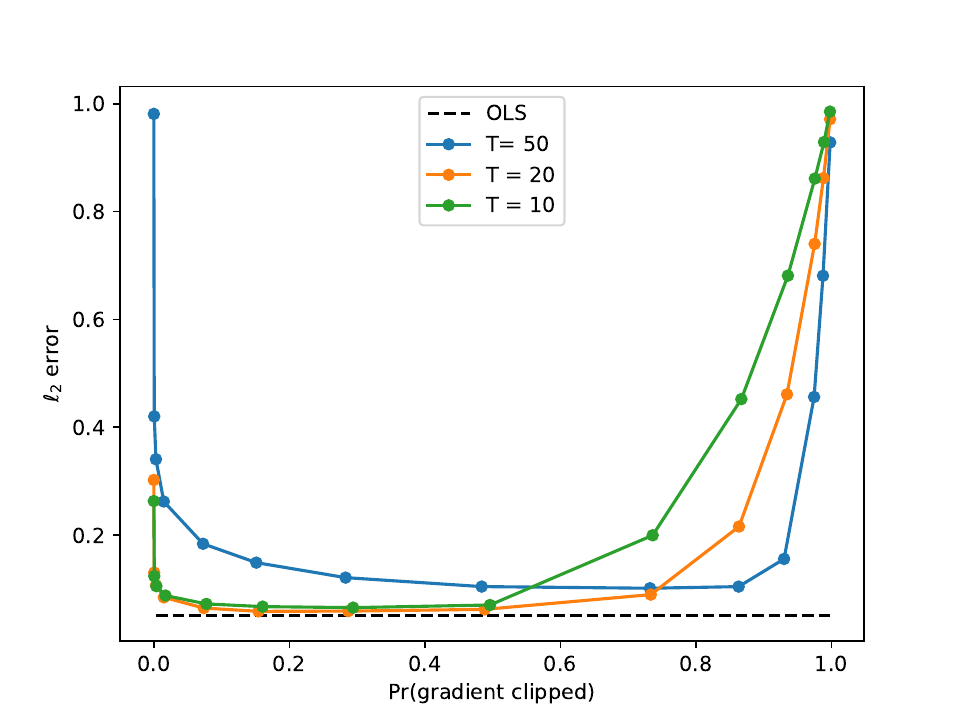}
\caption{The estimation error plotted against the proportion of gradients clipped. The right-hand side corresponds to bias from overly aggressive clipping. The left-hand side corresponds to variance from overly conservative clipping, which causes higher levels of noise for privacy.}
\label{fig:errorclip}
\end{figure}

In Figure \ref{fig:errorclip} we show how the rate of gradient clipping impacts $\ell_2$-error, and how this interacts with the total number of gradient iterations. One key insight is that error stays relatively low even when clipping $50\%$ of all gradients. As the number of gradient iterations grows, the tolerance to gradient clipping also seems to increase. 
Although our theoretical accuracy analysis proceeds by showing that no gradients are clipped, these experiments demonstrate that this stringent requirement is not necessary in practice.

\begin{figure}[h]
    \centering
    \begin{subfigure}[b]{0.4\textwidth}
        \centering
        \includegraphics[width=\textwidth]{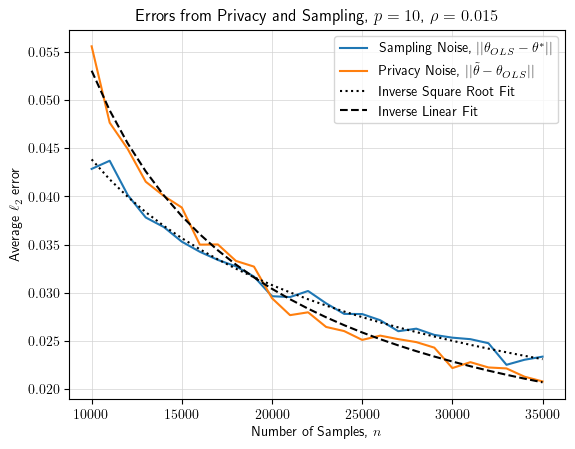}
        \caption{Experiment with $p=10$}
        \label{fig:cost_of_privacy_high_privacy_a}
    \end{subfigure}
    \hspace{1cm}
    \begin{subfigure}[b]{0.4\textwidth}
        \centering
        \includegraphics[width=\textwidth]{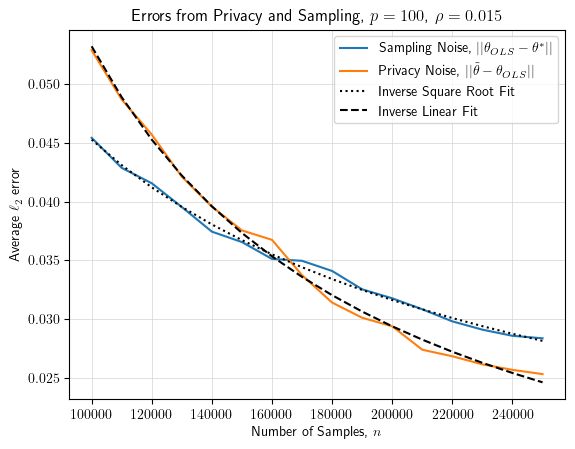}
        \caption{Experiment with $p=100$}
        \label{fig:cost_of_privacy_high_privacy_b}
    \end{subfigure}
    \caption{As in Figure~\ref{fig:cost_of_privacy_low_privacy}, we fix the dimension and let the sample size grow, seeing that the sampling error dominates the noise from privacy at reasonable sample sizes. These experiments are conducted with $\rho=0.015$, our standard setting. (a) uses $p=10$ and repeats each trial 100 times.
    (b) uses $p=100$ and repeats each trial 20 times, to reduce running time.}
    \label{fig:cost_of_privacy_high_privacy}
\end{figure}

\paragraph{Experiments with Anisotropic Data}
Our primary experiments are conducted on data sets where the covariates are drawn from a standard multivariate Gaussian distribution. We now move beyond this isotropic setting. 
In these experiments, for each dataset we first draw a random covariance matrix in the following way. 
We generate a random diagonal matrix $\Lambda$ with $\Lambda_{1,1}=2$, $\Lambda_{2,2}=1$, and, for all $3\le i \le p$, $\Lambda_{i,i}\sim\mathrm{Unif}([1,2])$ independently.
We then set the covariance $\Sigma = U \Lambda U^T$, where $U$ is a uniformly random rotation matrix.
The covariates are then sampled i.i.d.\ from $\calN(0,\Sigma)$; the remainder of the process is identical to the previous experiments.

\begin{figure}[h]
\centering
\includegraphics[width=0.4\textwidth]{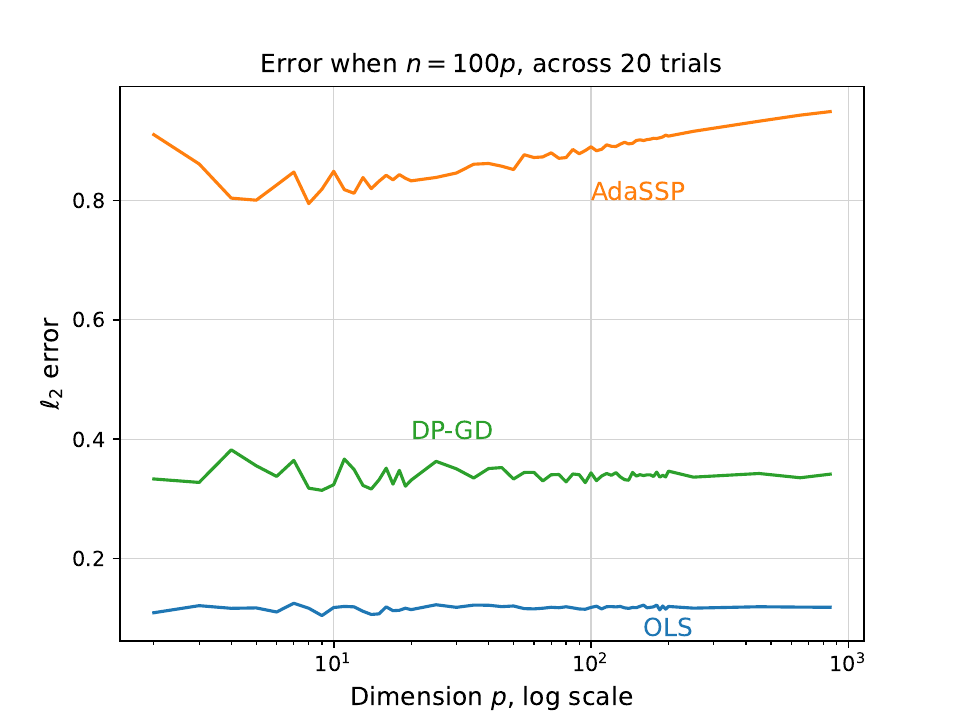}
\caption{We reproduce Figure \ref{fig:pn} with anisotropic data. We fix the ratio $p/n$ and let $p$ grow.
Run on data from a well-specified linear model, both DP-GD and OLS have constant error.
We compare with AdaSSP \cite{wang2018revisiting}, a popular algorithm that requires $n=\Omega(p^{3/2})$ examples.}
\label{fig:pn_ni}
\end{figure}

\begin{figure}[h]
\centering
\includegraphics[width=0.5\textwidth]{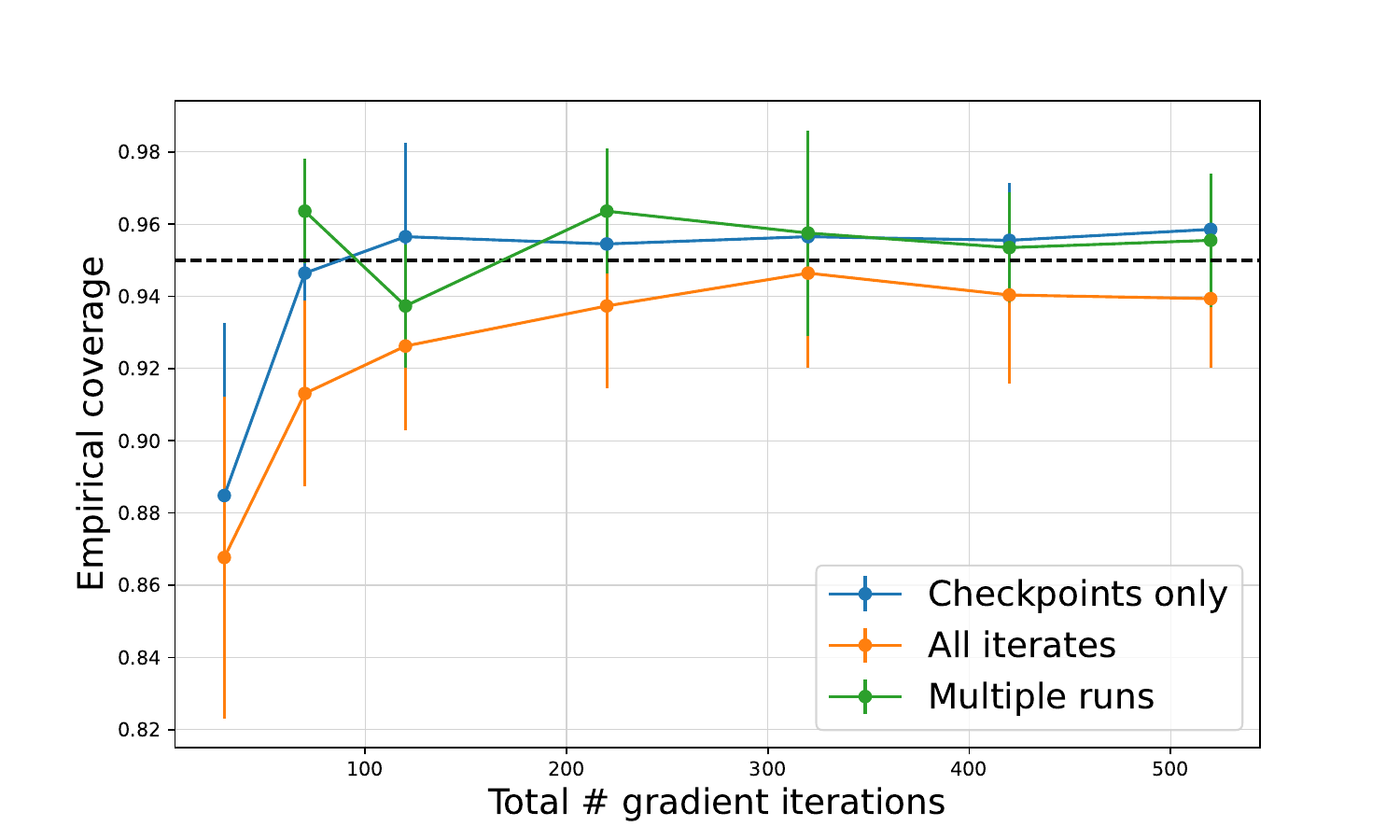}
\caption{We reproduce Figure \ref{fig:coverage} with anisotropic data. Average empirical coverage across co-ordinates over 100 algorithm runs. Error bars reflect the $95$-percentiles of coverage across coordinates.}
\label{fig:coverage_ni}
\end{figure}

\begin{figure}[h]
\centering
\includegraphics[width=0.5\textwidth]{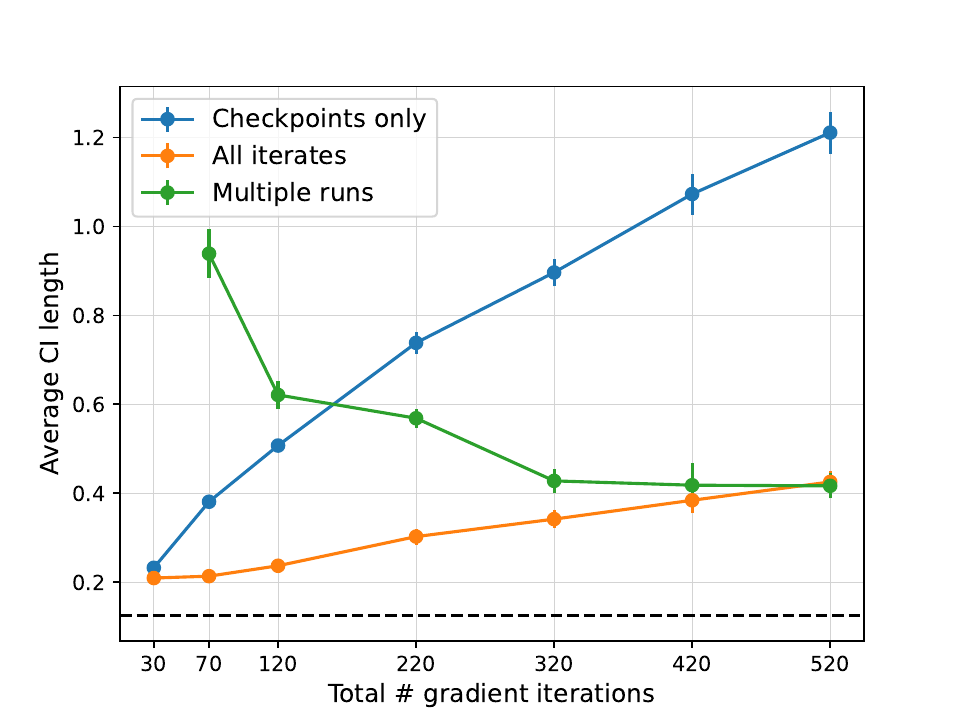}
\caption{We reproduce Figure \ref{fig:length} with anisotropic data. Average confidence interval length across co-ordinates for each confidence interval algorithm described in Section \ref{sec:ci} as the total number of gradient iterations increases. Error bar reflect the $95$-percentiles of coordinates CI length.}
\label{fig:length_ni}
\end{figure}

\section{Deferred Proofs}\label{sec:proofs}

\subsection{Clipping, Coupling, and Accuracy}

Before proving Lemma~\ref{lem:couple_DPGD_noclipping}, we define coupling and show how it relates to total variation distance.
\begin{defn}[Coupling]
    Let $p$ and $q$ be distributions over a space $\calX$.
    A pair of random variables $(X,Y)$ is called a \emph{coupling} of $(p,q)$ if, for all $x\in \calX$, $\Pr[X=x]=p(x)$ and $\Pr[Y=x]=q(x)$.
\end{defn}
Note that $X$ and $Y$ will not, in general, be independent.

\begin{claim}[Coupling and TV Distance]\label{claim:coupling_TV}
    Let $(X,Y)$ be a coupling of $(p,q)$.
    Then $TV(p,q)\le \Pr[X \neq Y]$.
\end{claim}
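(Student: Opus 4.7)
The plan is to establish the standard identity by unpacking the definition of total variation distance and using the fact that on the event $\{X = Y\}$ the indicators of $\{X \in E\}$ and $\{Y \in E\}$ agree, so only the event $\{X \neq Y\}$ can contribute to a difference in probabilities under the two marginals.

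First I would fix an arbitrary measurable event $E \subseteq \calX$ and use the coupling property ($X \sim p$, $Y \sim q$) to write
\begin{align}
    p(E) - q(E) = \Pr[X \in E] - \Pr[Y \in E].
\end{align}
Splitting each probability according to whether $X = Y$ or $X \neq Y$, the contributions from $\{X = Y\}$ cancel exactly, leaving
\begin{align}
    p(E) - q(E) = \Pr[X \in E,\, X \neq Y] - \Pr[Y \in E,\, X \neq Y] \le \Pr[X \neq Y].
\end{align}

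Next I would observe that the same derivation with the roles of $p$ and $q$ (equivalently, $E$ and its complement) swapped gives $q(E) - p(E) \le \Pr[X \neq Y]$, so $|p(E) - q(E)| \le \Pr[X \neq Y]$ for every measurable $E$. Taking the supremum over $E$ and invoking the definition $TV(p,q) = \sup_{E} |p(E) - q(E)|$ completes the proof.

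There is no real obstacle here: the entire argument is a one-line decomposition once the cancellation on $\{X = Y\}$ is identified. The only minor thing to watch is remembering to handle both signs so that the supremum bounds the absolute value (rather than just the one-sided difference).
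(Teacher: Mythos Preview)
Your argument is correct and is the standard proof of the coupling inequality. The paper itself does not prove this claim---it is stated without proof as a background fact in the preliminaries---so there is nothing to compare against beyond noting that your write-up supplies exactly the textbook derivation one would expect.
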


\begin{lem}[Restatement of Lemma~\ref{lem:couple_DPGD_noclipping}]
    \couplingstatement
\end{lem}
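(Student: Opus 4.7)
The plan is to build an explicit coupling of $\calO_\gamma$ and $\calO_\infty$ by sharing the algorithm's internal randomness, and then to argue that the two coupled executions agree on the high-probability event that the clipped execution performs no clipping. Combined with Claim~\ref{claim:coupling_TV}, this immediately yields the desired TV bound.

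More concretely, I would first fix $T$ independent Gaussian noise vectors $\bfz_1, \ldots, \bfz_T \sim \calN(0, \lambda^2 \I)$ drawn once and for all. Define $(\theta_t^\gamma)_{t=0}^T$ to be the iterates produced by Algorithm~\ref{alg:DPGD} with clipping threshold $\gamma$ using these noise vectors, and $(\theta_t^\infty)_{t=0}^T$ to be the iterates produced with threshold $+\infty$ using the same noise vectors. The marginal distributions of $(\theta_t^\gamma)_t$ and $(\theta_t^\infty)_t$ are, by construction, exactly $\calO_\gamma$ and $\calO_\infty$, so this is a valid coupling.

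Next, I would show by induction on $t$ that if the clipped run does not clip any gradient through step $T$, then $\theta_t^\gamma = \theta_t^\infty$ for all $t \in \{0, 1, \ldots, T\}$. The base case $t=0$ is immediate from the shared initialization $\theta_0$. For the inductive step, assume $\theta_{t-1}^\gamma = \theta_{t-1}^\infty$ and that step $t$ of the clipped run involves no clipping. Then for each $i$ the per-example gradient $-\bfx_i(y_i - \bfx_i^\dagger \theta_{t-1}^\gamma)$ has norm at most $\gamma$, so $\mathtt{CLIP}_\gamma$ is the identity on it; the averaged gradient and the noise update are therefore identical in both runs, giving $\theta_t^\gamma = \theta_t^\infty$.

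Let $E$ be the event that the clipped run clips nothing across all $T$ steps; by assumption $\Pr[E] \ge 1 - \beta$. On $E$ the induction shows that the entire trajectories agree, so $\calO_\gamma = \calO_\infty$ under the coupling. Applying Claim~\ref{claim:coupling_TV} gives $TV(\calO_\gamma, \calO_\infty) \le \Pr[\calO_\gamma \neq \calO_\infty] \le \Pr[E^c] \le \beta$, completing the proof. The only subtlety is that the no-clipping event is defined relative to the clipped execution (not $\calO_\infty$), but this is precisely how the hypothesis is stated, so no further care is required; the argument is otherwise a textbook shared-randomness coupling with no real obstacle.
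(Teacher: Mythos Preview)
Your proposal is correct and follows exactly the same approach as the paper: construct the coupling by sharing the Gaussian noise vectors across both executions, observe that on the no-clipping event the two trajectories coincide, and apply Claim~\ref{claim:coupling_TV}. The paper's proof is simply a terser version of what you wrote; your explicit induction is a harmless elaboration of the same argument.
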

\begin{proof}
    We use the coupling induced by sharing randomness across the execution of the two algorithms.
    Let $\noise_1,\ldots,\noise_T$ be drawn i.i.d.\ from $\calN(0,\lambda^2\I)$.
    Note that this is the only randomness used by either algorithm.
    
    When the $\noise_i$ draws cause $\calA(\bfX,\bfy; \gamma,\lambda,\eta,T)$ to not clip, the two algorithms return the same output.
    Thus the probability two runs return \emph{different} outputs is at most $\beta$, which yields our bound on the total variation distance.
\end{proof}

We now prove our statement about the distribution of DP-GD without clipping.
The statement we prove also establishes a slightly more complicated expression for the noise.
\begin{lem}[Expanded Statement of Lemma~\ref{lem:distribution_of_DPGD_no_clipping}]\label{lem:expanded_dist_no_clipping}
    \distributionNoClippingSetup
    \distributionNoClippingLongConclusion
\end{lem}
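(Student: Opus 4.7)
The plan is to exploit the quadratic structure of the squared loss: with no clipping, the averaged gradient at $\theta$ equals $\frac{1}{n}\bfX^\dagger(\bfX\theta - \bfy) = \cov\theta - \frac{1}{n}\bfX^\dagger\bfy$. Since $\hat\theta$ is the OLS solution, $\frac{1}{n}\bfX^\dagger\bfy = \cov\hat\theta$, so the full-batch gradient is exactly $\cov(\theta - \hat\theta)$. Substituting this into the update in Algorithm~\ref{alg:DPGD} with $\gamma = \infty$ gives
\begin{align}
    \theta_t = (\I - \eta\cov)\theta_{t-1} + \eta\cov\hat\theta + \eta\noise_t.
\end{align}

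Next, I would introduce the shifted iterate $\boldu_t = \theta_t - \hat\theta$. Subtracting $\hat\theta$ from both sides of the display above and collecting terms yields the clean linear recurrence $\boldu_t = (\I - \eta\cov)\boldu_{t-1} + \eta\noise_t$. Setting $\bfM = \I - \eta\cov$, an easy induction unrolls this to
\begin{align}
    \boldu_t = \bfM^t \boldu_0 + \eta\sum_{i=1}^t \bfM^{i-1}\noise_{t-i+1},
\end{align}
which, after re-indexing the (i.i.d.) noise, gives exactly the first equation in the conclusion.

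To obtain the distributional form, I would use that $\cov$ (hence $\bfM$) is symmetric so $\bfM^{i-1}$ is symmetric, and that the $\noise_s$ are independent $\calN(0,\lambda^2\I)$. A sum of independent Gaussians is Gaussian, and the covariance of $\eta\sum_{i=1}^t \bfM^{i-1}\noise_{t-i+1}$ equals
\begin{align}
    \eta^2\lambda^2 \sum_{i=1}^t \bfM^{i-1}(\bfM^{i-1})^\dagger = \eta^2\lambda^2 \sum_{j=0}^{t-1} \bfD^j.
\end{align}
Fact~\ref{fact:matrix_geometric_series} applied to $\bfD$ (which is invertible by hypothesis, as is $\I - \bfD$ since the spectrum assumption keeps $\bfM$ away from $\pm \I$) converts this sum to $(\I - \bfD)^{-1}(\I - \bfD^t) = \bfA^{(t)}$. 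Rewriting the noise contribution as $\eta\cdot\noise_t'$ with $\noise_t' \sim \calN(0, \lambda^2\bfA^{(t)})$ completes the distributional statement.

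There is no real obstacle here; the proof is essentially bookkeeping once one recognizes that quadratic loss plus Gaussian perturbation gives a linear Gauss--Markov recursion. The only places to be careful are (i) using $\cov\hat\theta = \frac{1}{n}\bfX^\dagger\bfy$ to eliminate $\bfy$ from the update, (ii) keeping the indexing of the noise consistent when unrolling, and (iii) invoking Fact~\ref{fact:matrix_geometric_series} with $\bfD$ (not $\bfM$), which is what makes $\bfA^{(t)}$ come out with the product $(\I-\bfD)^{-1}(\I-\bfD^t)$ rather than a product involving $\bfM$.
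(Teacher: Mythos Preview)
Your proposal is correct and follows essentially the same route as the paper: compute the unclipped gradient as $\cov(\theta-\hat\theta)$, unroll the resulting linear recursion, and close the noise covariance with the matrix geometric series. Your use of the shifted iterate $\boldu_t=\theta_t-\hat\theta$ is a small cosmetic simplification---the paper instead unrolls $\theta_t$ directly and applies Fact~\ref{fact:matrix_geometric_series} a second time to collapse the $\hat\theta$ term---but the argument is the same. One minor caution: the invertibility of $\I-\bfD$ that you need for Fact~\ref{fact:matrix_geometric_series} is not literally among the lemma's hypotheses (only $\cov$ and $\bfD$ are assumed invertible), so your appeal to ``the spectrum assumption'' overshoots what is stated; it is cleaner to note that $\I-\bfD=\eta\cov(2\I-\eta\cov)$ and that the conclusion only makes sense when this is invertible.
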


\begin{proof}
    Algorithm~\ref{alg:DPGD}'s update step looks like
    \begin{align}
        \theta_{t+1} &\gets \theta_t - \eta \cdot \bar\bfg_t + \eta \cdot \noise_t,
            \label{eq:alg_update}
    \end{align}
    where $\noise_t\sim \calN(0,\lambda^2\I)$.
    Since there is no clipping, we have a closed form for $\bar{\bfg}_t$:
    \begin{align}
        \bar{\bfg}_t &= \frac{1}{n}\sum_{i=1}^n - \bfx_i (y_i - \bfx_i^\dagger \theta_t) \\
            &= \frac 1 n \sum_{i=1}^n \bfx_i \bfx_i^\dagger \theta_t - \frac{1}{n}\sum_{i=1}^n  \bfx_i y_i \\
            &= \frac 1 n \bfX^\dagger\bfX \theta_t - \frac 1 n \bfX^\dagger \bfy.
    \end{align}
    We simplify further and apply the fact that $\hat\theta = (\bfX^\dagger \bfX)^{-1} \bfX^\dagger \bfy$.
    We also plug in $\cov = \frac 1 n \bfX^\dagger \bfX$:
    \begin{align}
        \bar\bfg_t &= \frac 1 n \bfX^\dagger\bfX \theta_t - \frac 1 n \bfX^\dagger\bfX  \hat\theta 
            = \cov \theta_t - \cov \hat\theta.
    \end{align}
    
    Plugging this into Equation~\eqref{eq:alg_update}, the update formula, we have
    \begin{align}
        \theta_{t+1} \gets \eta\cdot \cov \hat\theta + (\I - \eta \cov)\theta_t + \eta \cdot \noise_t.
    \end{align}
    Solving the recursion, we arrive at a formula for $\theta_{t+1}$:
    \begin{align}
        \theta_{t+1} = (\I-\eta \cov)^t \theta_0 
                        + \sum_{i=1}^t (\I-\eta \cov)^{i-1} \eta \cov \hat\theta 
                        + \sum_{i=1}^t (\I-\eta \cov)^{i-1} \eta \cdot \noise_{t-i}.
                    \label{eq:dp_gd_recursion}
    \end{align}
    To simplify the second term in Equation~\eqref{eq:dp_gd_recursion}, apply the formula for matrix geometric series (Fact~\ref{fact:matrix_geometric_series}):
    \begin{align}
        \sum_{i=1}^t (\I-\eta \cov)^{i-1} = (\eta \cov)^{-1} (\I - (\I-\eta \cov)^t).
    \end{align}
    Thus the second term in Equation~\eqref{eq:dp_gd_recursion} is $(\I - (\I - \eta \cov )^t)\hat\theta = \hat{\theta} - (\I-\eta \cov)^t\hat{\theta}$.
    The first two terms together are $\hat{\theta} + (\I-\eta \cov)^t (\theta_0 - \hat{\theta})$.
    This establishes the first part of the claim.
    
    The final term in Equation~\eqref{eq:dp_gd_recursion}, corresponding to the noise for privacy, is slightly more involved.
    We use the independence of the noise and the fact that Gaussianity is preserved under summation.
    Continuing from the third term of Equation~\eqref{eq:dp_gd_recursion} and abusing notation to substitute distributions for random variables, we have
    \begin{align}
        &~\sum_{i=1}^t (\I-\eta \cov)^{i-1}\eta \cdot\noise_{t-i}\\
        &= \sum_{i=1}^t (\I-\eta \cov)^{i-1}\eta \cdot \calN(0,\lambda^2 \I) \\
            &= \eta \lambda \sum_{i=1}^t \calN(0,(\I-\eta \cov)^{2(i-1)}) \\
            &= \eta \lambda \cdot\calN\bracket{0,\sum_{i=1}^t (\I-\eta \cov)^{2(i-1)}} \\
            &= \eta \lambda \cdot \calN\bracket{0, (\I - (\I-\eta \cov)^2)^{-1} (\I - (\I-\eta \cov)^{2t})},
    \end{align}
    applying the formula for matrix geometric series (Fact~\ref{fact:matrix_geometric_series}) in the last line.
    This concludes the proof.
\end{proof}

\begin{lem}[Restatement of Lemma~\ref{lem:no_clipping}]
    \noClipping
\end{lem}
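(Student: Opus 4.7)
The plan is to analyze the unclipped version of the algorithm (with $\gamma = +\infty$) and transfer the result by a coupling argument on the shared Gaussian noise. Run both versions with the same draws $\noise_1,\ldots,\noise_T$; the two trajectories coincide up until the first step at which the clipped version actually clips a gradient. Hence bounding the probability of any clipping in the clipped execution reduces to bounding the probability that, in the unclipped execution, some per-sample gradient $\bfx_i(y_i - \bfx_i^\dagger \theta_t)$ has $\ell_2$ norm exceeding $\gamma$.

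I would then apply Lemma~\ref{lem:distribution_of_DPGD_no_clipping} with $\theta_0 = 0$ to get the closed form $\theta_t = (\I - (\I - \eta\cov)^t)\hat\theta + \eta\cdot\noise_t'$ with $\noise_t' \sim \calN(0, \lambda^2\bfA^{(t)})$. Writing $\hat\theta = \theta + (\bfX^\dagger\bfX)^{-1}\bfX^\dagger \xi$ for $\xi_j = y_j - \bfx_j^\dagger \theta$, the residual telescopes to
\begin{equation}
    y_i - \bfx_i^\dagger \theta_t = \xi_i + \bfx_i^\dagger (\I - \eta\cov)^t \theta - \tfrac{1}{n}\sum_{j=1}^n \xi_j\, \bfx_i^\dagger A_t \bfx_j - \eta\, \bfx_i^\dagger \noise_t',
\end{equation}
where $A_t = (\I - (\I - \eta\cov)^t)\cov^{-1}$ is exactly the matrix from condition (v). The emergence of $A_t$ is the key algebraic observation: the OLS perturbation term contributes a factor $\cov^{-1}$ and the bias term $(\I - \eta\cov)^t \hat\theta$ contributes a factor $(\I - \eta\cov)^t \cov^{-1}$, and they combine.

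Conditions (iii), (iv), and (v) then directly bound the three deterministic contributions by $\multiplier\sigma$, $\multiplier$, and $\multiplier\sigma\sqrt{p/n}$, respectively. For the Gaussian term, condition (i) gives $\|\I - \eta\cov\| \le 7/8$, so the eigenvalues of $\bfD = (\I - \eta\cov)^2$ lie in $[0, 49/64]$, whence $\|(\I - \bfD)^{-1}\| \le 64/15$ and $\|\bfA^{(t)}\| = O(1)$. Combined with condition (ii), the scalar $\bfx_i^\dagger \noise_t'$ is mean-zero Gaussian with variance $O(\lambda^2\multiplier^2 p)$; Claim~\ref{claim:concentration_univariate_gaussian} plus a union bound over $(i,t) \in [n]\times[T]$ then yields $|\bfx_i^\dagger \noise_t'| = O(\lambda\multiplier\sqrt{p\ln(nT/\beta)})$ for all such pairs simultaneously, with probability at least $1-\beta$. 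Multiplying the residual bound by $\|\bfx_i\| \le \multiplier\sqrt{p}$ (condition (ii)) bounds the gradient norm, and verifying this is at most $\gamma$ is a routine computation that absorbs the dominant deterministic contribution into the hypothesis $\gamma \ge 4\multiplier^2\sigma\sqrt{p}$ and the dominant stochastic contribution into $\gamma/(\eta\lambda) \ge 64\multiplier^2 p\sqrt{\ln(2nT/\beta)}$.

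The main obstacle I anticipate is the algebraic decomposition: one must arrange the expansion so that the matrix $A_t$ from condition (v) emerges naturally from combining the $\cov^{-1}$ factor in $\hat\theta - \theta$ with the transient factor $(\I - \eta\cov)^t$ in the iterate. Once that identity is in place, the remaining steps are direct applications of the deterministic conditions and standard Gaussian tail bounds, and the bookkeeping to match the stated constants $4$ and $64$ is routine.
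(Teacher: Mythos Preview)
Your proposal is correct and follows essentially the same route as the paper: the same four-term decomposition of the residual $y_i - \bfx_i^\dagger\theta_t$, with the deterministic pieces bounded via Conditions~(iii)--(v) and the noise piece via a Gaussian tail bound plus a union bound over $[n]\times[T]$. The paper phrases the argument as a strong induction on $t$ and controls the noise term by summing the geometric series $\sum_\ell (7/8)^{\ell-1}$ over the individual contributions $\bfx_i^\dagger(\I-\eta\cov)^{\ell-1}\noise_{t-\ell}$ rather than bounding $\|\bfA^{(t)}\|$ directly, but these differences and your coupling formulation are cosmetic.
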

\begin{proof}
    We prove the claim by strong induction, relying on items (i-v) of the \noclippingcondition (Condition~\ref{condition:main}).
    Before beginning the induction, we prove a high-probability statement about the noise added for privacy.
    
    \textbf{Setup: Noise for Privacy}
    Recall that, at time $t$, we add independent noise $\noise_t \sim \calN(0, \lambda^2 \I)$.
    We show that, with probability at least $1-\beta$, we have for all $i\in [n]$ and $t_1,t_2\in [T]$ 
    \begin{align}
        \abs{\bfx_i^\dagger (\I-\eta \cov)^{t_1} \noise_{t_2}}\le 2 \multiplier \lambda \sqrt{p}\cdot \bracket{\frac 7 8}^{t_1}\sqrt{\ln 2nT/\beta}.
            \label{eq:noise_bound}
    \end{align}
    For any fixed covariates, we have 
    \begin{align}
        \bfx_i^\dagger (\I-\eta \cov)^{t_1} \noise_{t_2} \sim \lambda \cdot \calN\bracket{0, \norm{(\I-\eta \cov)^{t_1} \bfx_i }^2}.
    \end{align}
    With probability at least $1-\beta$ (simultaneously over all $i, t_1,$ and $t_2$) we have
    \begin{align}
        \abs{\bfx_i^\dagger (\I-\eta \cov)^{t_1} \noise_{t_2}} \le 2\lambda \norm{(\I-\eta \cov)^{t_1} \bfx_i} \cdot \sqrt{\ln 2nT/\beta}.
    \end{align}
    This holds independently of the value of the norm, so Conditions~\ref{condition:spectrum} and \ref{condition:covariate_norm} (and Cauchy--Schwarz repeatedly) proves Equation~\eqref{eq:noise_bound}.
    
    We now proceed to the induction. 
    Let $g_{i,t}= - \bfx_i (y_i - \bfx_i^\dagger \theta_t)$ be the gradient of the loss on point $i$ with respect to parameter $\theta_t$.
    Observe that $\norm{g_{i,t}} = \abs{y_i - \bfx_i^\dagger \theta_t}\cdot \norm{\bfx_i}$ and, furthermore, that Condition~\ref{condition:covariate_norm} promises $\norm{\bfx_i}\le \multiplier\sqrt{p}$.
    Thus, to show that the norm of the gradient is less than $\gamma$ we will show that the absolute residual is less than $\frac{\gamma}{\multiplier\sqrt{p}}$.
    
    \textbf{Base Case:} for $t=1$, we start from $\theta_0=0$, which means our residual is just $\abs{y_i}$.
    By the triangle inequality and Conditions~\ref{condition:response_noise_magnitude} and \ref{condition:true_param_direction}, we have
    \begin{align}
        \abs{y_i} = \abs{y_i - \bfx_i^\dagger\theta^* + \bfx_i^\dagger\theta^*} 
            &\le \abs{y_i - \bfx_i^\dagger\theta^*} +\abs{\bfx_i^\dagger\theta^*} \\
            &\le \multiplier \sigma + \multiplier.
    \end{align}
    This is less than $\frac{\gamma}{\multiplier\sqrt{p}}$ by assumption.
    
    \textbf{Induction Step:}
    Consider time $t+1$ and assume that no gradients have been clipped from the start through time $t$.
    From Lemma~\ref{lem:expanded_dist_no_clipping}, 
    we have a formula for the value of $\theta_{t}$:
    \begin{align}
        \theta_{t} = \hat\theta + (\I-\eta \cov)^{t}(\theta_0 - \hat\theta) + \eta \bfz_t'.
    \end{align}
    Plugging this equation into the formula for the residual at time $t$, we have (after adding and subtracting identical terms)
    \begin{align}
      \abs{y_i - \bfx_i^\dagger \theta_{t}}
            &= \abs{y_i - \bfx_i^\dagger\theta_t + \textcolor{teal}{\bfx_i^\dagger\theta - \bfx_i^\dagger\theta}} \\
            &= \abs{\bfx_i^\dagger (\theta - \theta_t) + (y_i - \bfx_i^\dagger\theta) } \\
            &=\abs{\bfx_i^\dagger \bracket{ \theta - \textcolor{teal}{\hat\theta - (\I-\eta \cov)^{t}(\theta_0 - \hat\theta) - \eta
            \bfz_t'}} + (y_i - \bfx_i^\dagger\theta)  }\\
            &=\abs{\bfx_i^\dagger \bracket{ \theta - \hat\theta - (\I-\eta \cov)^{t}(\theta_0 - \hat\theta+ \textcolor{blue}{\theta -\theta}) - \eta \bfz_t'} + (y_i - \bfx_i^\dagger\theta)},
    \end{align}
    where in the final line we added and subtracted $\theta$.
    We distribute terms and apply the triangle inequality, arriving at
    \begin{align}
        \abs{y_i - \bfx_i^\dagger \theta_t} &\le 
            \underbrace{\abs{\bfx_i^\dagger \bracket{\I - (\I-\eta \cov)^t} (\theta - \hat\theta)}}_{\mathrm{(i)}}
            + \underbrace{\abs{\bfx_i^\dagger (\I- \eta \cov)^t (\theta_0 -\theta)}}_{\mathrm{(ii)}}
            + \underbrace{\abs{\eta \cdot\bfx_i^\dagger \bfz_t'}}_{\mathrm{(iii)}}
            + \underbrace{\abs{y_i - \bfx_i^\dagger\theta }}_{\mathrm{(iv)}}.
            \label{eq:gradient_split}
    \end{align}
    We will show that each of the four terms in Equation~\eqref{eq:gradient_split} is at most $\frac \gamma 4\cdot \frac{1}{\multiplier\sqrt{p}}$ with high probability.
    Combined with Condition~\ref{condition:covariate_norm}, this establishes that $\norm{g_{i,t}} = \abs{y_i - \bfx_i^\dagger \theta_t}\cdot \norm{\bfx_i} \le \gamma$, as we desire.
    
    \textbf{Term (i):}
    We decompose the vector $(\theta - \hat\theta)$ \citep[see][Lemma 1]{hsu2011analysis}.
    As in Condition~\ref{condition:noise_covariate_independence}, define matrix $A_t = (\I - (\I-\eta \cov)^t) \cov^{-1}$.
    We have
    \begin{align}
        \abs{\bfx_i^\dagger \bracket{\I - (\I-\eta \cov)^t} (\theta - \hat\theta)}
            &= \abs{\bfx_i^\dagger \bracket{\I - (\I-\eta \cov)^t}  \cdot \frac 1 n \sum_j \cov^{-1} (y_j-\bfx_j^\dagger\theta) \bfx_j} \\
            &= \frac 1 n \abs{ \sum_{j} (y_j-\bfx_j^\dagger\theta) \cdot \bfx_{i}^\dagger A_t \bfx_j} \\
            &\le \frac 1 n \cdot \multiplier \sigma \sqrt{n p},
    \end{align}
    applying Condition~\ref{condition:noise_covariate_independence} in the last line.
    Since $n\ge p$ by assumption, term (i) is at most $\multiplier  \sigma$, which is less than $\frac \gamma 4\cdot \frac{1}{\multiplier \sqrt{p}}$ by assumption.
    
    \textbf{Term (ii):}
    Since $\theta_0 = 0$, Condition~\ref{condition:true_param_direction} directly says that term (ii) is at most $\multiplier$, which is less than $\frac \gamma 4\cdot \frac{1}{\multiplier\sqrt{p}}$ by assumption.
    
    \textbf{Term (iii):}
    Push $\bfx_i^\dagger$ inside the sum and apply the triangle inequality:
    \begin{align}
        \abs{\eta \cdot \bfx_i^\dagger \bignoise_t}
            &= \eta \cdot \abs{\sum_{\ell=1}^t \bfx_i^\dagger (\I - \eta \cov)^{\ell - 1} \noise_{t-\ell}} \\
            &\le \eta \sum_{\ell=1}^t \abs{ \bfx_i^\dagger (\I - \eta \cov)^{\ell - 1} \noise_{t-\ell}}.
    \end{align}
    By Equation~\eqref{eq:noise_bound} (which relies on Conditions~\ref{condition:spectrum} and \ref{condition:covariate_norm}), we have an upper bound on each of these terms that holds with probability at least $1-\beta$.
    Plugging this in, we have
    \begin{align}
        \norm{\eta\cdot  \bfx_i^\dagger \bfz_t'} 
            &\le  \eta \sum_{\ell=1}^t 2\multiplier\lambda \sqrt{p} \bracket{7/8}^{\ell-1} \sqrt{\ln 2nT/\beta} \\
            &\le   2\multiplier  \eta \lambda \sqrt{p}\sqrt{\ln 2nT/\beta}  \sum_{\ell=1}^t\bracket{7/8}^{\ell-1}. 
    \end{align}
    where in the second line we have pulled out the terms that do not depend on $\ell$.
    Because it is a geometric series, the sum is at most $8$.
    Rearranging, we see that term (iii) is at most $\frac \gamma 4\cdot \frac{1}{\multiplier\sqrt{p}}$ when
    \begin{align}
        \frac{\gamma}{\eta \lambda}\ge 64 \multiplier^2 p \sqrt{\ln 2nT/\beta},
    \end{align}
    which is exactly what we assumed.
    
    \textbf{Term (iv):}
    Condition~\ref{condition:response_noise_magnitude} says that term (iv) is at most $\sigma \multiplier$, which is less than $\frac \gamma 4\cdot \frac{1}{\multiplier\sqrt{p}}$ by assumption.
\end{proof}

\begin{lem}\label{lemma:spherically_symmetric}
    Let $\bfx_1,\ldots,\bfx_n$ be drawn i.i.d.~from $\calN(0,\I)$.
    Let $\eta$ be a real number and $t$ an integer.
    Let $\cov = \frac 1 n \sum_i \bfx_i \bfx_i^\dagger$.
    For any $i$, the distribution of $(\I-\eta \cov)^t \bfx_i$ is spherically symmetric.
\end{lem}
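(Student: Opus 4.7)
The strategy is to exploit the orthogonal invariance of the standard Gaussian distribution. Since each $\bfx_j \sim \calN(0,\I)$ independently, the joint distribution of $(\bfx_1,\ldots,\bfx_n)$ is invariant under the map $\bfx_j \mapsto U\bfx_j$ for any orthogonal matrix $U \in \R^{p \times p}$. I will show that the random vector $\bfv = (\I - \eta \cov)^t \bfx_i$ satisfies $U\bfv \stackrel{d}{=} \bfv$ for every orthogonal $U$, which is exactly the definition of spherical symmetry.

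\textbf{Key steps.} First, I would compute how $\cov$ transforms under the replacement $\bfx_j \mapsto U\bfx_j$. A direct calculation gives
\begin{align}
    \frac{1}{n}\sum_{j=1}^n (U\bfx_j)(U\bfx_j)^\dagger = U\left(\frac{1}{n}\sum_{j=1}^n \bfx_j\bfx_j^\dagger\right)U^\dagger = U \cov U^\dagger.
\end{align}
Next I would use the fact that, for any polynomial (and more generally any power series) $p$, $p(U M U^\dagger) = U\, p(M)\, U^\dagger$, applied to the polynomial $M \mapsto (\I - \eta M)^t$. Combining these, under the transformation $\bfx_j \mapsto U\bfx_j$ for all $j$, we have
\begin{align}
    (\I - \eta \cov)^t \bfx_i \;\longmapsto\; \bigl(\I - \eta U \cov U^\dagger\bigr)^t (U\bfx_i) = U (\I - \eta \cov)^t U^\dagger U \bfx_i = U (\I - \eta \cov)^t \bfx_i.
\end{align}

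Finally, invoking orthogonal invariance of the joint law of $(\bfx_1,\ldots,\bfx_n)$, the left-hand side is distributed as $(\I-\eta\cov)^t\bfx_i$ while the right-hand side equals $U\cdot[(\I-\eta\cov)^t\bfx_i]$, so these two distributions coincide for every orthogonal $U$. This establishes spherical symmetry.

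\textbf{Expected difficulty.} There is no real obstacle here — the argument is a clean symmetrization. The only point requiring a small amount of care is verifying that the two $U$-factors inside the transformed expression cancel correctly against the transformed covariate $U\bfx_i$, producing a leftover $U$ acting on the original random vector rather than collapsing to the identity. Once that cancellation is recorded, the conclusion is immediate.
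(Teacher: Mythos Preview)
Your proposal is correct and follows essentially the same approach as the paper: both arguments exploit the orthogonal invariance of the joint law of $(\bfx_1,\ldots,\bfx_n)$, observe that $\cov \mapsto U\cov U^\dagger$ under $\bfx_j \mapsto U\bfx_j$, and then use the conjugation identity $(\I-\eta U\cov U^\dagger)^t = U(\I-\eta\cov)^t U^\dagger$ to conclude. Your presentation via distributional equality is slightly cleaner than the paper's pointwise-probability formulation, but the content is the same.
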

\begin{proof}
    Let $\bfv$ be a vector and $\Pi$ an orthogonal rotation matrix. 
    We will show that 
    \begin{align}
        \Pr\sqbracket{(\I-\eta \cov)^t \bfx_i = \bfv} = \Pr\sqbracket{(\I-\eta \cov)^t \bfx_i = \Pi \bfv},
    \end{align}
    using the spherical symmetry of the covariates' distribution.
    We write out
    \begin{align}
        \Pr\sqbracket{(\I-\eta \cov)^t \bfx_i = \bfv}
            &=\Pr\sqbracket{\Pi (\I-\eta \cov)^t \bfx_i = \Pi \bfv} \\
            &= \Pr\sqbracket{\Pi (\I-\eta \cov)\cdots(\I-\eta \cov) \bfx_i = \Pi \bfv} \\
            &= \Pr\sqbracket{\Pi\bracket{\Pi^\dagger \Pi} (\I-\eta \cov)(\Pi^\dagger \Pi)\cdots(\I-\eta \cov) (\Pi^\dagger \Pi) \bfx_i = \Pi \bfv},
    \end{align}
    inserting $\I=\Pi^\dagger \Pi$ between each term.
    We cancel and rearrange:
    \begin{align}
        \Pr\sqbracket{(\I-\eta \cov)^t \bfx_i = \bfv}
            &=\Pr\sqbracket{(\I-\eta (\Pi \cov\Pi^\dagger ))\cdots(\I-\eta (\Pi \cov\Pi^\dagger)) (\Pi \bfx_i) = \Pi \bfv}.
    \end{align}
    Of course, we have $\Pi \cov\Pi^\dagger = \Pi \bracket{\frac 1 n \bfx_i\bfx_i^\dagger}\Pi^\dagger = \frac 1 n \sum_i (\Pi \bfx_i)(\Pi \bfx_i)^\dagger$.
    Therefore, by the rotation invariance of the Gaussian distribution, we arrive at
        $\Pr\sqbracket{(\I-\eta \cov)^t \bfx_i = \bfv}
            = \Pr\sqbracket{(\I-\eta \cov)^t \bfx_i = \Pi \bfv}$.
\end{proof}

\begin{lem}[Restatement of Lemma~\ref{lem:assumption_holds}]
    \assumptionHolds
\end{lem}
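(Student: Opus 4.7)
\textbf{Proof plan for Lemma~\ref{lem:assumption_holds}.} The plan is to take the ``witness'' $\theta$ in Condition~\ref{condition:main} to be the true parameter $\theta^*$, verify each of the five sub-conditions (i)--(v) on a high-probability event, and then union-bound. Throughout, the value $\multiplier = 12\ln^{1.5}(5nT/\beta)$ is chosen generously enough that each concentration step has ample slack.

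\textbf{Easy conditions (i)--(iii).} Condition (i) follows from Claim~\ref{clm:concentration_covariance}: for $n \ge c(p + \ln 1/\beta)$, we have $\tfrac{1}{2}\I \preceq \cov \preceq 2\I$, and since $\eta = \tfrac{1}{4}$ the eigenvalues of $\eta\cov$ lie in $[1/8, 1/2]$, so $\norm{\I - \eta\cov} \le 7/8$. Condition (ii) follows from Claim~\ref{claim:concentration_of_gaussian_norm} with a union bound over $i \in [n]$: with high probability $\norm{\bfx_i} \le \sqrt{p} + \sqrt{2\ln(2n/\beta)}$, which is comfortably at most $\multiplier \sqrt{p}$ since $p \ge 2$. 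Condition (iii) is immediate because $y_i - \bfx_i^\dagger \theta^* = \xi_i \sim \calN(0,\sigma^2)$; apply Claim~\ref{claim:concentration_univariate_gaussian} and union bound over $i$.

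\textbf{Condition (iv)---the main subtlety.} We need $\abs{\bfx_i^\dagger (\I - \eta\cov)^t \theta^*} \le \multiplier$ uniformly in $i,t$. The naive Cauchy--Schwarz bound only gives $\norm{\bfx_i}\cdot\norm{\theta^*}\lesssim \sqrt{p}$, which is too large by a factor of $\sqrt{p}$. The key is Lemma~\ref{lemma:spherically_symmetric}: since $(\I - \eta \cov)^t$ is a symmetric polynomial in $\cov$, we can rewrite $\bfx_i^\dagger (\I - \eta\cov)^t \theta^* = \langle (\I - \eta\cov)^t \bfx_i, \theta^* \rangle$, and the vector $(\I - \eta\cov)^t \bfx_i$ is spherically symmetric. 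Decomposing it as $R \cdot U$ with $U$ uniform on the sphere independent of the radius $R$, we have $\abs{\langle RU, \theta^*\rangle} \le R \cdot \abs{\langle U, \theta^*\rangle}$. The radius is bounded by $R \le \norm{(\I - \eta\cov)^t}\cdot \norm{\bfx_i} \le (7/8)^t \cdot O(\sqrt{p})$ using (i) and (ii), and Lemma~\ref{lem:sub-exp_inner_uniform_dist} yields $\abs{\langle U, \theta^*\rangle} \lesssim \ln(nT/\beta)/\sqrt{p}$ with probability $1 - \beta/(nT)$. Multiplying and union-bounding over $(i,t)$ gives a bound of order $\ln(nT/\beta)$, which fits inside $\multiplier$.

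\textbf{Condition (v)---Gaussian conditional analysis.} Here $y_j - \bfx_j^\dagger \theta^* = \xi_j$, so the sum is $\bfx_i^\dagger A_t \bfX^\dagger \xi$. Conditioning on the entire design matrix $\bfX$, this is a scalar Gaussian with mean $0$ and variance $\sigma^2 \norm{\bfX A_t \bfx_i}^2 = n\sigma^2\, \bfx_i^\dagger A_t \cov A_t \bfx_i$. Since $A_t,\cov$ all commute (they are polynomials in $\cov$), we get $A_t \cov A_t = \cov^{-1}(\I - (\I - \eta\cov)^t)^2$, whose operator norm is at most $2$ on the good covariance event of (i). Thus the conditional variance is at most $2n\sigma^2 \norm{\bfx_i}^2$; applying Gaussian tail bounds with $\norm{\bfx_i} = O(\sqrt{p})$ from (ii) and union-bounding over $(i,t)$ yields $\abs{\sum_j \xi_j \bfx_i^\dagger A_t \bfx_j} \lesssim \sigma\sqrt{np\,\ln(nT/\beta)}$, comfortably within $\multiplier\sigma\sqrt{np}$.

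\textbf{Main obstacle.} The part needing care is (iv): without exploiting the spherical symmetry lemma, one cannot avoid a $\sqrt{p}$ factor, and the whole point of the $\tilde O(p)$ sample complexity result hinges on this dimension-free bound. Verifying independence in (v) also requires being careful that $\xi$ is independent of $\bfX$ (hence of $A_t$ and $\bfx_i$) so that the conditional-Gaussian argument is valid; after that, the bound follows routinely. Finally, a union bound across (i)--(v) and across the relevant $(i,t)$ pairs, with each failure probability set to a constant fraction of $\beta$, completes the proof.
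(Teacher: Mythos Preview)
Your proposal is correct and follows essentially the same approach as the paper: take $\theta=\theta^*$, verify (i)--(iii) by standard Gaussian concentration, handle (iv) via the spherical symmetry of $(\I-\eta\cov)^t\bfx_i$ together with Lemma~\ref{lem:sub-exp_inner_uniform_dist}, and handle (v) by conditioning on $\bfX$ so that the sum becomes a scalar Gaussian whose variance is $n\sigma^2\,\bfx_i^\dagger A_t\cov A_t\bfx_i$. The only differences are cosmetic: you retain the $(7/8)^t$ factor on the radius in (iv) and compute the operator norm in (v) directly as $\le 2$ rather than via the triangle inequality (which gives $\le 8$ in the paper), but these are harmless sharpenings of the same argument.
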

\begin{proof}
    the \noclippingcondition has five sub-conditions.
    We prove each holds with probability at least $1-\beta/5$; a union bound finishes the proof.
    To establish Conditions~\ref{condition:response_noise_magnitude}, \ref{condition:true_param_direction}, and \ref{condition:noise_covariate_independence}, we take $\theta\gets\theta^*$.
    
    \textbf{Condition~\ref{condition:spectrum}}
    By Claim~\ref{clm:concentration_covariance}, with probability at least $1-\beta/5$ we have $\frac 1 2 \I \preceq \cov \preceq 2\I$.
    Applying $\eta=1/4$ establishes Condition~\ref{condition:spectrum}.
    
    \textbf{Condition~\ref{condition:covariate_norm}}
    By Claim~\ref{claim:concentration_of_gaussian_norm}, for any fixed $\bfx_i$ we have $\norm{\bfx_{i}} \le \sqrt{p} + \sqrt{2 \ln 5 n/\beta}$ with probability at least $1-\beta/(5n)$.
    Let $c_1 = (1 + \sqrt{2/p}\ln 5n/\beta)$, so we have $\norm{\bfx_i}\le c_1\sqrt{p}$; this is at most $3\sqrt{\ln 5n/\beta}$ and, furthermore, no greater than $\multiplier$.
    A union bound over all $n$ covariates means this condition holds with probability at least $1-\beta/5$.
    
    \textbf{Condition~\ref{condition:response_noise_magnitude}}
    Since $\xi_i \sim \calN(0,\sigma^2)$, we apply Claim~\ref{claim:concentration_univariate_gaussian}: with probability at least $1-\beta/5$, for all $i\in [n]$ we have $\abs{\xi_i}\le \sigma \sqrt{2\ln 10n/\beta}$.
    
    \textbf{Condition~\ref{condition:true_param_direction}}
    Fix $i$ and $t$. 
    Observe that the distribution of $(\I-\eta \cov)^t\bfx_i$ is spherically symmetric and independent of $\theta^*$.
    (Lemma~\ref{lemma:spherically_symmetric}, below, contains a rigorous proof of this statement.)
    Thus we can write it $(\I-\eta \cov)^t \bfx_i = \lambda_{i,t} \bfu_{i,t}$ for $\lambda_{i,t}\in \R$ and $\bfu_{i,t}\in \mathbb{S}^{p-1}$.
    By Lemma~\ref{lem:sub-exp_inner_uniform_dist}, then, with probability at least $1- \beta/(5nT)$ we have 
    \begin{align}
        \abs{\bfx_i^\dagger (\I - \eta \cov)^t \theta^*} &= \lambda_{i,t}\cdot \abs{\bfu_{i,t}^\dagger \theta^*} \\
            &\le \lambda_{i,t} \cdot \frac{4 \ln 5nT/\beta}{\sqrt{p} - 1}.
    \end{align}
    Conditions \ref{condition:spectrum} and \ref{condition:covariate_norm} imply $\lambda_{i,t} = \norm{(\I-\eta)^t\bfx_i}\le \multiplier \sqrt{p}$, so we arrive at $\abs{\bfx_i^\dagger (\I-\eta \cov)^t \theta^*} \le \frac{4 \multiplier \ln 5nT/\beta}{1 -  p^{-1/2}}$, which is at most $16 \multiplier \ln 5nT/\beta$ for $p\ge 2$. 
    A union bound over all $n, T$ implies this holds with probability at least $1-\beta/5$.
    
    \textbf{Condition~\ref{condition:noise_covariate_independence}}
    We want to bound $\abs{\sum_j (y_j - \bfx_j^\dagger\theta^*) \cdot \bfx_i^\dagger A_t \bfx_j}$ for $A_t = (\I-(\I-\eta \cov)^t)\cov^{-1}$.
    Recall that $\xi_j = y_i - \bfx_j^\dagger\theta^*$ by definition.
    For any fixed value of the covariates, the variances sum: we have
    \begin{align}
        \sum_j \xi_j \cdot \bfx_i^\dagger A_t \bfx_j \sim \calN\bracket{0, \sigma^2 \sum_j (\bfx_i^\dagger A_t \bfx_j)^2}.
    \end{align}
    Developing the square, we have
    \begin{align}
        \sum_j (\bfx_i^\dagger A_t \bfx_j)^2 &= \sum_j \bfx_i^\dagger A_t \bfx_j\bfx_j^\dagger A_t^T \bfx_i \\
            &= n\cdot \bfx_i^\dagger A_t \bracket{\frac 1 n \sum_j \bfx_j\bfx_j^\dagger} A_t^T \bfx_i \\
            &= n \cdot \bfx_i^\dagger A_t \cov A_t^T \bfx_i.
    \end{align}
    Plugging in the definition of $A_t$, we cancel and apply Cauchy--Schwarz:
    \begin{align}
        \sum_j (\bfx_i^\dagger A_t \bfx_j)^2 &= n \cdot \bfx_i^\dagger (\I - (\I-\eta \cov)^t) \cov^{-1} \cov \cov^{-1} (\I - (\I-\eta \cov)^t) \bfx_i \\
            &= n \cdot \bfx_i^\dagger (\I - (\I-\eta \cov)^t) \cov^{-1} (\I - (\I-\eta \cov)^t) \bfx_i \\
            &\le n \cdot \norm{(\I - (\I-\eta \cov)^t) \cov^{-1} (\I - (\I-\eta \cov)^t)} \cdot \norm{\bfx_i}^2 \\
            &\le n \cdot \norm{\I - (\I-\eta \cov)^t)}^2 \cdot \norm{\cov^{-1}} \cdot \norm{\bfx_i}^2.
    \end{align}
    Our previous assumptions imply that $\norm{\I - (\I-\eta \cov)^t} \le \norm{\I} + \norm{\I-\eta \cov} \le 2$ and $\norm{\cov^{-1}} \le 2$, so we arrive at 
    \begin{align}
        \sum_j (\bfx_i^\dagger A_t \bfx_j)^2
            &\le 8 c_1^2 n p,
    \end{align}
    plugging in $\norm{\bfx_i}\le c_1 \sqrt{p}$ from Condition~\ref{condition:covariate_norm}.
    
    Thus, with probability at least $1-\beta/5$, for all $i$ and $t$ we have 
    \begin{align}
        \abs{\sum_j \xi_j \cdot \bfx_i^\dagger A_t \bfx_j}
            &\le \sqrt{\sigma^2 \sum_j (\bfx_i^\dagger A_t \bfx_j)^2}\cdot \sqrt{2\ln 5nT/\beta} \\
            &\le \sqrt{\sigma^2 8 c_1^2 n p}\cdot \sqrt{2\ln 5nT/\beta}.
    \end{align}
    Plugging in $c_1 \le 3 \sqrt{\ln 5n/\beta}$ concludes the proof.
\end{proof}

\begin{thm}[Restatement of Theorem~\ref{thm:main_accuracy_claim}, Main Accuracy Claim]
    \mainAccuracyClaim
\end{thm}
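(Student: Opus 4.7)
}

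The plan is to chain together the four lemmas of this section. First I would invoke Lemma~\ref{lem:assumption_holds} at failure probability $\beta/3$ to conclude that, with high probability over the sample, $(\bfX,\bfy)$ satisfies the \noclippingcondition with $\multiplier = 12\ln^{1.5}(5nT/\beta)$. Next I would verify that the prescribed hyperparameters $\gamma = c\sigma\sqrt{p}\log^3(nT/\beta)$, $\lambda^2 = 2T\gamma^2/(\rho n^2)$, and $\eta = 1/4$ satisfy both preconditions of Lemma~\ref{lem:no_clipping}. The threshold condition $\gamma \ge 4\multiplier^2 \sigma\sqrt{p}$ reduces to an inequality on log factors that closes for large enough absolute constant $c$, while the noise condition $\gamma/(\eta\lambda) \ge 64\multiplier^2 p\sqrt{\ln(2nT/\beta)}$ reduces (after substituting $\lambda$) to a sample-size inequality of the form $n \gtrsim p\sqrt{T/\rho}\cdot\log^{C}(nT/\beta)$; with $T = c\log(n\rho/p)$ this is implied by the hypothesis $n \ge c(p+\sqrt{p}\log^4(\rho/\beta))$ for $c$ large. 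Lemma~\ref{lem:no_clipping} then gives no clipping with probability $1-\beta/3$, and Lemma~\ref{lem:couple_DPGD_noclipping} transfers the analysis to the unclipped algorithm at total-variation cost $\beta/3$.

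Second, I would apply Lemma~\ref{lem:distribution_of_DPGD_no_clipping} with $\theta_0 = 0$ to write
\begin{align}
    \theta_T - \hat\theta = -(\I - \eta\cov)^T \hat\theta + \eta \bfz_T',
\end{align}
with $\bfz_T' \sim \calN(0,\lambda^2\bfA^{(T)})$, and bound the two terms separately. For the bias, Condition~\ref{condition:spectrum} gives $\norm{\I-\eta\cov}\le 7/8$, and Claim~\ref{claim:random_design} combined with $\norm{\theta^*}\le 1$ yields $\norm{\hat\theta}=O(1)$ with high probability, so $T = c\log(n\rho/p)$ drives $(7/8)^T\norm{\hat\theta}$ below the target bound for $c$ large enough. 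For the noise term I would bound $\norm{\bfA^{(T)}}$ by noting that $\I - \bfD = \eta\cov(2\I-\eta\cov)$ has spectrum bounded below by a positive constant (using $\eta=1/4$ and $\tfrac12\I\preceq\cov\preceq 2\I$), so $\norm{\bfA^{(T)}}=O(1)$ and $\tr(\bfA^{(T)})=O(p)$. Applying Claim~\ref{claim:concentration_of_gaussian_norm} to $\bfz_T'$ conditional on the data (the privacy noise being independent of $(\bfX,\bfy)$) yields $\norm{\eta\bfz_T'} = O(\lambda\sqrt{p})$ with probability at least $1-\beta/3$. Plugging in $\lambda = \sqrt{2T/\rho}\cdot\gamma/n$ and $\gamma = c\sigma\sqrt{p}\log^3(nT/\beta)$, and absorbing $\sqrt{T}\log^3(nT/\beta)$ into $\log^4(n\rho/(\beta p))$ via $T=O(\log(n\rho/p))$, produces the claimed bound.

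The principal obstacle is the book-keeping: $T$ appears inside $\multiplier$, inside $\gamma$, and inside the sample-size requirement of Lemma~\ref{lem:no_clipping}, so the preconditions are mildly self-referential, and several log factors must be chased carefully to land on the stated exponent $\log^4$ in the final bound. I would resolve this by substituting $T = c\log(n\rho/p)$ once at the outset, choosing the absolute constant $c$ large enough so that every inequality closes simultaneously, and taking a final union bound over the three $\beta/3$ failure events (data structure, no-clipping, Gaussian concentration).
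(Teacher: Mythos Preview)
Your proposal is correct and follows essentially the same route as the paper: invoke Lemma~\ref{lem:assumption_holds}, verify the hypotheses of Lemma~\ref{lem:no_clipping}, pass to the unclipped algorithm via Lemma~\ref{lem:couple_DPGD_noclipping}, and then use Lemma~\ref{lem:distribution_of_DPGD_no_clipping} to split $\theta_T-\hat\theta$ into a bias term controlled by $(7/8)^T$ and a Gaussian noise term controlled by $\norm{\bfA^{(T)}}=O(1)$. The only noteworthy difference is how you bound $\norm{\hat\theta}$ in the bias term: you appeal to Claim~\ref{claim:random_design} to get $\norm{\hat\theta}=O(1)$, whereas the paper instead uses the \noclippingcondition directly to obtain the cruder bound $\norm{\hat\theta}\le 1+2\multiplier^2\sigma\sqrt{p}$; both are absorbed by the exponential decay $(7/8)^T$, so this is cosmetic.
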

\begin{proof}
    By Lemma~\ref{lem:assumption_holds}, with probability at least $1-\beta/4$ we have that data set $(\bfX,\bfy)$ satisfies the \noclippingcondition with $\multiplier=12 \ln^{1.5}(20 nT/\beta)$.
    This uses the assumption that $n\ge c(p+\ln 1/\beta)$.
    
    Under this condition, Lemma~\ref{lem:no_clipping} says that Algorithm~\ref{alg:DPGD} does not clip with probability at least $1-\beta/4$.
    This uses the assumptions that
    \begin{align}
        \gamma &\ge 4\multiplier^2 \sigma \sqrt{p} 
            = 576 \ln^{3}(20 nT/\beta)\times \sigma \sqrt{p}\\ 
        \frac{\gamma}{\eta \lambda} 
            &\ge 64 \multiplier^2 p \sqrt{\ln(2nT/\beta)}.
    \end{align}
    The first is satisfied by construction.
    To see that the second is satisfied, plug in $\eta=\frac 1 4$, $\lambda=\frac{\sqrt{2T} \gamma}{\sqrt{\rho} n}$, and $T=c\log \frac{n\rho}{p}$ to turn this into a lower bound on $n$: omitting constants, it suffices to take
    \begin{align}
        n \gtrsim \sqrt{p} \log^{4}(n\rho/\beta p).
    \end{align}
    This is satisfied when $n\ge c \sqrt{p}\log^4(\rho/\beta)$ for some sufficiently large constant $c$.
    
    This implies, via Lemma~\ref{lem:couple_DPGD_noclipping}, that the total variation distance between the output of Algorithm~\ref{alg:DPGD} and the same algorithm without clipping (i.e., $\gamma=+\infty$) is at most $\beta/2$.
    Thus, if we prove an error bound for Algorithm~\ref{alg:DPGD} with $\gamma=\infty$ that holds with probability at least $1-\beta/2$, then the same guarantee holds for the clipped version of Algorithm~\ref{alg:DPGD} with probability at least $1-\beta$.
    
    Lemma~\ref{lem:distribution_of_DPGD_no_clipping} gives us the explicit distribution for the algorithm's final iterate $\theta_T$.
    We take the $\ell_2$ norm and apply the triangle inequality:
    \begin{align}
        \norm{\theta_T -\hat\theta}_2 \le \norm{(\I-\eta\Sigma)^T\hat\theta}_2 + \norm{\eta \cdot \bfz_T'}_2. \label{eq:acc_bias_variance}
    \end{align}
    Here $\bfz_T'$ is a noise term: 
    defining $\bfD = (\I-\eta\cov)^2$ as in Lemma~\ref{lem:distribution_of_DPGD_no_clipping}, we have $\noise_T' \sim \calN(0,\lambda^2 \bfA^{(T)})$ for 
    \begin{align}
        \bfA = (\I-\bfD)^{-1}(\I-\bfD^T).
    \end{align}
    The \noclippingcondition gives us upper and lower bounds on $\bfD$: in particular, we have $\norm{\bfA}\le 16$.
    Thus, with probability at least $1-\beta/4$, we have $\norm{\eta \cdot \noise_T'}\le \eta \lambda \sqrt{32 p \ln 4/\beta}$.
    
    The second term in Equation~\eqref{eq:acc_bias_variance} decays exponentially with $T$.
    By Cauchy--Schwarz, it is at most $\norm{(\I-\eta\Sigma)^T}\cdot \norm{\hat\theta} \le (7/8)^T \norm{\hat\theta}$, applying Condition~\ref{condition:main} to bound the operator norm.
    We bound the norm of $\hat\theta$ based on its distance to $\theta^*$:
    \begin{align}
        \norm{\hat\theta} = \norm{\hat\theta - \theta^* + \theta^*}
            &\le \norm{\theta^*} + \norm{\hat\theta - \theta^*} \\
            &= \norm{\theta^*} + \norm{\frac 1 n \sum_j \cov^{-1} \xi_j \bfx_j}.
    \end{align}
    The first norm is at most 1 by assumption; a rough bound suffices for the second:
    \begin{align}
        \norm{\frac 1 n \sum_j \cov^{-1} \xi_j \bfx_j}
            &\le \frac 1 n \sum_j \norm{\cov^{-1}} \cdot \abs{y_j-\bfx_i^\dagger \theta^*}\cdot \norm{\bfx_j} 
            \le 2 \multiplier^2 \sigma \sqrt{p},
    \end{align}
    applying our assumptions. 
    
    Plugging these bounds back into Equation~\eqref{eq:acc_bias_variance} and plugging in our expressions for $\eta$, $\lambda$, and $\gamma$, we have (omitting constants)
    \begin{align}
    	\norm{\theta_T - \hat\theta} 
    		&\lesssim \sigma \sqrt{p} \log^{3}\bracket{\frac{nT}{\beta}}\bracket{\frac 7 8}^T
    			+ \eta \lambda \sqrt{p \log 1/\beta} \\
    		&\lesssim \sigma \sqrt{p} \log^{3}\bracket{\frac{nT}{\beta}}\bracket{\frac 7 8}^T
    			+ \frac{\sigma p \sqrt{T}}{\sqrt{\rho} n}\cdot \log^{3.5}(nT/\beta).
    \end{align}
    The first term is dominated by the second when $T\ge c \log (n\rho/p)$. 
    Substituting in this value of $T$ finishes the proof.
\end{proof}

\subsection{Confidence Intervals (Proof of Theorem~\ref{thm:CI_coverage})}

\subsubsection{Confidence Intervals for Independent Draws}

A fundamental task in statistical inference is to produce confidence intervals for the mean given independent samples from a normal distribution with unknown mean and variance.
Claim~\ref{clm:CI_ideal_case} reproduces this basic fact.
After that, we observe that confidence intervals produced in this way are valid for samples from distributions that are close in TV distance.
\begin{claim}
\label{clm:CI_ideal_case}
Let $\{\theta_i\}_{i\in[m]}$ be $m$ samples drawn i.i.d. from $\calN(\mu,\Sigma)$, for any $\mu\in\R^p$ and $\Sigma\in\R^{p\times p}$.
For any $j\in[p]$, let $\bar\theta_j = \frac 1 m \sum_{i=1}^m (\theta_i)_j$ be the sample mean
and 
$\hat\sigma_j^2 = \frac{1}{m-1}\sum_{i=1}^m \bracket{ \bracket{\theta_i}_j - \bar\theta_j }^2$ the sample variance.
Then $(\bar\theta)_j \pm t_{\alpha/2,m-1}\cdot \frac{\hat\sigma_j}{\sqrt m}$,
is a $1-\alpha$ confidence interval.
Here $t_{\alpha,m-1}$ denotes the $\alpha$ percentile for the student's $t$ distribution with $m-1$ degrees of freedom.
\end{claim}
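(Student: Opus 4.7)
The plan is to reduce to the univariate case and invoke the classical construction of Student's $t$ confidence intervals. Since a confidence interval for a single coordinate $j$ only involves the $j$-th entries of the samples, I first project each $\theta_i$ onto coordinate $j$. The marginal distribution of $(\theta_i)_j$ is $\calN(\mu_j, \Sigma_{jj})$, and because the $\theta_i$ are i.i.d.\ across $i$, so are the $(\theta_i)_j$. This reduces the claim to the standard one-dimensional setting: given $m$ i.i.d.\ draws from a univariate Gaussian $\calN(\mu_j, \Sigma_{jj})$ with unknown mean and variance, produce a confidence interval for the mean.

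Next I would assemble the three classical distributional facts for the univariate Gaussian sample. First, $\sqrt{m}(\bar\theta_j - \mu_j)/\sqrt{\Sigma_{jj}} \sim \calN(0,1)$. Second, $(m-1)\hat\sigma_j^2/\Sigma_{jj} \sim \chi^2_{m-1}$; this follows from writing the centered sample in an orthonormal basis whose first direction is $(1,\ldots,1)/\sqrt{m}$ and observing that the remaining coordinates are i.i.d.\ $\calN(0,\Sigma_{jj})$. Third, $\bar\theta_j$ and $\hat\sigma_j^2$ are independent, which is the classical Gaussian-specific independence that follows from the same orthogonal decomposition (the standard ``Cochran'' or ``Basu'' argument).

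From here the construction is mechanical: form the pivotal quantity
\begin{align}
    T_j \;=\; \frac{\sqrt{m}\,(\bar\theta_j - \mu_j)}{\hat\sigma_j} \;=\; \frac{\sqrt{m}(\bar\theta_j - \mu_j)/\sqrt{\Sigma_{jj}}}{\sqrt{(m-1)\hat\sigma_j^2 / \Sigma_{jj}} \,/\, \sqrt{m-1}},
\end{align}
which is, by the three facts above, a standard normal divided by the square root of an independent $\chi^2_{m-1}/(m-1)$. This is the definition of the Student's $t$ distribution with $m-1$ degrees of freedom. By the definition of the quantile $t_{\alpha/2,m-1}$,
\begin{align}
    \Pr\!\left[\,\abs{T_j} \le t_{\alpha/2,m-1}\right] \;=\; 1-\alpha,
\end{align}
and rearranging the event $\abs{T_j}\le t_{\alpha/2,m-1}$ yields exactly the interval in the claim.

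There is no real obstacle here; this is a textbook argument. The only mildly delicate piece is the independence of $\bar\theta_j$ and $\hat\sigma_j^2$, which is specific to the Gaussian distribution and must be invoked (not merely uncorrelatedness) in order for the ratio to follow a genuine $t$ distribution. Note also that the covariance structure across coordinates of $\Sigma$ plays no role: the argument is entirely marginal in $j$, which is why a general (possibly non-diagonal) $\Sigma$ poses no difficulty.
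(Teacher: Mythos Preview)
Your argument is correct and is exactly the standard textbook derivation of the Student's $t$ confidence interval. The paper does not actually prove this claim; it introduces it as ``a basic fact'' about Gaussian inference and states it without proof, so your proposal simply fills in the classical argument the authors took for granted.
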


\begin{claim}\label{clm:CIs_close_in_tv}
    Suppose mechanism $\calM:\R^{m}\to \R^2$, when given samples $z_1,\ldots,z_m$ drawn i.i.d.\ from a distribution $\calN(\mu,\sigma^2)$, produces a $1-\alpha$ confidence interval for $\mu$.
    Let $q$ be a distribution over $\R^m$.
    If $(z_i',\ldots,z_m')\sim q$, then $\calM(z_1',\ldots,z_m')$ produces a $1-\alpha - \mathrm{TV}(q, \calN(\mu,\sigma^2)^{\otimes m})$ confidence interval for $\mu$.
\end{claim}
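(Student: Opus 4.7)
The plan is to treat ``coverage'' as a measurable event and exploit the basic characterization of total variation distance as the worst-case gap in event probabilities across the two distributions. Concretely, let $p = \calN(\mu,\sigma^2)^{\otimes m}$, let $I(\bfz) \subseteq \R$ denote the interval output by $\calM(\bfz)$, and define the coverage event
\begin{align}
    E = \{\bfz \in \R^m : \mu \in I(\bfz)\}.
\end{align}
The assumption that $\calM$ produces a $1-\alpha$ confidence interval when given i.i.d.\ Gaussian samples translates directly into $\Pr_{\bfz \sim p}[\bfz \in E] \ge 1-\alpha$. The only measurability point to check is that $E$ is a measurable subset of $\R^m$, which follows from $\calM$ being a (measurable) mechanism mapping $\R^m$ to intervals.

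The next step is to invoke the standard identity
\begin{align}
    \mathrm{TV}(q,p) = \sup_{F \text{ measurable}} \abs{\Pr_{\bfz \sim q}[\bfz \in F] - \Pr_{\bfz \sim p}[\bfz \in F]}.
\end{align}
Applied to $F = E$ this yields
\begin{align}
    \Pr_{\bfz' \sim q}[\bfz' \in E] \ge \Pr_{\bfz \sim p}[\bfz \in E] - \mathrm{TV}(q,p) \ge 1 - \alpha - \mathrm{TV}(q, p),
\end{align}
which is exactly the claim, since the left-hand side is the coverage probability of $\calM(\bfz_1', \ldots, \bfz_m')$.

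This really is a one-shot argument rather than a proof with a substantial obstacle; there is no ``hard part'' beyond articulating the right event and citing the variational definition of TV distance. The only mild care needed is noting that the bound holds for \emph{any} valid interval-producing mechanism, so it passes through unchanged when we later apply it (in the proof of Theorem~\ref{thm:CI_coverage}) to the $t$-interval of Claim~\ref{clm:CI_ideal_case} with $q$ being the joint distribution of independent runs or of well-separated checkpoints of DP-GD, whose TV distance to the i.i.d.\ Gaussian target is controlled by Lemmas~\ref{lem:couple_DPGD_noclipping} and~\ref{lem:distribution_of_DPGD_no_clipping}.
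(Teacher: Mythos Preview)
Your proposal is correct and is exactly the natural argument: define the coverage event and apply the variational characterization of total variation distance. The paper does not spell out a proof of this claim---it is stated as a basic fact---so your write-up is, if anything, more explicit than what appears there.
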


\subsubsection{Analyzing Independent Runs and Checkpoints}

Lemma~\ref{lem:distribution_of_DPGD_no_clipping} tells us that the $t$-th iterate of DP-GD without clipping has the distribution
\begin{align}
    \theta_t = \hat{\theta} + (\I-\eta\Sigma)^t(\theta_0-\hat\theta) + \bfz_t',
\end{align}
where $\bfz_t'\sim \calN(0, \eta^2\lambda^2 \bfA^{(t)})$ for 
$\bfA^{(t)}= (\I-\bfD)^{-1}(\I-\bfD^t)$ and $\bfD = (\I - \eta \Sigma)^2$.
Recall $\eta$ the step size and $\Sigma=\frac 1 n \bfX^\dagger \bfX$ the empirical covariance.
For appropriate $\bfD$, as $t$ grows this approaches the distribution $\calN(\hat\theta, \eta^2 \lambda^2 \bfA^{(\infty)})$, where $\bfA^{(\infty)}=(\I - \bfD)^{-1}$.

In this section, we give guarantees for two algorithms that constructing confidence intervals: independent runs and checkpoints. 
We show that each produces a sequence of $m$ vectors which, as a whole, is close in total variation distance to $m$ independent draws from $\calN(\hat\theta, \eta^2 \lambda^2 \bfA^{(\infty)})$.
We call this latter the ``ideal'' case, as it corresponds to Claim~\ref{clm:CI_ideal_case}.
The other two cases are ``independent runs'' and ``checkpoints.''
Thus, we consider three sets of random variables: 
$\theta_1^{(1)},\ldots,\theta_m^{(1)}$ are drawn i.i.d.\ from $\calN(\hat\theta, \eta^2 \lambda^2 \bfA^{(\infty)})$;
$\theta_1^{(2)},\ldots,\theta_m^{(2)}$, where $\theta_\ell^{(2)}$ is $t$-th iterate of an independent run of DP-GD without clipping; and
$\theta_1^{(3)},\ldots,\theta_m^{(3)}$, where $\theta_\ell^{(2)}$ is the $\ell t$-th iterate of a single run of DP-GD without clipping.
We interpret each of these as a draw from a multivariate Gaussian distribution in $mp$ dimensions: we define the concatenated vector $\theta^{(1)} = \sqbracket{\theta_1^{(1)} \bigl|  \cdots\bigl|\theta_m^{(1)}}$ and let $\mu^{(1)}\in \R^{mp}$ and $\Sigma^{(1)}\in \R^{mp\times mp}$ be its mean and covariance, respectively.
We define the analogous notation for the other two collections.
Our calculations will only require us to index into these objects ``blockwise,'' so we abuse notation and define, for $\ell,k\in[m]$,
\begin{align}
    \mu^{(1)}_\ell &= \E\sqbracket{\theta_\ell^{(1)} }\in\R^p  \quad\text{and}\quad
    \Sigma_{\ell,k}^{(1)} = \E\sqbracket{\bracket{\theta_\ell^{(1)}-\mu_\ell^{(1)}}\bracket{\theta_k^{(1)} -\mu_k^{(1)}}^\dagger}\in \R^{p\times p}.
\end{align}
We establish that the vectors from our algorithms are close in total variation to the ideal case by showing that the relevant means and covariances are close. 
We use the following standard fact.
\begin{claim}[See, e.g., \cite{diakonikolas2019robust}]\label{claim:gaussian_tv_distance}
    There exists a constant $K$ such that, for any $\beta\le \frac 1 2$, vectors $\mu_1,\mu_2\in\R^d$, and positive definite $\Sigma_1,\Sigma_2\in\R^{d\times d}$, if $\norm{\mu_1-\mu_2}_{\Sigma_1}\le \beta$ and $\norm{\Sigma_1^{-1/2}\Sigma_2\Sigma_1^{-1/2}-\I}_F\le\beta$ then $\mathrm{TV}(\calN(\mu_1,\Sigma_1),\calN(\mu_2,\Sigma_2)) \le K\beta$.    
\end{claim}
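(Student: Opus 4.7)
The plan is to bound the total variation distance via Pinsker's inequality, $\mathrm{TV}(P,Q) \le \sqrt{\mathrm{KL}(P \| Q)/2}$, applied to $P = \calN(\mu_1,\Sigma_1)$ and $Q = \calN(\mu_2,\Sigma_2)$. The KL divergence between two multivariate Gaussians has a well-known closed form
\begin{align}
2\,\mathrm{KL}(P\|Q) = \log\frac{\det \Sigma_2}{\det \Sigma_1} - d + \tr(\Sigma_2^{-1}\Sigma_1) + (\mu_2-\mu_1)^\dagger \Sigma_2^{-1}(\mu_2-\mu_1),
\end{align}
so it suffices to show that each term is $O(\beta^2)$ under our hypotheses. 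A square-root at the end then yields the $O(\beta)$ bound on TV distance, at which point we can absorb all constants into $K$.

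The natural next step is to ``whiten'' by $\Sigma_1^{1/2}$. Define $M = \Sigma_1^{-1/2}\Sigma_2 \Sigma_1^{-1/2}$ and $v = \Sigma_1^{-1/2}(\mu_2-\mu_1)$. Then, using the cyclic property of the trace and the identity $\Sigma_2^{-1} = \Sigma_1^{-1/2} M^{-1} \Sigma_1^{-1/2}$, the KL formula simplifies to
\begin{align}
2\,\mathrm{KL}(P\|Q) = \log \det M + \tr(M^{-1}) - d + v^\dagger M^{-1} v.
\end{align}
The hypotheses translate cleanly: $\|v\| = \|\mu_1-\mu_2\|_{\Sigma_1} \le \beta$, and $\|M - \I\|_F \le \beta$ means the eigenvalues $1+\lambda_i$ of $M$ satisfy $\sum \lambda_i^2 \le \beta^2$. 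Since $\beta \le 1/2$, each $|\lambda_i|\le 1/2$, so the eigenvalues of $M$ lie in $[1/2, 3/2]$; in particular $\|M^{-1}\| \le 2$.

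Now bound each piece. For the quadratic form, $v^\dagger M^{-1} v \le \|M^{-1}\|\cdot \|v\|^2 \le 2\beta^2$. For the log-determinant plus trace term, write $\log\det M + \tr(M^{-1}) - d = \sum_i \bigl[\log(1+\lambda_i) + (1+\lambda_i)^{-1} - 1\bigr]$, and observe that the scalar function $f(x) = \log(1+x) + (1+x)^{-1} - 1$ satisfies $f(0)=f'(0)=0$ with $|f(x)| \le x^2$ on $|x|\le 1/2$ by a second-order Taylor expansion with remainder. Therefore this sum is at most $\sum \lambda_i^2 \le \beta^2$. Putting the pieces together gives $2\mathrm{KL} \le 3\beta^2$, so $\mathrm{TV} \le \sqrt{3/4}\,\beta$, proving the claim with $K = \sqrt{3}/2$ (up to absolute constants from the Taylor remainder).

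The main obstacle, and the only place where the assumption $\beta \le 1/2$ is essential, is controlling the log-det and trace terms in the regime where $M$ is \emph{not} close to $\I$ in operator norm; the Frobenius bound alone does not rule out one eigenvalue from being as large as $1+\beta$, and we need the uniform eigenvalue bound to invert $M$ and to apply the Taylor estimate on $f$. Everything else is routine once the whitening reduction has been done.
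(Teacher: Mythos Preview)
The paper does not prove this claim; it is stated with a citation to \cite{diakonikolas2019robust} and used as a black box. Your argument via Pinsker's inequality and the closed-form Gaussian KL is correct and is essentially the standard route to this bound. One cosmetic point: the inequality $|f(x)|\le x^2$ does not quite hold at $x=-1/2$ (where $f(-1/2)=1-\log 2\approx 0.307>0.25$), but as you note the constant is immaterial---any bound $|f(x)|\le Cx^2$ on $|x|\le 1/2$ suffices, and this follows from the second-order Taylor remainder since $|f''(x)|$ is uniformly bounded on that interval.
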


\paragraph{Ideal Case}
Each vector is an independent draw from a fixed Gaussian, so for any $\ell$ and any $k\neq \ell$ we have
\begin{align}
    \mu_\ell^{(1)} &= \hat\theta, \quad
    \Sigma_{\ell,\ell}^{(1)} = \eta^2\lambda^2\bfA^{(\infty)}, \text{ and} \quad
    \Sigma_{\ell,k}^{(1)} = 0.
\end{align}

\paragraph{Independent Runs}
Here the vectors are also independent Gaussian random variables, but with different parameters.
For any $\ell$ and any $k\neq \ell$ we have
\begin{align}
    \mu_\ell^{(2)} = \hat\theta + (\I-\eta\Sigma)^t (\theta_0 - \hat\theta),\quad
    \Sigma_{\ell,\ell}^{(2)} &= \eta^2\lambda^2 \bfA^{(t)},\text{ and}  \quad
    \Sigma_{\ell,k}^{(2)} = 0.
\end{align}

\paragraph{Checkpoints}
Here the vectors are no longer independent, so we have additional work.
We have
\begin{align}
    \mu_\ell^{(3)} = \hat\theta + (\I-\eta \Sigma)^{\ell t} (\theta_0 - \hat\theta),\quad
    \Sigma_{\ell,\ell}^{(3)} &= \eta^2\lambda^2  \bfA^{(\ell t)}, \text{ and}  \quad
    \Sigma_{\ell,k}^{(3)} = \E\sqbracket{(\bfz_{\ell t}') (\bfz_{k t}')^\dagger}.
\end{align}
We now analyze this third term.
Recall from Lemma~\ref{lem:expanded_dist_no_clipping} that these vectors $\bfz_{\ell t}'$ stand in for sums that depend on all the noise vectors added so far:
\begin{align}
    \bfz_t' = \eta\sum_{i=1}^t (\I-\eta\Sigma)^{i-1} \bfz^{t-i} = \eta\sum_{i=0}^{t-1} (\I-\eta\Sigma)^{t-i-1} \bfz^{i},
\end{align}
where the second equation was re-indexed to simplify the next operation.
The random variables $\bfz_i$ are drawn i.i.d.\ from $\calN(0,\lambda^2 \I)$.
This allows us to understand the covariance in $\Sigma_{\ell,k}^{(3)}$, as some of the noise terms are duplicated and some are not.
For any pair of integers $\tau>t$, we have
\begin{align}
    \bfz_\tau' &= \eta \sum_{i=0}^{\tau-1} (\I-\eta \Sigma)^{\tau-i-1}\bfz^i \\
        &=\eta \sum_{i=0}^{t-1} (\I-\eta \Sigma)^{\tau-i-1}\bfz^i + \eta \sum_{i=t}^{\tau-1} (\I-\eta \Sigma)^{\tau-i-1}\bfz^i \\
        &=(\I-\eta\Sigma)^{\tau-t} \cdot \eta \sum_{i=0}^{t-1} (\I-\eta \Sigma)^{t-i-1}\bfz^i + \eta \sum_{i=t}^{\tau-1} (\I-\eta \Sigma)^{\tau-i-1}\bfz^i \\
        &= (\I-\eta\Sigma)^{\tau-t} \bfz_t' + \eta \sum_{i=t}^{\tau-1} (\I-\eta \Sigma)^{\tau-i-1}\bfz^i.
\end{align}
The second sum is independent of $\bfz_t'$, since it involves only later terms in the algorithm.
Applying this with $\tau \gets \ell t$ and $t \gets kt$ (assuming without loss of generality that $\ell>k$), we have
\begin{align}
    \Sigma_{\ell,k}^{(3)} = \E\sqbracket{(\bfz_{\ell t}') (\bfz_{kt}')^\dagger}
        &= (\I-\eta\Sigma)^{(\ell-k)t} \E\sqbracket{(\bfz_{k t}') (\bfz_{kt}')^\dagger} \\
        &=\eta^2\lambda^2 (\I-\eta\Sigma)^{(\ell-k)t} \bfA^{(kt)}.
\end{align}

\paragraph{Setup}
We derive some facts which will be useful for both of our case-specific analyses.
Since the data satisfies the \noclippingcondition, we have $\frac 1 2 \I \preceq  \Sigma \preceq 2\I$ and $\norm{\I-\eta \Sigma}\le \frac 7 8$.
These in turn establish operator norm bounds for $\bfD, \bfA^{(\infty)}, \I-\bfD$ (and their inverses) of at most $c$ for some absolute constant $c$, which yields a Frobenius norm bound of $c\sqrt{m}$.
We also have $\norm{\bfD^t}_F\le \sqrt{m}\cdot \bracket{\frac{7}{8}}^t$.

We pass from the rescaled norms needed in Claim~\ref{claim:gaussian_tv_distance} to Frobenius and $\ell_2$ norms:
\begin{align}
    \norm{(\Sigma^{(1)})^{-1/2}(\Sigma^{(2)})(\Sigma^{(1)})^{-1/2} - \I}_F
        &\le \frac{1}{\lambda_{\min}(\Sigma^{(1)})} \cdot \norm{\Sigma^{(2)}- \Sigma^{(1)}}_F, \quad\text{and} \label{eq:covariance_breakdown}\\
    \norm{\mu^{(1)}-\mu^{(2)}}_{\Sigma^{(1)}} &\le \frac{1}{\sqrt{\lambda_{\min}(\Sigma^{(1)})}}\cdot \norm{\mu^{(1)}-\mu^{(2)}}_2. \label{eq:mean_breakdown}
\end{align}
Since the minimum eigenvalue of a block-diagonal matrix is the minimum eigenvalue of the blocks and, in the ideal case, the blocks are diagonal, we have $\lambda_{\min}(\Sigma^{(1)}) = \lambda_{\min}(\eta^2\lambda^2 \bfA^{(\infty)}) \ge \eta^2\lambda^2 c$ for some constant $c$.

\paragraph{Analyze Independent Runs}
With the above tools in hand, it suffices to analyze $\norm{\Sigma^{(2)}- \Sigma^{(1)}}_F$.
Observe that $\Sigma^{(2)}$ is equal to $\Sigma^{(1)}$ plus a block-diagonal matrix where each block is $\eta^2 \lambda^2$ times $\bfA^{(\infty)}-\bfA^{(t)}=  \bfA^{(\infty)}\bfD^t$.
So we have
\begin{align}
    \norm{\Sigma^{(2)}- \Sigma^{(1)}}_F
        \le\eta^2\lambda^2 m\cdot \norm{\bfA^{(\infty)}\bfD^t}_F 
        &\le c \eta^2 \lambda^2 m^{2} \bracket{\frac 7 8}^t
\end{align}
for some constant $c$.
Cancelling the $\eta^2\lambda^2$ from $\lambda_{\min}$ in Equation~\eqref{eq:covariance_breakdown}, we have
    $\norm{(\Sigma^{(1)})^{-1/2}(\Sigma^{(2)})(\Sigma^{(1)})^{-1/2} - \I}_F \le \beta$
when $t\ge c \log m/\beta$ for some constant $c$.

The mean is similar: the $m$ blocks of the vector are identical, so we have
\begin{align}
    \norm{\mu^{(1)} - \mu^{(2)}}_2 \le \sqrt{m} \norm{(\I-\eta\Sigma)^t (\theta_0-\hat\theta)}_2.
\end{align}
The \noclippingcondition implies that $\norm{\theta_0-\hat\theta}_2\le 2c_0^2 \sigma\sqrt{p}$ (as in the proof of Theorem~\ref{thm:main_accuracy_claim}).
Thus, combining with Equation~\eqref{eq:mean_breakdown}, we have
    $\norm{\mu^{(1)} - \mu^{(2)}}_{\Sigma^{(1)}} \le \beta$
when $t \ge c \log \frac{\sigma m p}{\eta \lambda \beta}$ for some constant $c$.

\paragraph{Analyze Checkpoints}
We apply similar arguments.
The upper bound on $\norm{\mu^{(1)} - \mu^{(3)}}_{\Sigma^{(1)}}$ is identical to the previous case except that we apply $\norm{\bfD^{\ell t}}_F\le \norm{\bfD^{t}}_F$ for all $\ell\in [m]$.
The covariance analysis is similar, but $\Sigma^{(1)}-\Sigma^{(3)}$ is no longer block-diagonal (as the checkpoints are not independent). 
So the main additional work is to control the effect of these off-diagonal blocks.

We expand over the $m^2$ blocks, moving temporarily to the squared Frobenius norm:
\begin{align}
    \norm{\Sigma^{(3)} - \Sigma^{(1)}}_F^2
        &= \eta^4\lambda^4\bracket{\sum_{\ell=1}^m \norm{\bfA^{(\infty)} - \bfA^{(\ell t)}}_F^2 + 2\sum_{\ell > k} \norm{0 - (\I-\eta\Sigma)^{(\ell-k)t} \bfA^{(kt)}}_F^2} \\
        &\le \eta^4\lambda^4\bracket{\sum_{\ell=1}^m \norm{\bfA^{(\infty)}\bfD^t}_F^2 + 2\sum_{\ell > k} \norm{(\I-\eta\Sigma)^{(\ell-k)t}}_2^2 \cdot \norm{\bfA^{(kt)}}_F^2}.
\end{align}
As before, we combine with Equation~\eqref{eq:covariance_breakdown} to obtain an upper bound on the (unsquared) norm of $\poly{m}\cdot \bracket{\frac 7 8}^t$, which is less than $\beta$ when $t\ge c \log m/\beta$ for some constant $c$.

\end{document}